\documentclass[11pt,letterpaper]{article}
\usepackage[margin=1in,top=1.05in,bottom=1.05in,includefoot]{geometry}

\usepackage[T1]{fontenc}
\usepackage{lmodern}

\usepackage{amsmath,amsfonts,bm}

\def\eqref#1{equation~\ref{#1}}

\def\1{\bm{1}}

\DeclareMathAlphabet{\mathsfit}{\encodingdefault}{\sfdefault}{m}{sl}
\SetMathAlphabet{\mathsfit}{bold}{\encodingdefault}{\sfdefault}{bx}{n}

\usepackage{url}
\usepackage{booktabs}

\usepackage{graphicx}
\usepackage[authoryear,round]{natbib}
\usepackage{caption}
\DeclareCaptionFont{tabletext}{\fontfamily{ptm}\fontsize{10}{12}\selectfont}
\usepackage{amsmath,amssymb,amsthm}

\theoremstyle{plain}  
\newtheorem{theorem}{Theorem}

\newtheorem{proposition}[theorem]{Proposition}
\newtheorem{corollary}[theorem]{Corollary}

\usepackage{amsfonts}
\usepackage{multirow}
\usepackage{array}
\usepackage{amsmath}
\usepackage{float}

\title{Beyond Static Positional Encoding: Bayesian Time-aware Attention for Non-Stationary Long-term Time Series Forecasting}

\author{Xinjie Lan$^{1}$ \quad Yunbo Mei$^{1}$ \quad Leheng Zhou$^{2}$ \quad Hao Qi$^{3}$\\[0.5em]
{\small $^{1}$Rice University}\\
{\small $^{2}$University of Hong Kong}\\
{\small $^{3}$University of North Carolina at Charlotte}}
\date{}
\usepackage{hyperref}
\hypersetup{hidelinks, pdfauthor={Xinjie Lan, Yunbo Mei, Leheng Zhou, Hao Qi}, pdftitle={Beyond Static Positional Encoding: Bayesian Time-aware Attention for Non-Stationary Long-term Time Series Forecasting}}

\usepackage{longtable}
\begin{document}

\maketitle

\begin{abstract}
Transformers have demonstrated strong performance in long-term time series forecasting (LTSF), yet modeling non-stationary temporal dependencies remains challenging. In this paper, we interpret self-attention as a Bayesian posterior over attended positions, combining a content-based prior induced by scaled dot-product similarity with a temporal-evidence likelihood induced by positional encoding. This formulation provides a probabilistic interpretation of how positional encoding shapes the temporal dependencies captured by self-attention. Building on this perspective, we derive a population-level evidence lower bound (ELBO) that characterizes the role of positional encoding in forecasting weakly stationary time series. Our analysis further highlights a limitation of static positional encoding: when temporal dependence structures vary across input windows, a fixed temporal bias must accommodate heterogeneous dynamics and can therefore be suboptimal for non-stationary forecasting. Motivated by this analysis, we propose Bayesian Time-aware Attention (BTA), which employs a mixture-of-experts (MoE) router to adaptively combine temporal kernels and generate input-conditioned, head-specific positional biases. Extensive experiments on standard LTSF benchmarks demonstrate that BTA consistently improves forecasting performance across multiple Transformer backbones and achieves competitive overall performance with modest computational overhead and stable training.
\end{abstract}

\section{Introduction}

Long-term time series forecasting (LTSF) aims to predict future trajectories over extended horizons from historical observations \citep{chen2023long}.
Transformers have emerged as a prominent architecture for LTSF largely because self-attention \citep{vaswani2017attention} enables the modeling of long-range temporal dependencies \citep{wen2022transformers}.
However, real-world time series often exhibit non-stationarity \citep{kim2021reversible}, including evolving trends \citep{liu2025timebridge}, time-varying seasonality \citep{sun2026penguin}, and distribution shifts \citep{li2026apt}.
Such changes can alter which historical observations are informative for future prediction, making it challenging for Transformer-based models to capture temporal dependencies that vary across historical input windows.

Existing approaches to non-stationary forecasting broadly follow two strategies: normalization and architectural adaptation.
First, normalization methods, such as RevIN \citep{kim2021reversible}, standardize historical input windows to mitigate shifts in marginal statistics and restore the original scale and location to the predictions.
However, local statistical normalization may be insufficient to address distribution shifts \citep{li2026apt}; even after normalization, windows with similar marginal statistics can exhibit different temporal dependence structures \citep{liupulse}.
Second, architectural approaches introduce specialized mechanisms to capture evolving temporal patterns, as exemplified by Non-stationary Transformer \citep{liu2022non}, TimeBridge \citep{liu2025timebridge}, and DTAF \citep{lu2026towards}.
While these approaches improve temporal modeling, architecture-specific designs can introduce additional complexity and limit their applicability across different Transformer backbones.
These limitations motivate a broadly applicable mechanism that enables Transformers to adapt to changing temporal dependencies without extensive architectural changes.

Beyond these methodological advances, a fundamental question remains: how does a Transformer forecaster model changing temporal dependencies?
Without positional information, canonical self-attention is permutation-equivariant \citep{zeng2023transformers} and therefore cannot distinguish temporal positions from content alone.
Positional encoding (PE) addresses this limitation by introducing information about token positions or relative distances \citep{shaw2018self}, enabling self-attention to account for temporal order.
This motivates a more specific question: \emph{how does PE interact with self-attention to support forecasting when temporal dependencies vary across historical input windows?}
Answering this question can clarify the temporal modeling behavior of Transformers under non-stationarity and provide a principled foundation for designing adaptive positional mechanisms.

In this paper, we address this question from a Bayesian perspective.
We first interpret position-aware self-attention as a Bayesian posterior over attended positions: content-only scaled dot-product attention defines a prior over historical keys, while PE provides temporal-evidence likelihood that updates this prior according to temporal relationships.
Building on this interpretation, we derive a population-level evidence lower bound (ELBO) \citep{bishop2006pattern} that characterizes the effect of PE on forecasting weakly-stationary time series.
To extend the analysis to non-stationarity, we introduce a regime-conditioned
weakly-stationary formulation and show that a static PE, which induces shared
temporal evidence across input windows, can be strictly suboptimal when
heterogeneous regimes induce incompatible regime-specific optima.
This theoretical result motivates positional mechanisms that adapt their temporal evidence to each historical input window.

Motivated by our theoretical analysis, we propose \emph{Bayesian Time-aware Attention} (BTA), a plug-and-play module that learns input-conditioned, head-specific temporal biases over relative lags. BTA aggregates attention scores by lag, augments them with Fourier features, and uses a mixture-of-experts (MoE) \citep{cai2025survey} router to combine periodic and non-periodic temporal kernels. This enables different heads and input windows to capture distinct local, periodic, and long-range dependencies. Extensive experiments show that BTA generalizes across conventional Transformers \citep{zhou2021informer,nie2023time} and time-series foundation models \citep{kottapalli2025foundation}, achieving competitive performance on non-stationary LTSF with high computational efficiency.

Our main contributions are summarized as follows:
\begin{itemize}
    \item \textbf{Theory.} We interpret position-aware self-attention from a Bayesian perspective and derive a population-level ELBO that characterizes the role of PE in time-series forecasting, proving that static PE can be suboptimal under changing temporal dependencies.

    \item \textbf{Methodology.} We propose \emph{Bayesian Time-aware Attention} (BTA), a plug-and-play module that learns input-conditioned, head-specific temporal biases through an MoE router combining complementary temporal kernels.

    \item \textbf{Experiments.} Extensive experiments across conventional Transformers and time-series foundation models demonstrate that BTA consistently improves non-stationary LTSF while maintaining computational efficiency.
\end{itemize}

\section{Related Work}

\textbf{Non-stationary TSF methods} can be broadly organized into two directions: representation-level stationarization and model-level adaptation. Representation-level methods reduce distributional variation through statistical normalization \citep{kim2021reversible,fan2023dish,liu2023adaptive}, frequency-domain processing \citep{ye2024frequency,piao2024frednormer}, or learned transformations \citep{fan2025inflow,liupulse,li2026apt}. Model-level methods instead modify the forecasting mechanism to capture evolving distributions, changing dynamics, and heterogeneous temporal patterns \citep{du2021adarnn,duan2022combating,liu2024dynaconf,wang2023koopman,pham2023learning,zhang2023onenet,qiu2025duet,sun2025learning}. In Transformer-based forecasting, these methods adapt attention to distribution shifts while preserving non-stationary information that is predictive of future behavior. The Non-stationary Transformer framework injects learned de-stationary factors into attention \citep{liu2022non}, whereas GBT decomposes forecasting into two Transformer stages to reduce discrepancies between historical and future distributions \citep{shen2023gbt}. Distribution-routing methods such as DUET and TFPS assign heterogeneous temporal patterns to specialized extractors or experts \citep{qiu2025duet,sun2025learning}. Attention-oriented architectures such as TimeBridge and TimeStacker separately model stable, cointegrated, or multilevel temporal dependencies \citep{liu2025timebridge,liu2025timestacker}, while DTAF combines temporal expert filtering with frequency differencing \citep{lu2026towards}. BTA follows this model-level direction but serves as a plug-and-play attention module with an explicit probabilistic interpretation, allowing it to enhance existing Transformers without redesigning their forecasting backbones.

\textbf{Positional encoding} is commonly introduced to represent the order and temporal distance between tokens because self-attention is permutation-equivariant \citep{vaswani2017attention}. Representative relative positional encoding methods in NLP include RPE \citep{shaw2018self}, Transformer-XL \citep{dai2019transformerxl}, T5 relative bias \citep{raffel2020exploring}, ALiBi \citep{press2021train}, KERPLE \citep{chi2022kerple}, and RoPE \citep{su2024roformer}. In time-series modeling, eRPE and STTRE introduce relative embeddings tailored to temporal and spatio-temporal dependencies \citep{foumani2024improving,deihim2023sttre}, whereas Powerformer and Dozerformer impose power-law locality and structured local-periodic biases, respectively \citep{hegazy2025powerformer,zhang2024dozerformer}. These methods generally use predefined or globally learned positional structures shared across samples. Although PAREformer constructs adaptive positional representations by integrating positional encodings with seasonal--trend features \citep{tong2025pareformer}, BTA differs by directly constructing an input-conditioned relative attention bias as a mixture of probabilistic temporal kernels. Specifically, a mixture-of-experts router dynamically combines these kernels for each input series, enabling the positional bias to adapt to heterogeneous periodicities and temporal scales.

\section{Theory}

\subsection{Preliminaries and Notations}
\label{section:Preliminaries}

\paragraph{Weakly stationary time series (Gaussian process).}
Let $(\Omega,\mathcal{F},P)$ be a probability space and
$\{X_t\}_{t\in\mathbb{Z}}$ be a zero-mean weakly stationary Gaussian
process (GP),
$X(t)\sim\mathcal{GP}(0,k)$, with
$\mathbb{E}[X_t]=0$ and stationary covariance kernel
$k(t,t')=\Gamma(t-t')$, where $\Gamma(\tau)$ denotes the autocovariance
at lag $\tau = t - t'$. For forecasting at time $n$, define the historical and
future random vectors as
$\boldsymbol{X}^{-}
=[X_{n-T+1},\ldots,X_n]^\top\in\mathbb{R}^{T}$
and
$\boldsymbol{Y}
=[X_{n+1},\ldots,X_{n+L}]^\top\in\mathbb{R}^{L}$.
Let
$\boldsymbol{\Sigma}_{-}
=\operatorname{Cov}(\boldsymbol{X}^{-})$,
$\boldsymbol{\Sigma}_{+-}
=\operatorname{Cov}(\boldsymbol{Y},\boldsymbol{X}^{-})$, and
$\boldsymbol{\Sigma}_{+}
=\operatorname{Cov}(\boldsymbol{Y})$,
where $\boldsymbol{\Sigma}_{-}$ is assumed to be positive definite.
By stationarity, these covariance matrices are fully determined by
$\Gamma(\tau)$.

\textbf{Self-attention.} Based on the kernel interpretation
\citep{tsai2019transformer} of scaled dot-product
self-attention, the attention weight assigned
by the $i$th query
$\boldsymbol{q}_i\in\mathbb{R}^{d_k}$
to the $j$th key position is
\begin{equation}
\label{equ_attn_prob}
\alpha_{ij}
=
\operatorname{Softmax}_{j\in\mathcal{C}(i)}
\left(
\boldsymbol{q}_i^{\top}\boldsymbol{k}_j/\sqrt{d_k}
+
b_{ij}
\right),
\end{equation}
where $i\in\{1,\ldots,T\}$,
$\boldsymbol{k}_j\in\mathbb{R}^{d_k}$
denotes the $j$th key, and $b_{ij}$ denotes the PE-induced positional
score associated with the query--key pair $(i,j)$. Depending on the PE
mechanism, $b_{ij}$ may encode temporal order, relative lag, or other
predefined positional structure. For a fixed query position $i$, let
$\boldsymbol{b}_i=(b_{ij})_{j\in\mathcal{C}(i)}$, where
$\mathcal{C}(i)=\{1,\ldots,i-1\}$ denotes the set of accessible key
positions under strict causal attention; otherwise,
$\mathcal{C}(i)=\{1,\ldots,T\}$.
Without PE, i.e., $b_{ij}=0$, self-attention is permutation-equivariant
with respect to token order and therefore cannot distinguish temporal
positions from content alone
\citep{tsai2019transformer,kim2024self,irani2025positional}.

\paragraph{Simplified Transformer.}
Consider a point-wise Transformer consisting of a single self-attention
encoder followed by average pooling over the $T$ historical positions
and a linear forecasting decoder. Its output can be written as
\begin{equation}
\label{equ_simplified_transformer}
f_{\theta,\phi}
(\boldsymbol{x}^{-})
=
\boldsymbol{w}
\boldsymbol{a}_{\phi}^{\top}(\boldsymbol{x}^{-})\cdot
\boldsymbol{x}^{-}
=
\boldsymbol{M}_{\theta,\phi}(\boldsymbol{x}^{-})\cdot
\boldsymbol{x}^{-}
\in\mathbb{R}^{L},
\end{equation}
where $\boldsymbol{x}^{-}\in\mathbb{R}^{T}$ denotes the historical input,
$\boldsymbol{a}_{\phi}^{\top}(\boldsymbol{x}^{-})
\in\mathbb{R}^{1\times T}$ denotes the averaged attention weights induced
by PE method $\phi$, and
$
\boldsymbol{M}_{\theta,\phi}(\boldsymbol{x}^{-})
:=
\boldsymbol{w}
\boldsymbol{a}_{\phi}^{\top}(\boldsymbol{x}^{-})
\in\mathbb{R}^{L\times T}
$
denotes the effective forecasting operator, with
$\boldsymbol{w}\in\mathbb{R}^{L\times 1}$ denoting the effective linear
decoder that absorbs the embedding and value projections. Although simplified, this architecture retains the core components of a Transformer---content-based self-attention, positional encoding, representation aggregation, and linear prediction---while remaining sufficiently tractable for theoretical analysis. 
A detailed specification of the simplified Transformer is provided in Appendix~\ref{app:simplifed_transformer}.

\subsection{Bayesian Explanations for Transformer in non-stationary LTSF}

\begin{proposition}[PE provides latent temporal evidence]
\label{proposition_position_encoding}
For a fixed query position $i$, let
$S_i\in\mathcal{C}(i)$ be a categorical random variable representing the
selected key position. For each candidate position
$j\in\mathcal{C}(i)$, let $Z_{ij}$ denote a latent random variable
representing the temporal evidence associated with the query--key pair
$(i,j)$, and let $z_{ij}$ denote a realization of $Z_{ij}$.
Let
$
\boldsymbol{Z}_i
=
\{Z_{ij}\}_{j\in\mathcal{C}(i)}
$
and
$
\boldsymbol{z}_i
=
\{z_{ij}\}_{j\in\mathcal{C}(i)}.
$
Assuming that the candidate-specific contribution to
$p(\boldsymbol{Z}_i=\boldsymbol{z}_i\mid S_i=j)$ is characterized by
$p(Z_{ij}=z_{ij}\mid S_i=j)$ up to a multiplicative factor independent
of $j$, the attention distribution in Equation~\ref{equ_attn_prob} admits the Bayesian interpretation
$\alpha_{ij}=p(S_i=j\mid\boldsymbol{Z}_i=\boldsymbol{z}_i)$,
\begin{equation}
\label{equ_attn_bayes_factorization}
p(S_i=j\mid\boldsymbol{Z}_i=\boldsymbol{z}_i)
\propto
\underbrace{
p(S_i=j)
}_{\text{content-based prior}}\cdot
\underbrace{
p(Z_{ij}=z_{ij}\mid S_i=j)
}_{\text{temporal-evidence likelihood}},
\end{equation}
where the content-based prior
$
p(S_i=j)
=
\operatorname{Softmax}_{j\in\mathcal{C}(i)}
\left(
\boldsymbol{q}_i^{\top}\boldsymbol{k}_j/\sqrt{d_k}
\right)
$
is induced by scaled dot-product self-attention without PE, and PE can be
interpreted as specifying a candidate-specific temporal-evidence
likelihood such that
$
b_{ij}\overset{c}{=}
\log p(Z_{ij}=z_{ij}\mid S_i=j),
$
where $\overset{c}{=}$ denotes equality up to an additive term independent
of $j$. The proof is in
Appendix~\ref{app:prob_exp_attention_with_PE}.
\end{proposition}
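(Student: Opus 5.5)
The argument is essentially constructive: I will exhibit a probabilistic model in which the stated prior and likelihood hold, and then check via Bayes' rule that the resulting posterior is exactly the attention distribution in Equation~\ref{equ_attn_prob}. Fix the query position $i$. Define the prior on $S_i$ by $p(S_i=j)=\operatorname{Softmax}_{j\in\mathcal{C}(i)}(\boldsymbol{q}_i^\top\boldsymbol{k}_j/\sqrt{d_k})$. This is a valid categorical distribution on $\mathcal{C}(i)$, since its entries are positive and sum to one.

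Next I specify the likelihood of the latent evidence. The stated assumption says that, conditional on $S_i=j$, the joint density factors as
\begin{equation*}
p(\boldsymbol{Z}_i=\boldsymbol{z}_i\mid S_i=j)=h(\boldsymbol{z}_i)\,p(Z_{ij}=z_{ij}\mid S_i=j),
\end{equation*}
where $h$ does not depend on $j$. To show that such a model exists and is compatible with any bias vector $\boldsymbol{b}_i$, I will give a concrete construction. Let the $Z_{ij}$ be conditionally independent given $S_i$. If $S_i\neq j$, let $Z_{ij}$ have a reference density $g$. If $S_i=j$, let $Z_{ij}$ have a tilted density $g_j$, chosen so that the ratio at the observed value satisfies $g_j(z_{ij})/g(z_{ij})=\kappa\, e^{b_{ij}}$, with $\kappa$ a constant not depending on $j$. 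One explicit choice: take $g$ uniform on $[0,1]$, observe $z_{ij}=1/2$, and let $g_j$ be a density on $[0,1]$ whose value at $1/2$ is $\kappa e^{b_{ij}}$. Such densities exist for any finite $\kappa e^{b_{ij}}>0$. Under this model,
\begin{equation*}
p(\boldsymbol{z}_i\mid S_i=j)=\Big[\prod_{l}g(z_{il})\Big]\frac{g_j(z_{ij})}{g(z_{ij})}.
\end{equation*}
This has exactly the assumed form, with $h=\prod_l g(z_{il})$ independent of $j$. Moreover $\log p(Z_{ij}=z_{ij}\mid S_i=j)$ equals $b_{ij}$ up to the $j$-independent additive constant $\log\kappa+\log g(1/2)$, which is the claimed relation $b_{ij}\overset{c}{=}\log p(Z_{ij}=z_{ij}\mid S_i=j)$.

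With the model in place, I apply Bayes' rule:
\begin{equation*}
p(S_i=j\mid\boldsymbol{z}_i)=\frac{p(S_i=j)\,p(\boldsymbol{z}_i\mid S_i=j)}{\sum_{l\in\mathcal{C}(i)}p(S_i=l)\,p(\boldsymbol{z}_i\mid S_i=l)}.
\end{equation*}
The factor $h(\boldsymbol{z}_i)$ appears in every term and cancels, which gives the proportionality in Equation~\ref{equ_attn_bayes_factorization}. Substituting the softmax prior and $p(Z_{ij}=z_{ij}\mid S_i=j)=c\,e^{b_{ij}}$ with $c$ independent of $j$, the numerator is proportional to $\exp(\boldsymbol{q}_i^\top\boldsymbol{k}_j/\sqrt{d_k})\,e^{b_{ij}}$. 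The softmax normalizer of the prior and the constant $c$ are the same for every candidate, so they cancel between numerator and denominator. What remains is $\operatorname{Softmax}_j(\boldsymbol{q}_i^\top\boldsymbol{k}_j/\sqrt{d_k}+b_{ij})=\alpha_{ij}$.

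The algebra itself is routine; the one step that needs care is the construction in the second paragraph. It must guarantee that the likelihood is a genuine normalized density, that the evidence for non-selected candidates contributes only a $j$-independent factor, and that it works for arbitrary real $b_{ij}$, including values that do not correspond to a normalized density at the observed point without rescaling. The free constant $\kappa$ absorbs exactly this rescaling, and the ``up to an additive term independent of $j$'' in the statement accommodates it.
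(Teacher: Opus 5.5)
Your proposal is correct, and its core matches the paper's proof: Bayes' rule, cancellation of the $j$-independent factor (the paper's $C_i(\boldsymbol{z}_i)$) and of the softmax normalizer, then matching the posterior logits with $\boldsymbol{q}_i^\top\boldsymbol{k}_j/\sqrt{d_k}+b_{ij}$ up to a $j$-independent shift. Your explicit model goes beyond the paper, which simply takes the factorization and $b_{ij}\overset{c}{=}\log p(Z_{ij}=z_{ij}\mid S_i=j)$ as hypotheses, and it shows the interpretation is realizable for any real bias vector; but the factor multiplying $p(Z_{ij}=z_{ij}\mid S_i=j)=g_j(z_{ij})$ is really $\prod_{l\neq j}g(z_{il})$, not $\prod_l g(z_{il})$, and it is $j$-independent only because your uniform $g$ and common observed value $1/2$ make $g(z_{ij})$ the same for all $j$ (also, $\kappa$ is superfluous, since a density can take any positive value at a point).
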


Proposition~\ref{proposition_position_encoding} provides a
probabilistic interpretation of representative PE mechanisms. In
particular, different PE mechanisms can be explained as different probabilistic assumptions of the temporal-evidence likelihood. Detailed probabilistic interpretations are provided in
Appendix~\ref{app:prob_exp_PE}. Throughout the following analysis, we call a PE \emph{static} if its
induced positional score $b_{ij}^{\phi}$ is independent of the historical
realization $\boldsymbol{x}^{-}$; this definition refers to the induced
attention bias rather than whether the PE parameters themselves are fixed.
Consequently,
$q_\phi(\boldsymbol{Z}\mid\boldsymbol{X}^{-})=q_\phi(\boldsymbol{Z})$.

\begin{theorem}[Population-level ELBO of PE on weakly-stationary time series]
\label{theorem_population_elbo}
Let $\mathcal{D}=p(\boldsymbol{X}^{-},\boldsymbol{Y})$ be the
weakly-stationary Gaussian distribution in
Section~\ref{section:Preliminaries}. For fixed model parameters $\theta$,
under the frozen-attention Gaussian forecasting model
$
p_{\theta,\phi}
(\boldsymbol{Y}\mid\boldsymbol{X}^{-},\boldsymbol{Z}=\boldsymbol{z})
=
\mathcal{N}
(\boldsymbol{Y};
\boldsymbol{M}_{\phi,\boldsymbol{z}}\boldsymbol{X}^{-},
\sigma_\epsilon^2\boldsymbol{I}_L),
$
let $p(\boldsymbol{Z}\mid\boldsymbol{X}^{-})$ and
$q_\phi(\boldsymbol{Z}\mid\boldsymbol{X}^{-})$ denote the GP-derived
temporal-evidence prior and PE-induced variational posterior,
respectively. Then
$
\mathbb{E}_{\mathcal D}
[\log p_{\theta,\phi}(\boldsymbol{Y}\mid\boldsymbol{X}^{-})]
\geq
\mathcal{L}_{\mathrm{pop}}(\phi)
:=
\mathbb{E}_{\mathcal D}
\mathbb{E}_{q_\phi(\boldsymbol{Z}\mid\boldsymbol{X}^{-})}
[\log p_{\theta,\phi}
(\boldsymbol{Y}\mid\boldsymbol{X}^{-},\boldsymbol{Z})]
-
\mathbb{E}_{\boldsymbol{X}^{-}}
D_{\mathrm{KL}}
\left(
q_\phi(\boldsymbol{Z}\mid\boldsymbol{X}^{-})
\,\Vert\,
p(\boldsymbol{Z}\mid\boldsymbol{X}^{-})
\right).
$
For a static PE,
$q_\phi(\boldsymbol{Z}\mid\boldsymbol{X}^{-})=q_\phi(\boldsymbol{Z})$,
and the population-level ELBO reduces to
\begin{equation}
\label{equ_theorem_population_elbo}
\begin{aligned}
\mathcal{L}_{\mathrm{pop}}(\phi)
=
C_0
&+
\frac{1}{\sigma_\epsilon^2}
\mathbb{E}_{q_\phi}
\operatorname{tr}
\left(
\boldsymbol{M}_{\phi,\boldsymbol{Z}}
\boldsymbol{\Sigma}_{+-}^{\top}
\right)
-
\frac{1}{2\sigma_\epsilon^2}
\mathbb{E}_{q_\phi}
\operatorname{tr}
\left(
\boldsymbol{M}_{\phi,\boldsymbol{Z}}
\boldsymbol{\Sigma}_{-}
\boldsymbol{M}_{\phi,\boldsymbol{Z}}^\top
\right)
\\
&-
\mathbb{E}_{\boldsymbol{X}^{-}}
D_{\mathrm{KL}}
\left(
q_\phi(\boldsymbol{Z})
\,\Vert\,
p(\boldsymbol{Z}\mid\boldsymbol{X}^{-})
\right),
\end{aligned}
\end{equation}
where
$
C_0=
-\operatorname{tr}(\boldsymbol{\Sigma}_{+})/(2\sigma_\epsilon^2)
-
L\log(2\pi\sigma_\epsilon^2)/2
$ is independent of $\phi$.
The general input-dependent covariance form and detailed proof are provided in
Appendix~\ref{app:theorem_bayesian_transformer_tsf}.
\end{theorem}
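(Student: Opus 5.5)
The proof has two parts: a pointwise ELBO that is then averaged over $\mathcal{D}$, and an explicit evaluation of the expected Gaussian log-likelihood using the second moments of Section~\ref{section:Preliminaries}.

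\textbf{Pointwise bound.} Fix a realization $\boldsymbol{x}^{-}$. I take the marginal likelihood to be
\[
p_{\theta,\phi}(\boldsymbol{y}\mid\boldsymbol{x}^{-})=\int p_{\theta,\phi}(\boldsymbol{y}\mid\boldsymbol{x}^{-},\boldsymbol{z})\,p(\boldsymbol{z}\mid\boldsymbol{x}^{-})\,d\boldsymbol{z},
\]
with the GP-derived temporal-evidence prior as the mixing measure. Multiply and divide the integrand by $q_\phi(\boldsymbol{z}\mid\boldsymbol{x}^{-})$, which is the temporal-evidence distribution induced through Proposition~\ref{proposition_position_encoding} via $b_{ij}\overset{c}{=}\log p(Z_{ij}=z_{ij}\mid S_i=j)$. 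Jensen's inequality for the concave $\log$ then gives
\[
\log p_{\theta,\phi}(\boldsymbol{y}\mid\boldsymbol{x}^{-})\ge \mathbb{E}_{q_\phi}[\log p_{\theta,\phi}(\boldsymbol{y}\mid\boldsymbol{x}^{-},\boldsymbol{Z})]-D_{\mathrm{KL}}(q_\phi\Vert p(\cdot\mid\boldsymbol{x}^{-})).
\]
The gap is $D_{\mathrm{KL}}(q_\phi\Vert p(\boldsymbol{Z}\mid\boldsymbol{x}^{-},\boldsymbol{y}))\ge 0$, which I will note explicitly. Taking $\mathbb{E}_{\mathcal{D}}$ of both sides preserves the inequality. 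Since the KL term depends only on $\boldsymbol{X}^{-}$, its expectation collapses to $\mathbb{E}_{\boldsymbol{X}^{-}}$. This yields the first claim, $\mathbb{E}_{\mathcal D}[\log p]\ge\mathcal{L}_{\mathrm{pop}}(\phi)$.

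\textbf{Static case.} Set $q_\phi(\boldsymbol{Z}\mid\boldsymbol{X}^{-})=q_\phi(\boldsymbol{Z})$. Because $q_\phi$ no longer depends on the data, Fubini lets me swap the order of expectations and write the likelihood term as $\mathbb{E}_{q_\phi}\mathbb{E}_{\mathcal D}[\log\mathcal N(\boldsymbol{Y};\boldsymbol{M}_{\phi,\boldsymbol{Z}}\boldsymbol{X}^{-},\sigma_\epsilon^2\boldsymbol{I}_L)]$. For fixed $\boldsymbol{z}$ the Gaussian log-density is
\[
-\tfrac{L}{2}\log(2\pi\sigma_\epsilon^2)-\tfrac{1}{2\sigma_\epsilon^2}\|\boldsymbol{Y}-\boldsymbol{M}\boldsymbol{X}^{-}\|^2 .
\]
I expand the square as $\|\boldsymbol{Y}\|^2-2\boldsymbol{Y}^\top\boldsymbol{M}\boldsymbol{X}^{-}+\boldsymbol{X}^{-\top}\boldsymbol{M}^\top\boldsymbol{M}\boldsymbol{X}^{-}$ and use the zero-mean property with the trace identity $\mathbb{E}[\boldsymbol{u}^\top A\boldsymbol{v}]=\operatorname{tr}(A\,\mathbb{E}[\boldsymbol{v}\boldsymbol{u}^\top])$. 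This gives three terms:
\[
\mathbb{E}\|\boldsymbol{Y}\|^2=\operatorname{tr}\boldsymbol{\Sigma}_{+},\qquad
\mathbb{E}[\boldsymbol{Y}^\top\boldsymbol{M}\boldsymbol{X}^{-}]=\operatorname{tr}(\boldsymbol{M}\boldsymbol{\Sigma}_{+-}^\top),\qquad
\mathbb{E}[\boldsymbol{X}^{-\top}\boldsymbol{M}^\top\boldsymbol{M}\boldsymbol{X}^{-}]=\operatorname{tr}(\boldsymbol{M}\boldsymbol{\Sigma}_{-}\boldsymbol{M}^\top).
\]
Collecting them, the $\phi$-independent pieces $-\operatorname{tr}\boldsymbol{\Sigma}_{+}/(2\sigma_\epsilon^2)$ and $-\tfrac{L}{2}\log(2\pi\sigma_\epsilon^2)$ form $C_0$, and the cross and quadratic terms appear with coefficients $1/\sigma_\epsilon^2$ and $-1/(2\sigma_\epsilon^2)$. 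Taking $\mathbb{E}_{q_\phi}$ outside, with the KL term left as is, gives \eqref{equ_theorem_population_elbo}. In the general input-dependent case the same computation goes through with conditional second moments under $q_\phi(\cdot\mid\boldsymbol{X}^{-})$, which is the form deferred to the appendix.

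\textbf{Main obstacle.} The delicate point is the Fubini interchange in the static case. It is legitimate only because $\boldsymbol{M}_{\phi,\boldsymbol{Z}}$ is frozen: once $\boldsymbol{z}$ is given, $\boldsymbol{M}$ does not depend on $\boldsymbol{X}^{-}$. Under this frozen-attention model $\boldsymbol{M}$ depends on the data only through $\boldsymbol{Z}$, and for a static PE $\boldsymbol{Z}$ is independent of $\boldsymbol{X}^{-}$, so the second moments factor cleanly. I will state this assumption explicitly. I will also require $\mathbb{E}_{q_\phi}\|\boldsymbol{M}_{\phi,\boldsymbol{Z}}\|_F^2<\infty$, so that the Gaussian moments are finite and the interchange is valid.
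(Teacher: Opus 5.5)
Your proposal is correct and follows essentially the same route as the paper. Both arguments use a sample-level Jensen bound with the GP-derived prior as the mixing measure, average it over $\mathcal{D}$, and then expand the Gaussian squared error into the $\operatorname{tr}(\boldsymbol{\Sigma}_{+})$, cross, and quadratic trace terms. The only difference is notational: the paper first writes the general case through the $q_\phi$-weighted moments $\boldsymbol{\Omega}_{-,\phi}(\boldsymbol{z})$ and $\boldsymbol{\Omega}_{+-,\phi}(\boldsymbol{z})$, which for static PE factor as $q_\phi(\boldsymbol{z})\boldsymbol{\Sigma}_{-}$ and $q_\phi(\boldsymbol{z})\boldsymbol{\Sigma}_{+-}$. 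That factorization is exactly your Fubini interchange.
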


General non-stationary time series are difficult to characterize
theoretically because their temporal dependence structures may vary over
time. To enable tractable analysis, we model non-stationarity through a
regime-conditioned weakly-stationary process, where each forecasting
window is weakly stationary conditional on its regime. This allows
Theorem~\ref{theorem_population_elbo} to be applied within each regime,
providing a tractable characterization of the effect of PE under
non-stationarity.

\begin{corollary}[Potential suboptimality of static PE under non-stationarity]
\label{corollary_regime_population_elbo}
Let $R\in\{1,\ldots,J\}$ denote the regime of a forecasting window, with
$\pi_r=p(R=r)>0$ and $\sum_{r=1}^{J}\pi_r=1$. Conditional on $R=r$,
assume
$
\mathcal{D}_r
=
p(\boldsymbol{X}^{-},\boldsymbol{Y}\mid R=r)
$
is a weakly-stationary Gaussian distribution with regime-specific
autocovariance function $\Gamma_r(\tau)$ and corresponding
temporal-evidence prior
$p_r(\boldsymbol{Z}\mid\boldsymbol{X}^{-})$.
By Theorem~\ref{theorem_population_elbo}, the population-level ELBO of PE
method $\phi$ under the regime-conditioned process is
$
\mathcal{L}_{\mathrm{reg}}(\phi)
=
\sum_{r=1}^{J}
\pi_r\mathcal{L}_r(\phi),
$
where $\mathcal{L}_r(\phi)$ denotes the population ELBO under regime $r$.
For a static PE, the PE-induced posterior
$q_\phi(\boldsymbol{Z})$ is shared across regimes, whereas the
regime-specific prior
$p_r(\boldsymbol{Z}\mid\boldsymbol{X}^{-})$
and covariance structure may vary with $\Gamma_r(\tau)$.
Let $\Phi$ denote the admissible PE class and
$\Phi_{\mathrm{static}}\subseteq\Phi$ its static subclass. Therefore,
\begin{equation}
\label{equ_static_pe_upper_bound_main}
\sup_{\phi\in\Phi_{\mathrm{static}}}
\mathcal{L}_{\mathrm{reg}}(\phi)
\leq
\sum_{r=1}^{J}
\pi_r
\sup_{\phi_r\in\Phi}
\mathcal{L}_r(\phi_r).
\end{equation}
The detailed
derivation is provided in
Appendix~\ref{app:corollary_regime_population_elbo}.
\end{corollary}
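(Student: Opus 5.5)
The argument has two parts. First, derive the decomposition $\mathcal{L}_{\mathrm{reg}}(\phi)=\sum_r\pi_r\mathcal{L}_r(\phi)$. Second, obtain the inequality~\ref{equ_static_pe_upper_bound_main} from two elementary facts: a supremum of a sum is at most the sum of suprema, and enlarging the feasible set can only increase a supremum.

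\textbf{Step 1 (regime decomposition).} Let $\mathcal{D}_{\mathrm{reg}}=\sum_r\pi_r\mathcal{D}_r$ denote the joint law of $(\boldsymbol{X}^-,\boldsymbol{Y})$ after marginalising $R$. The temporal-evidence prior is taken regime-conditional, $p_r(\boldsymbol{Z}\mid\boldsymbol{X}^-)$.

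By the tower property, $\mathbb{E}_{\mathcal{D}_{\mathrm{reg}}}[g]=\sum_r\pi_r\mathbb{E}_{\mathcal{D}_r}[g]$ for any integrable $g$. Apply this to the reconstruction term $\mathbb{E}_{q_\phi}[\log p_{\theta,\phi}(\boldsymbol{Y}\mid\boldsymbol{X}^-,\boldsymbol{Z})]$. Apply it also to the KL term, evaluated against the prior of the active regime. The regime ELBO then splits as $\sum_r\pi_r\mathcal{L}_r(\phi)$.

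Here $\mathcal{L}_r(\phi)$ is exactly the population ELBO of Theorem~\ref{theorem_population_elbo} with $\mathcal{D}$ replaced by $\mathcal{D}_r$. Since each $\mathcal{D}_r$ is a weakly stationary Gaussian with covariances built from $\Gamma_r$, the theorem applies regime by regime. For static $\phi$ it gives the explicit form~\ref{equ_theorem_population_elbo}, with $\boldsymbol{\Sigma}_{-},\boldsymbol{\Sigma}_{+-},\boldsymbol{\Sigma}_{+}$ replaced by their regime-$r$ versions and the same $q_\phi(\boldsymbol{Z})$ in every regime. Each $\mathcal{L}_r(\phi)$ also lower-bounds $\mathbb{E}_{\mathcal{D}_r}\log p_{\theta,\phi}$, so $\mathcal{L}_{\mathrm{reg}}$ is a valid ELBO for the mixture.

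\textbf{Step 2 (the inequality).} Fix any $\phi\in\Phi_{\mathrm{static}}$. Since $\Phi_{\mathrm{static}}\subseteq\Phi$, we have $\mathcal{L}_r(\phi)\le\sup_{\phi_r\in\Phi}\mathcal{L}_r(\phi_r)$ for each $r$. Multiplying by $\pi_r>0$ and summing gives $\mathcal{L}_{\mathrm{reg}}(\phi)\le\sum_r\pi_r\sup_{\phi_r\in\Phi}\mathcal{L}_r(\phi_r)$. Taking the supremum over static $\phi$ on the left yields~\ref{equ_static_pe_upper_bound_main}. If some right-hand supremum equals $+\infty$ the bound is trivial; otherwise all quantities are finite because $\sigma_\epsilon^2>0$ and $\boldsymbol{\Sigma}_{-}$ is positive definite.

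\textbf{Main obstacle.} The inequality itself is immediate. The real work is justifying that the regime ELBO decomposes additively with regime-specific priors while $q_\phi$ stays shared. This needs the KL term to be defined against $p_r$ inside each regime rather than against the mixture posterior $p(\boldsymbol{Z}\mid\boldsymbol{X}^-)=\sum_r p(R=r\mid\boldsymbol{X}^-)\,p_r(\boldsymbol{Z}\mid\boldsymbol{X}^-)$. I would state this as the modelling convention, namely that the model conditions on $R$ through the prior. I would also note that with the mixture prior one only gets an inequality, via convexity of KL in its second argument.

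To explain when the inequality is strict, I would add a remark; it is not needed for the corollary. For static $\phi$, the quadratic part of~\ref{equ_theorem_population_elbo} in $\boldsymbol{M}_{\phi,\boldsymbol{Z}}$ is maximised at $\boldsymbol{M}^\star_r=\boldsymbol{\Sigma}_{+-}^{(r)}(\boldsymbol{\Sigma}_{-}^{(r)})^{-1}$. Suppose the per-regime suprema are attained at distinct $q_{\phi_r}$, and any single shared $q_\phi$ loses a positive amount in at least one regime. Then equality fails, which is the sense of ``potential suboptimality''.
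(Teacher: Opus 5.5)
Your proof is correct and follows the paper's argument. You decompose $\mathcal{L}_{\mathrm{reg}}(\phi)=\sum_r\pi_r\mathcal{L}_r(\phi)$ by the law of total expectation using the regime-specific priors $p_r$, bound each $\mathcal{L}_r(\phi)$ by $\sup_{\phi_r\in\Phi}\mathcal{L}_r(\phi_r)$ via $\Phi_{\mathrm{static}}\subseteq\Phi$, weight by $\pi_r>0$, sum, and take the supremum over static $\phi$. Your remark that the KL must be taken against $p_r$ rather than the mixture prior is a useful clarification that the paper leaves implicit.

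One correction to your optional strictness remark. The paper requires a uniform gap, $\inf_{\phi\in\Phi_{\mathrm{static}}}\sum_r\pi_r[\mathcal{L}_r^\star-\mathcal{L}_r(\phi)]>0$. A positive loss for each individual static PE does not by itself rule out a sequence of static PEs approaching all regime optima at once.
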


The right-hand side of Equation~\ref{equ_static_pe_upper_bound_main} represents a
regime-specific oracle that may select a different PE for each regime.
Thus, Corollary~\ref{corollary_regime_population_elbo} identifies a potential limitation
of static PE: when heterogeneous regimes induce incompatible regime-specific
optima, a single shared positional mechanism can be strictly suboptimal
relative to regime-specific solutions. We illustrate Corollary ~\ref{corollary_regime_population_elbo} with a binary-regime forecasting example in Section~\ref{app:theorem_demonstration_synthetic}, where windows alternate between smooth and periodic GP dependence and BTA achieves a higher estimated ELBO than the evaluated PE baselines. It motivates input-adaptive PE mechanisms that can vary their temporal evidence across forecasting windows.

\begin{figure}[!htbp]
  \centering
  \includegraphics[width=0.9\textwidth]{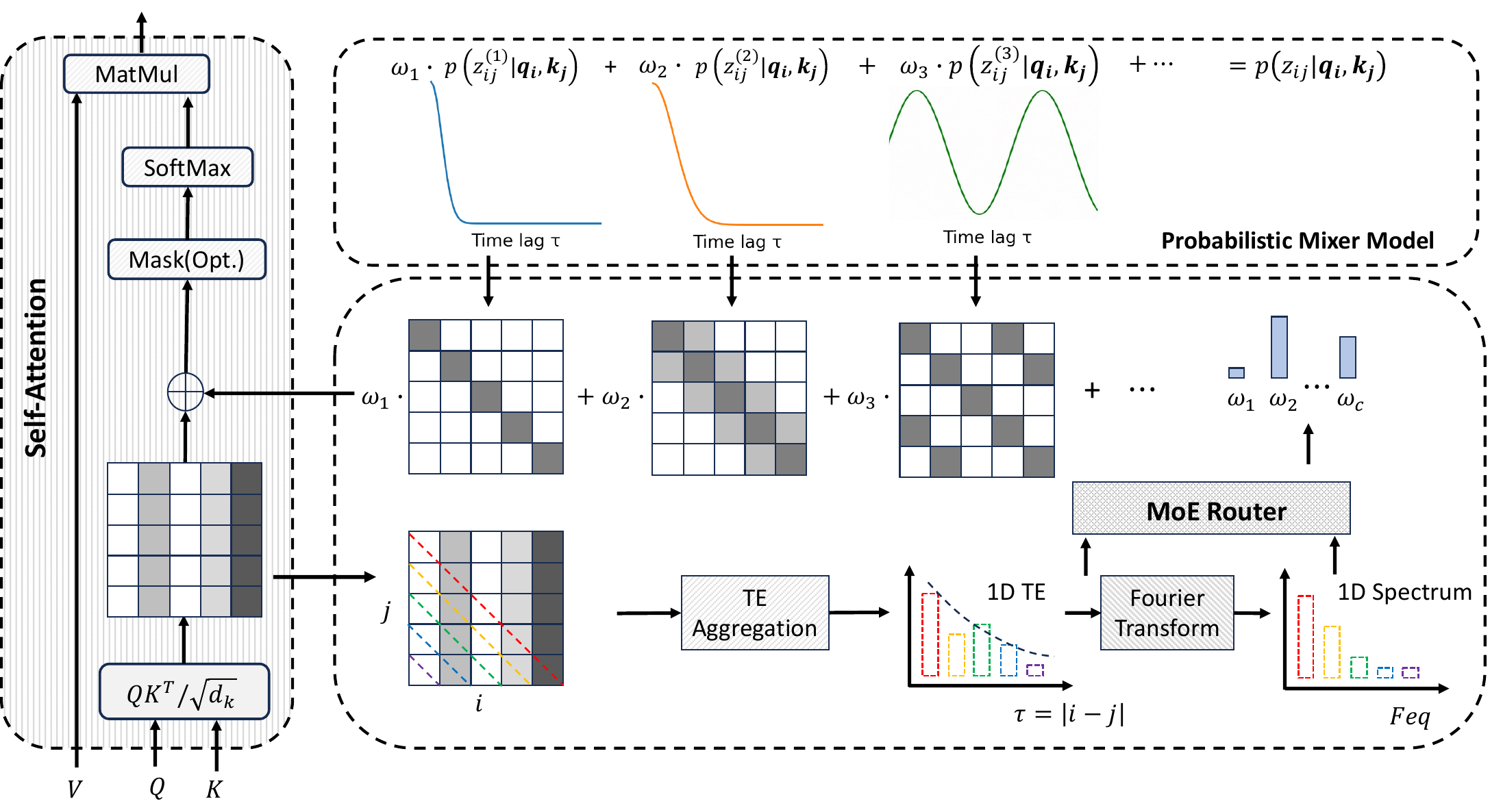}
  \caption{BTA consists of two components: a probabilistic mixer and a MoE router. The probabilistic mixer defines an over-complete set of candidate temporal patterns using time-aware kernels
$\psi^{(c)}\!\left(\tau^{(c)}_{ij};\boldsymbol{\theta}^{(c)}\right)$.
The MoE router learns the kernel weights $\{\omega_c\}_{c=1}^M$ with an MLP that processes the 1D temporal-evidence (TE) profile, obtained by aggregating the original 2D attention scores over relative lags, together with its Fourier spectrum. This plug-and-play design can be seamlessly integrated into standard multi-head self-attention.
}
  
  \label{Img_BTA}
\end{figure}

\section{Methodology}
\label{section_methodology}

Motivated by Corollary~\ref{corollary_regime_population_elbo}, we seek an adaptive PE mechanism that adjusts its temporal evidence to the dependence structure of each input window. Since different windows may exhibit distinct temporal patterns, a mixture-of-experts (MoE) formulation provides a natural mechanism for representing multiple candidate structures and adaptively weighting them without requiring explicit regime labels. Based on this idea, we propose \emph{Bayesian Time-aware Attention} (BTA), consisting of two components: (A) a probabilistic mixture of time-aware kernels that provides an over-complete set of candidate temporal patterns, and (B) an MoE router that produces input- and head-specific weights for combining these kernels. The resulting temporal-evidence likelihood is incorporated into the attention logits as an adaptive positional bias, as illustrated in Fig.~\ref{Img_BTA}.

\subsection{Probabilistic Mixture of Temporal Kernels}

For notational simplicity, we omit the input and attention-head indices $(n,h)$ throughout this subsection. Following Proposition~\ref{proposition_position_encoding}, we model the temporal-evidence likelihood as a mixture of $M$ component likelihoods:
\begin{equation}
\label{equ_temporal_mixture}
p_{\phi}(Z_{ij}=z_{ij}\mid S_i=j)
=
\sum_{c=1}^{M}
\omega_c\,
p_c(Z_{ij}=z_{ij}\mid S_i=j),
\end{equation}
where $\omega_c\geq0$, $\sum_{c=1}^{M}\omega_c=1$, and each component represents a distinct temporal-dependence pattern, such as locality, periodicity, or long-range dependence. The mixture weights are subsequently generated by an MoE router for each input and attention head.

We parameterize the $c$th component through a time-aware log-kernel
$\psi^{(c)}(\tau_{ij};\boldsymbol{\theta}^{(c)})$ such that
$
\log p_c(Z_{ij}=z_{ij}\mid S_i=j)
\overset{c}{=}
\psi^{(c)}
\left(
\tau_{ij};\boldsymbol{\theta}^{(c)}
\right),
$
where $\tau_{ij}=|i-j|$ and $\overset{c}{=}$ denotes equality up to an additive normalization term independent of $j$. Equivalently,
$p_c(Z_{ij}=z_{ij}\mid S_i=j)
\propto
\exp[\psi^{(c)}(\tau_{ij};\boldsymbol{\theta}^{(c)})]$.
Any component-specific normalization constant can be absorbed into the mixture weights without affecting the attention softmax. 
Therefore, the PE-induced temporal bias is
$
b_{ij}
\overset{c}{=}
\log
\left[
\sum_{c=1}^{M}
\omega_c
\exp\!\left(
\psi^{(c)}
\left(
\tau_{ij};\boldsymbol{\theta}^{(c)}
\right)
\right)
\right]
$.
Different choices of $\psi^{(c)}$ encode complementary assumptions about how temporal relevance varies with relative lag. The resulting kernel bank provides an over-complete set of candidate temporal patterns rather than imposing a single positional structure. 
The MoE router then determines their input- and head-specific contributions, allowing BTA to exploit useful temporal priors while adapting the positional evidence across forecasting windows.

\subsection{MoE Router}

To adapt the temporal-kernel mixture to each input, we design an MoE router that extracts lag-dependent structure from the content-based attention scores, as illustrated in Fig.~\ref{Img_BTA}. For the $h$th attention head of the $n$th input, the content-based attention score is
$
s_{ij}^{(n,h)}
=
\left[
(\boldsymbol{q}_i^{(n,h)})^{\top}
\boldsymbol{k}_j^{(n,h)}\right]
/
\sqrt{d_k}.
$
We aggregate scores associated with the same relative lag to construct a one-dimensional lag-wise routing profile:
$
e_{\tau}^{(n,h)}
=
\frac{1}{|\mathcal{I}_{\tau}|}
\sum_{(i,j)\in\mathcal{I}_{\tau}}
s_{ij}^{(n,h)}, 
\mathcal{I}_{\tau}
=
\{(i,j):|i-j|=\tau\}.
$
This aggregation summarizes how strongly the content representations associate query--key pairs at different temporal lags.
To capture both lag-domain structure and periodicity, we concatenate the routing profile with the magnitude of its Fourier spectrum and feed them into an MLP, namely
$
\boldsymbol{\omega}^{(n,h)}
=
\operatorname{Softmax}
\left[
\operatorname{MLP}
\left(
\left[
\boldsymbol{e}^{(n,h)};
\left|
\mathcal{F}
(\boldsymbol{e}^{(n,h)})
\right|
\right]
\right)
\right],
$
Before the softmax, all but the Top-$k$ router logits are masked to $-\infty$, so unselected kernels receive zero weight, as detailed in Appendix~\ref{app:bta_implementation}. Here $\mathcal{F}(\cdot)$ denotes the discrete Fourier transform and
$\boldsymbol{\omega}^{(n,h)}
=
[\omega_1^{(n,h)},\ldots,\omega_M^{(n,h)}]$
denote the input- and head-specific mixture weights.
Finally, BTA constructs the adaptive positional bias from the temporal-kernel mixture in Equation~\ref{equ_temporal_mixture}:
\begin{equation}
\label{equ_bta_bias}
b_{ij}^{(n,h)}
\overset{c}{=}
\log
\left[
\sum_{c=1}^{M}
\omega_c^{(n,h)}
\exp
\left(
\psi^{(c)}
\left(
\tau_{ij};
\boldsymbol{\theta}^{(c)}
\right)
\right)
\right].
\end{equation}
The resulting bias is added to the content-based attention logits before the softmax, allowing different inputs and attention heads to emphasize distinct local, periodic, or long-range temporal dependencies.
Because the candidate kernels and their hyperparameters are fixed, only the router learns how to combine the candidate temporal patterns. This yields a lightweight, plug-and-play mechanism that converts static PE kernels into input- and head-adaptive positional biases and can be incorporated into standard multi-head self-attention without modifying the forecasting backbone.

\section{Experiments}

\subsection{Experimental Setup}

\textbf{Dataset.} We evaluate on eight widely used multivariate time-series forecasting benchmarks: ETTh1, ETTh2, ETTm1, ETTm2, Weather, Electricity, Traffic, and Solar. They span diverse domains and temporal resolutions, with 7--862 variables and Augmented Dickey--Fuller (ADF) statistics ranging from $-37.23$ to $-4.13$ \citep{elliott1992efficient}, providing a broad testbed with varying stationarity and temporal patterns. Detailed descriptions are provided in Appendix~\ref{app:datasets}.

\textbf{Transformer backbones and Non-stationary LTSF references.}
We select four representative Transformer architectures -- Informer~\citep{zhou2021informer}, PatchTST~\citep{nie2023time}, TimeBridge~\citep{liu2025timebridge}, and Timer-XL~\citep{liu2025timer} -- as backbones and integrate BTA into each model. These backbones are chosen to cover complementary stages and design paradigms of Transformer-based LTSF. This diversity allows us to evaluate whether BTA generalizes across substantially different Transformer architectures rather than benefiting a particular backbone. In addition, we compare our method to ten LTSF baselines: PULSE \citep{liupulse}, DTAF \citep{lu2026towards}, DUET \citep{qiu2025duet}, TimeBridge \citep{liu2025timebridge}, iTransformer \citep{liu2024itransformer}, TimeMixer \citep{wang2024timemixer}, PatchTST \citep{nie2023time}, TimesNet \citep{wu2023timesnet}, DLinear \citep{zeng2023transformers}, and Nonstationary Transformer \citep{liu2022non}. Detailed descriptions of backbone and reference methods are provided in Appendix~\ref{app:backbone_architecture} and Appendix~\ref{app:ltsf_reference_methods}, respectively.

\textbf{Implementation Details.}
We use the official implementations of all the aforementioned models.
The implementation details of BTA are provided in Appendix~\ref{app:bta_implementation}.
We evaluate LTSF performance using mean squared error (MSE) and mean absolute error (MAE), and report FLOPs and the number of learnable parameters to quantify model complexity.
All experiments are implemented in PyTorch and executed on an NVIDIA A100 GPU with 80~GB of memory.

\subsection{Main Results}\label{sec:main_results}

We integrate BTA into three backbones. 
Table~\ref{tab:average_forecasting_results_table1} shows that BTA consistently improves all three backbones across the four datasets with different ADFs. Averaged across datasets, BTA reduces MSE by $9.70\%$, $10.35\%$, and $5.70\%$ for Informer, PatchTST, and Timer-XL, respectively, with corresponding MAE reductions of $5.99\%$, $6.84\%$, and $2.68\%$. The improvements across heterogeneous architectures and datasets demonstrate that BTA provides a broadly effective plug-and-play enhancement to Transformer-based models for non-stationary LTSF. Additional three-seed experiments in Table~\ref{tab:backbone_bta_mse_sensitivity} show consistently lower mean MSE with limited variation across random initializations.

\begin{table}[!htbp]
\centering
\caption{\textbf{BTA generalizes across three Transformer backbones}. Input lengths are 96 for Informer and PatchTST and 672 for Timer-XL; backbone-specific settings are in Table~\ref{tab:bta_configuration}. Each dataset entry is averaged over
$H\in\{96,192,336,720\}$. Gain(\%) denotes the MSE and MAE decrease from Vanilla attention to BTA. Full results in Table~\ref{tab:backbone_bta_forecasting_results}.
}
\label{tab:average_forecasting_results_table1}
\fontfamily{ptm}\fontsize{10}{12}\selectfont
\setlength{\tabcolsep}{1.25pt}
\renewcommand{\arraystretch}{1.02}
\resizebox{0.9\linewidth}{!}{%
\begin{tabular}{cc|cc|cc|cc|cc|cc|cc|cc|cc|cc}
\toprule
\multicolumn{2}{c|}{\multirow{1}{*}{\textbf{Model}}}
& \multicolumn{6}{c|}{\textbf{Informer} \citep{zhou2021informer}}
& \multicolumn{6}{c|}{\textbf{PatchTST} \citep{nie2023time}}
& \multicolumn{6}{c}{\textbf{Timer-XL} \citep{liu2025timer}} \\
\midrule
\multicolumn{2}{c|}{\multirow{1}{*}{\textbf{Self-Attention}}}
& \multicolumn{2}{c|}{{Vanilla}}
& \multicolumn{2}{c|}{{BTA}}
& \multicolumn{2}{c|}{{Gain(\%)}}
& \multicolumn{2}{c|}{{Vanilla}}
& \multicolumn{2}{c|}{{BTA}}
& \multicolumn{2}{c|}{Gain(\%)}
& \multicolumn{2}{c|}{{Vanilla}}
& \multicolumn{2}{c|}{{BTA}}
& \multicolumn{2}{c}{Gain(\%)} \\
\midrule
\textbf{Data} & \textbf{ADF}
& \textbf{MSE} & \textbf{MAE}
& \textbf{MSE} & \textbf{MAE}
& \textbf{$\Delta$MSE} & \textbf{$\Delta$MAE}
& \textbf{MSE} & \textbf{MAE}
& \textbf{MSE} & \textbf{MAE}
& \textbf{$\Delta$MSE} & \textbf{$\Delta$MAE}
& \textbf{MSE} & \textbf{MAE}
& \textbf{MSE} & \textbf{MAE}
& \textbf{$\Delta$MSE} & \textbf{$\Delta$MAE} \\
\midrule
ETTh1 & -5.91 & 1.096 & 0.835 & \textbf{0.981} & \textbf{0.769} & 10.50 & 7.88 & 0.516 & 0.477 & \textbf{0.425} & \textbf{0.429} & 17.51 & 10.02 & 0.409 & 0.430 & \textbf{0.379} & \textbf{0.407} & 7.28 & 5.35 \\
\midrule
ETTm1 & -14.98 & 0.826 & 0.690 & \textbf{0.756} & \textbf{0.652} & 8.47 & 5.47 & 0.406 & 0.408 & \textbf{0.371} & \textbf{0.395} & 8.80 & 3.07 & 0.360 & 0.383 & \textbf{0.353} & \textbf{0.380} & 1.74 & 0.65 \\
\midrule
Weather & -26.68 & 0.463 & 0.459 & \textbf{0.398} & \textbf{0.424} & 14.05 & 7.73 & 0.265 & 0.286 & \textbf{0.254} & \textbf{0.278} & 4.34 & 2.80 & 0.240 & 0.273 & \textbf{0.235} & \textbf{0.266} & 1.98 & 2.48 \\
\midrule
Solar & -37.23 & 0.233 & 0.268 & \textbf{0.220} & \textbf{0.261} & 5.79 & 2.89 & 0.270 & 0.307 & \textbf{0.241} & \textbf{0.272} & 10.74 & 11.48 & 0.218 & 0.247 & \textbf{0.193} & \textbf{0.242} & 11.80 & 2.22 \\
\bottomrule
\end{tabular}%
}
\end{table}

We integrate BTA into TimeBridge and compare the resulting model with ten reference algorithms for non-stationary LTSF. As shown in Table~\ref{tab:average_forecasting_results}, BTA consistently improves the TimeBridge backbone across all eight datasets, reducing the dataset-averaged MSE and MAE by $2.39\%$ and $1.89\%$, respectively. These percentages are computed as $100(1-\bar e_{\mathrm{BTA}}/\bar e_{\mathrm{TimeBridge}})$, where $\bar e$ is the mean of the eight dataset-average entries in Table~\ref{tab:average_forecasting_results}. Moreover, TimeBridge+BTA achieves the best MSE on 7 of 8 datasets and the best MAE on 5, outperforming both general LTSF baselines and methods specifically designed for non-stationarity. These results demonstrate that adaptive temporal evidence can further strengthen a competitive non-stationary forecasting backbone and generalize effectively across diverse temporal dynamics.

\begin{table}[!htbp]
\centering
\caption{Comparison with reference methods for non-stationary LTSF. Ours denotes TimeBridge+BTA, and ELC denotes Electricity. Sequence length $T=720$
for all models. Each dataset entry is averaged over
$H\in\{96,192,336,720\}$. Best and second-best
results are shown in bold and underlined, respectively. The 1st Count row counts dataset-level wins, including ties.
Full results are available in Table~\ref{tab:long_term_forecasting}.}
\label{tab:average_forecasting_results}
\fontfamily{ptm}\fontsize{10}{12}\selectfont
\setlength{\tabcolsep}{1.25pt}
\renewcommand{\arraystretch}{1.02}
\resizebox{\linewidth}{!}{%
\begin{tabular}{c|cc|cc|cc|cc|cc|cc|cc|cc|cc|cc|cc}
\toprule
\multicolumn{1}{c}{\multirow{2}{*}{\textbf{Model}}}
& \multicolumn{2}{c}{\multirow{2}{*}{\qquad\textbf{Ours}\qquad}}
& \multicolumn{2}{c}{\textbf{PULSE}}
& \multicolumn{2}{c}{\textbf{DTAF}}
& \multicolumn{2}{c}{\textbf{DUET}}
& \multicolumn{2}{c}{\textbf{TimeBridge}}
& \multicolumn{2}{c}{\textbf{iTransformer}}
& \multicolumn{2}{c}{\textbf{TimeMixer}}
& \multicolumn{2}{c}{\textbf{PatchTST}}
& \multicolumn{2}{c}{\textbf{TimesNet}}
& \multicolumn{2}{c}{\textbf{DLinear}}
& \multicolumn{2}{c}{\textbf{Stationary}} \\

\multicolumn{1}{c}{}
& \multicolumn{2}{c}{}
& \multicolumn{2}{c}{\small\cite{liupulse}}
& \multicolumn{2}{c}{\small\cite{lu2026towards}}
& \multicolumn{2}{c}{\small\cite{qiu2025duet}}
& \multicolumn{2}{c}{\small\cite{liu2025timebridge}}
& \multicolumn{2}{c}{\small\cite{liu2024itransformer}}
& \multicolumn{2}{c}{\small\cite{wang2024timemixer}}
& \multicolumn{2}{c}{\small\cite{nie2023time}}
& \multicolumn{2}{c}{\small\cite{wu2023timesnet}}
& \multicolumn{2}{c}{\small\cite{zeng2023transformers}}
& \multicolumn{2}{c}{\small\cite{liu2022non}} \\

\midrule

\multicolumn{1}{c|}{\textbf{Metric}}
& \textbf{MSE} & \textbf{MAE}
& \textbf{MSE} & \textbf{MAE}
& \textbf{MSE} & \textbf{MAE}
& \textbf{MSE} & \textbf{MAE}
& \textbf{MSE} & \textbf{MAE}
& \textbf{MSE} & \textbf{MAE}
& \textbf{MSE} & \textbf{MAE}
& \textbf{MSE} & \textbf{MAE}
& \textbf{MSE} & \textbf{MAE}
& \textbf{MSE} & \textbf{MAE}
& \textbf{MSE} & \textbf{MAE} \\
\midrule
ETTh1 & \textbf{0.394} & \underline{0.423} & 0.412 & \textbf{0.422} & 0.416 & 0.435 & 0.404 & 0.427 & \underline{0.399} & 0.426 & 0.439 & 0.448 & 0.427 & 0.441 & 0.413 & 0.434 & 0.488 & 0.452 & 0.423 & 0.437 & 0.644 & 0.554 \\
\midrule
ETTh2 & \textbf{0.338} & \textbf{0.382} & 0.361 & 0.391 & 0.363 & 0.396 & \underline{0.344} & 0.390 & \underline{0.344} & \underline{0.384} & 0.374 & 0.406 & 0.349 & 0.397 & 0.351 & 0.395 & 0.414 & 0.427 & 0.431 & 0.447 & 0.521 & 0.487 \\
\midrule
ETTm1 & \underline{0.337} & 0.371 & 0.363 & 0.388 & 0.338 & \textbf{0.368} & \textbf{0.336} & \underline{0.370} & 0.348 & 0.379 & 0.362 & 0.391 & 0.356 & 0.380 & 0.353 & 0.381 & 0.399 & 0.406 & 0.357 & 0.379 & 0.527 & 0.472 \\
\midrule
ETTm2 & \textbf{0.246} & \textbf{0.305} & 0.262 & 0.314 & \underline{0.249} & \underline{0.307} & 0.255 & 0.316 & 0.250 & \underline{0.307} & 0.269 & 0.329 & 0.257 & 0.318 & 0.256 & 0.317 & 0.314 & 0.333 & 0.267 & 0.332 & 0.567 & 0.462 \\
\midrule
Weather & \textbf{0.218} & \textbf{0.248} & 0.239 & 0.271 & 0.222 & 0.251 & 0.230 & 0.264 & \underline{0.220} & \underline{0.250} & 0.233 & 0.271 & 0.226 & 0.264 & 0.226 & 0.264 & 0.259 & 0.287 & 0.240 & 0.300 & 0.295 & 0.315 \\
\midrule
ELC & \textbf{0.152} & \textbf{0.245} & 0.160 & 0.255 & 0.160 & \underline{0.248} & 0.162 & 0.253 & \underline{0.154} & 0.249 & 0.164 & 0.261 & 0.185 & 0.284 & 0.159 & 0.253 & 0.193 & 0.295 & 0.166 & 0.264 & 0.194 & 0.296 \\
\midrule
Traffic & \textbf{0.346} & \textbf{0.242} & 0.460 & 0.286 & 0.403 & \underline{0.249} & 0.386 & 0.254 & \underline{0.360} & 0.253 & 0.397 & 0.282 & 0.409 & 0.279 & 0.391 & 0.264 & 0.620 & 0.336 & 0.434 & 0.295 & 0.624 & 0.340 \\
\midrule
Solar & \textbf{0.176} & \underline{0.218} & 0.194 & 0.243 & 0.195 & 0.235 & 0.194 & \textbf{0.213} & \underline{0.186} & 0.233 & 0.200 & 0.260 & 0.193 & 0.252 & 0.194 & 0.245 & 0.244 & 0.334 & 0.247 & 0.309 & 0.391 & 0.389 \\
\midrule
1st Count & 7 & 5 & 0 & 1 & 0 & 1 & 1 & 1 & 0 & 0 & 0 & 0 & 0 & 0 & 0 & 0 & 0 & 0 & 0 & 0 & 0 & 0 \\
\bottomrule
\end{tabular}%
}
\end{table}

\subsection{Case Study}

To illustrate how BTA adapts positional information to heterogeneous temporal patterns, we select two ETTm1 windows with 720 historical time steps and use TimeBridge to forecast the subsequent 96 steps. Figure~\ref{fig:ettm1_input_conditioned_kernels} visualizes the temporal kernels selected by BTA and their corresponding mixture weights. The left window exhibits a strong weekly periodic pattern, for which the router assigns $0.75$ of the total weight to periodic kernels, including a maximum weight of $0.24$ on a 7-day periodic kernel. In contrast, the right window exhibits stronger local dependence, and BTA assigns $0.66$ of the total weight to local RBF kernels, with the largest weight ($0.41$) placed on the $0.7$-day local kernel. Consequently, BTA induces different positional biases for the two input windows and enables TimeBridge to better capture their distinct temporal dependencies, reducing the forecasting MSE by approximately $37\%$ in both cases. This case study demonstrates the advantage of input-adaptive temporal evidence over a fixed positional structure uniformly across forecasting windows. More case studies in Appendix~\ref{app:solar_qualitative_analysis} further confirm this advantage in different datasets and backbones.

\begin{figure}[!htbp]
  \centering
  \includegraphics[width=0.9\textwidth]{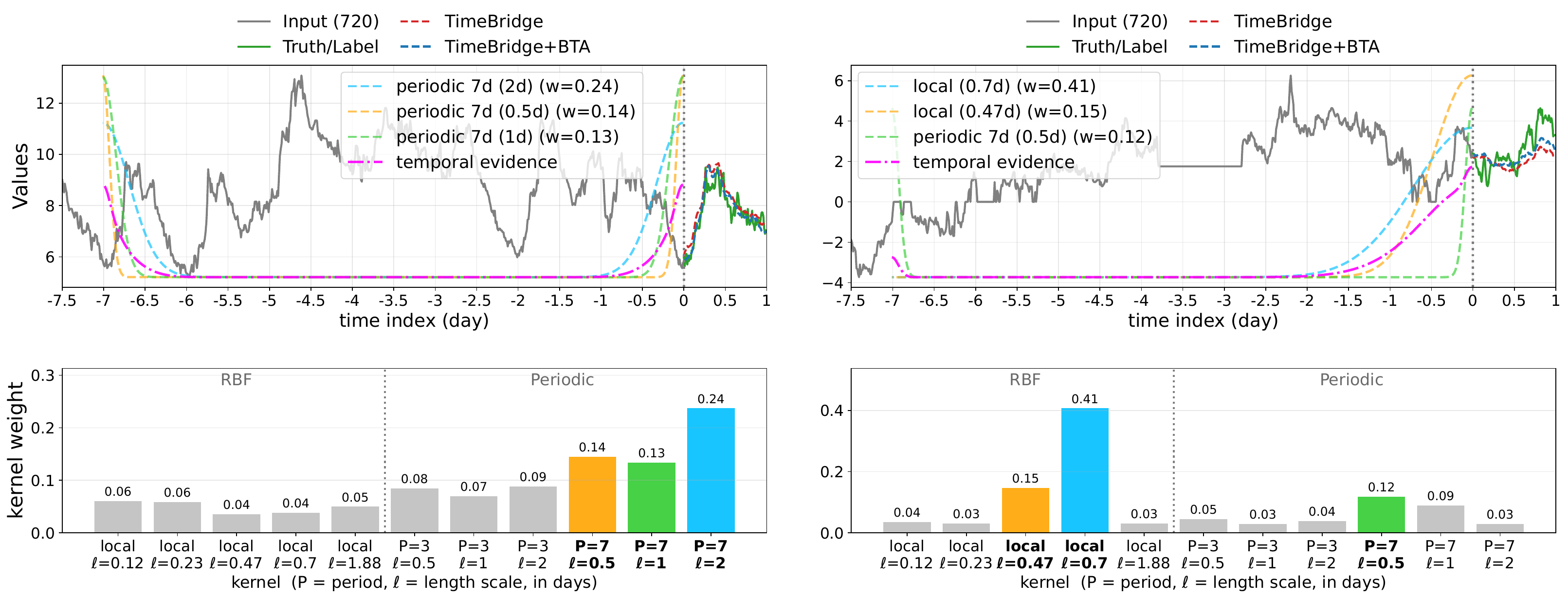}
  \caption{\textbf{Input-adaptive temporal kernel selection in BTA.}
Two ETTm1 windows with a 720-step input and 96-step forecasting horizon.
\emph{Top:} TimeBridge and TimeBridge+BTA forecasts, together with the three highest-weight temporal kernels and their weighted combination.
\emph{Bottom:} MoE router weights over candidate kernels; for periodic kernels, $\ell$ denotes the peak width $\ell'_{r,e}p_e$.
The left window emphasizes weekly periodicity, assigning $0.75$ of the total weight to periodic kernels, whereas the right window emphasizes local dependence, assigning $0.66$ to RBF kernels.
BTA reduces the OT-channel MSE by approximately $37\%$ in both cases ($0.219\rightarrow0.139$ and $0.970\rightarrow0.608$).}
  \label{fig:ettm1_input_conditioned_kernels}
  
\end{figure}

\subsection{Representation Analysis}

Figure~\ref{fig:kernel_weights_by_prediction_length} further shows that BTA
improves not only non-stationary LTSF but also the interpretability of
Transformer temporal behavior. The learned kernel weights vary systematically
with the forecasting horizon: shorter horizons emphasize relatively local
dependencies, whereas longer horizons assign greater importance to broader
local and periodic patterns. Thus, BTA exposes how the temporal inductive bias
of self-attention changes with the prediction horizon, providing an
interpretable characterization of the temporal dependencies used for LTSF.
Extra analysis on Solar in Appendix~\ref{app:solar_kernel_weights}.

\begin{figure}[!htbp]
  \centering
  \includegraphics[width=0.8\textwidth]{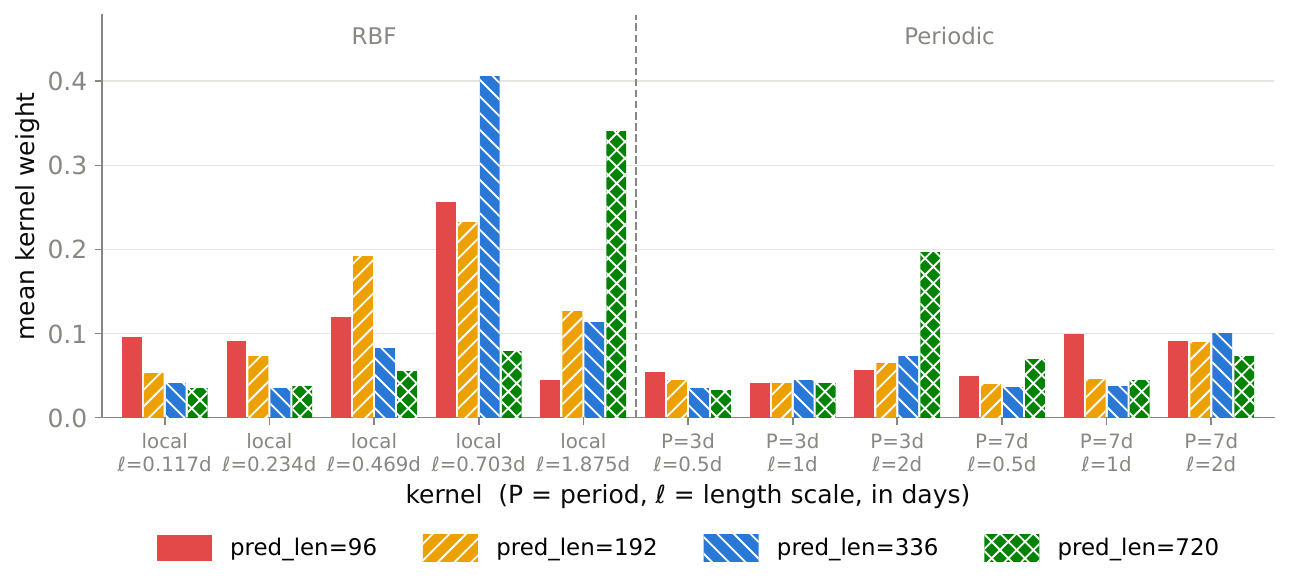}
  \caption{\textbf{BTA reveals horizon-dependent temporal dependencies.}
Mean kernel weights across ETTm1 test samples and attention heads for
$H\in\{96,192,336,720\}$. The dashed line separates local RBF and periodic
kernels. As the forecasting horizon changes, BTA shifts weight across temporal
scales, with longer horizons emphasizing broader local and periodic patterns.}
  \label{fig:kernel_weights_by_prediction_length}
\end{figure}

\subsection{Hyperparameter Analysis}

We analyze two MoE-router hyperparameters: Top-$k$ and hidden dimension $d_r$. Figure~\ref{fig:hyperparameter_analysis} shows the ETTm1 results, with ETTh1 and Solar reported in Appendix~\ref{app:additional_hyperparameter_analysis}. Across datasets, BTA generally favors sparse expert selection: $k=3$ performs best on ETTm1 and Solar, while ETTh1 prefers $k=1$. Router capacity is also dataset-dependent and non-monotonic, with the best $d_r$ equal to $192$, $160$, and $96$ for ETTh1, ETTm1, and Solar, respectively. Overall, a small number of experts and moderate router capacity are sufficient, with $d_r$ selected by validation performance. Additional efficiency analysis is provided in Appendix~\ref{app:additional_efficiency_analysis}.

\begin{figure}[!htbp]
    \centering
    \includegraphics[width=0.7\textwidth,
    trim=0 10 0 0,
    clip]{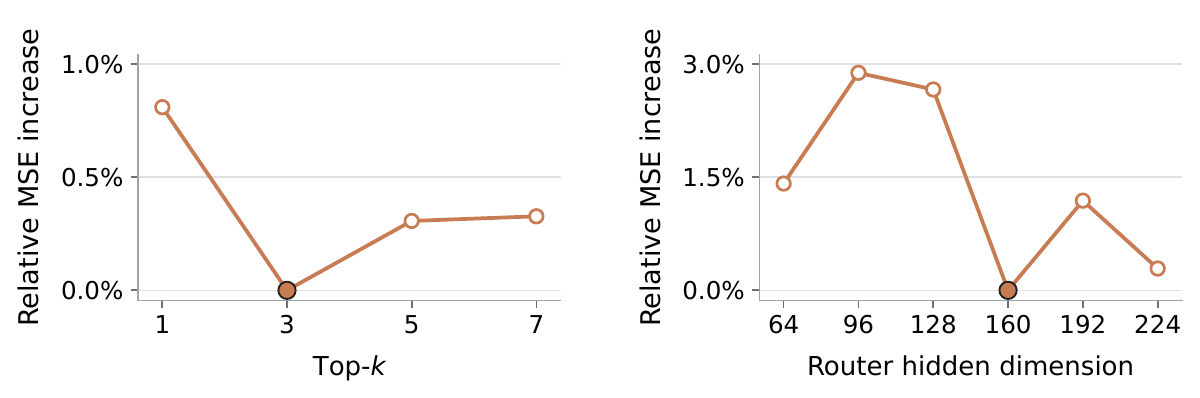}
    \caption{Hyperparameter sensitivity on ETTm1: Top-$k$ (left) and router hidden dimension (right). Each panel shows the percentage MSE increase relative to its lowest MSE.} 
    \label{fig:hyperparameter_analysis}
    
\end{figure}

\subsection{Ablation Study} 

Table~\ref{table_ablation_study} shows the contribution of each component in BTA. Adding local kernels improves vanilla self-attention, while combining local and periodic kernels provides further gains, demonstrating the benefit of a richer temporal kernel bank. Replacing static kernel weights with an MoE router using TE profile further reduces MSE and MAE, highlighting the value of input-adaptive kernel selection. Incorporating the Fourier spectrum of TE achieves the largest gains $5.65\%$ and $6.55\%$, respectively, indicating that frequency-domain information provides complementary guidance for temporal-kernel routing. The ablation study of BTA on Weather in Appendix~\ref{app:additional_ablation_study}.

\begin{table}[!htbp]
\centering
\caption{Ablation study of BTA on Solar (TimeBridge as the backbone). The full BTA variant is also reported as Ours in Table~\ref{tab:long_term_forecasting}.}
\label{table_ablation_study}
\fontfamily{ptm}\fontsize{10}{12}\selectfont
\setlength{\tabcolsep}{3pt}
\renewcommand{\arraystretch}{1.10}
\begin{tabular*}{\linewidth}{@{\extracolsep{\fill}}cc|cc|cc|cc|cc|cc@{}}
\toprule
\multicolumn{2}{c|}{\textbf{Attention}} & \multicolumn{2}{c|}{Self-attention} & \multicolumn{8}{c}{Bayesian Time-aware Attention} \\
\midrule
\multicolumn{2}{c|}{\textbf{Optimization}} & \multicolumn{2}{c|}{Static} & \multicolumn{4}{c|}{Static} & \multicolumn{2}{c|}{\shortstack{MoE\\(TE)}} & \multicolumn{2}{c}{\shortstack{MoE\\(TE + Fourier)}} \\
\midrule
\multicolumn{2}{c|}{\textbf{Kernels}} & \multicolumn{2}{c|}{None} & \multicolumn{2}{c|}{Local} & \multicolumn{2}{c|}{\shortstack{Local +\\periodic}} & \multicolumn{2}{c|}{\shortstack{Local +\\periodic}} & \multicolumn{2}{c}{\shortstack{Local +\\periodic}} \\
\midrule
\textbf{Dataset} & $H$ & MSE & MAE & MSE & MAE & MSE & MAE & MSE & MAE & MSE & MAE \\
\midrule
\multirow{6}{*}{\rotatebox{90}{Solar}} & 96 & 0.161 & 0.224 & 0.158 & 0.219 & 0.157 & 0.213 & 0.152 & 0.201 & \textbf{0.150} & \textbf{0.198} \\
 & 192 & 0.177 & 0.237 & 0.176 & 0.225 & 0.176 & 0.219 & 0.173 & 0.214 & \textbf{0.171} & \textbf{0.211} \\
 & 336 & 0.196 & 0.232 & 0.192 & 0.230 & 0.191 & 0.227 & 0.187 & 0.227 & \textbf{0.184} & \textbf{0.226} \\
 & 720 & 0.210 & 0.239 & 0.207 & 0.238 & 0.205 & 0.236 & 0.201 & \textbf{0.235} & \textbf{0.197} & 0.236 \\
 & \textit{Avg.} & 0.186 & 0.233 & 0.183 & 0.228 & 0.182 & 0.224 & 0.178 & 0.219 & \textbf{0.176} & \textbf{0.218} \\
 & \shortstack{Gain\\(\%)} & -- & -- & 1.48 & 2.15 & 2.02 & 3.97 & 4.17 & 5.90 & $\boldsymbol{5.65}$ & $\boldsymbol{6.55}$ \\
\bottomrule
\end{tabular*}
\par\smallskip\begin{minipage}{\linewidth}
Both MoE variants use local and periodic kernels. TE denotes the temporal-evidence profile; Fourier denotes its Fourier spectrum. Gain is relative to vanilla self-attention.
\end{minipage}
\end{table}

\section{Conclusions}

We studied how positional encoding shapes Transformer forecasting under
non-stationary temporal dependencies. By interpreting position-aware
self-attention as Bayesian inference over attended positions, we derived a
population-level ELBO and showed that static PE can be suboptimal when
temporal dependence varies across input windows. Motivated by this analysis,
we proposed BTA, a plug-and-play mechanism
that adaptively combines complementary temporal kernels through an
input- and head-specific MoE router. Experiments across multiple Transformer
backbones and eight LTSF benchmarks demonstrate that BTA consistently improves
forecasting accuracy and remains competitive with recent methods designed for
non-stationarity. Overall, our results suggest that adaptive temporal
evidence offers a principled, effective, and interpretable alternative to
static positional structures for non-stationary time-series forecasting.

\section*{Disclosure of AI Use}

During this work, we used ChatGPT-5.6 Sol solely to polish the language and improve the readability of the manuscript. We did not use generative AI to develop the theoretical framework, formulate mathematical claims, derive the ELBO or write proofs, design or implement BTA, design experiments, generate or process datasets, interpret results, or translate the manuscript. The authors reviewed all AI-assisted edits to ensure that they preserved the intended technical meaning and introduced no unsupported claims. The authors take full responsibility for the final manuscript, including its mathematical claims, experimental results, code, and conclusions.


\appendix

\section{The Simplified Transformer Architecture}
\label{app:simplifed_transformer}

Here we provide the detailed configuration of the simplified Transformer used in the theoretical analysis. Consider a point-wise Transformer consisting of a single self-attention encoder, average pooling over the $T$ historical positions, and a linear forecasting decoder. 

Given a historical input
$\boldsymbol{x}^{-}\in\mathbb{R}^{T}$, each scalar observation is first mapped to a $d_x$-dimensional representation through a linear embedding,
\begin{equation}
\boldsymbol{H}
=
\boldsymbol{x}^{-}\boldsymbol{w}_e
\in\mathbb{R}^{T\times d_x},
\end{equation}
where $\boldsymbol{w}_e\in\mathbb{R}^{1\times d_x}$ denotes the embedding vector.

The query, key, and value representations are
\begin{equation}
\boldsymbol{Q}
=
\boldsymbol{H}\boldsymbol{W}_q,
\qquad
\boldsymbol{K}
=
\boldsymbol{H}\boldsymbol{W}_k,
\qquad
\boldsymbol{V}
=
\boldsymbol{H}\boldsymbol{W}_v,
\end{equation}
where
$\boldsymbol{W}_q,\boldsymbol{W}_k\in\mathbb{R}^{d_x\times d_k}$
and
$\boldsymbol{W}_v\in\mathbb{R}^{d_x\times d_v}$
are learnable projection matrices. Thus, the $i$th query and $j$th key are
\[
\boldsymbol{q}_i
=
x_i^-\boldsymbol{w}_e\boldsymbol{W}_q
\in\mathbb{R}^{1\times d_k},
\qquad
\boldsymbol{k}_j
=
x_j^-\boldsymbol{w}_e\boldsymbol{W}_k
\in\mathbb{R}^{1\times d_k}.
\]

For a positional encoding method $\phi$, let
$\boldsymbol{\alpha}_{\phi}(\boldsymbol{x}^{-})
\in\mathbb{R}^{T\times T}$
denote the corresponding attention matrix, whose $(i,j)$th entry is
\begin{equation}
\label{equ_appendix_attention}
\alpha_{\phi,ij}(\boldsymbol{x}^{-})
=
\frac{
\exp\left(
\boldsymbol{q}_i\boldsymbol{k}_j^\top/\sqrt{d_k}
+
b_{ij}^{\phi}(\boldsymbol{x}^{-})
\right)
}{
\displaystyle
\sum_{\ell\in\mathcal{C}(i)}
\exp\left(
\boldsymbol{q}_i\boldsymbol{k}_\ell^\top/\sqrt{d_k}
+
b_{i\ell}^{\phi}(\boldsymbol{x}^{-})
\right)
},
\qquad
j\in\mathcal{C}(i),
\end{equation}
and $\alpha_{\phi,ij}(\boldsymbol{x}^{-})=0$ for
$j\notin\mathcal{C}(i)$.
Here, $b_{ij}^{\phi}(\boldsymbol{x}^{-})$ denotes the PE-induced positional score (this dependence on \(\boldsymbol{x}^{-}\) is absent for static PE) and
$\mathcal{C}(i)$ is the set of key positions accessible to query $i$,
as defined in Section~\ref{section:Preliminaries}.

The output of the self-attention encoder is
\begin{equation}
\boldsymbol{H}_{\mathrm{attn}}
=
\boldsymbol{\alpha}_{\phi}(\boldsymbol{x}^{-})
\boldsymbol{V}
\in\mathbb{R}^{T\times d_v}.
\end{equation}
Following the convention \citep{nie2023time}, we aggregate the $T$ output representations using average pooling,
\begin{equation}
\overline{\boldsymbol{h}}_{\phi}^{\top}(\boldsymbol{x}^{-})
=
\frac{1}{T}
\boldsymbol{1}_T^\top
\boldsymbol{\alpha}_{\phi}(\boldsymbol{x}^{-})
\boldsymbol{V}
\in\mathbb{R}^{1\times d_v},
\end{equation}
and apply a linear decoder
$\boldsymbol{W}_d\in\mathbb{R}^{L\times d_v}$ to obtain the
$L$-step forecast,
\begin{equation}
\label{equ_appendix_transformer}
f_{\theta,\phi}(\boldsymbol{x}^{-})
=
\boldsymbol{W}_d
\overline{\boldsymbol{h}}_{\phi}(\boldsymbol{x}^{-})
\in\mathbb{R}^{L},
\end{equation}
where
$\theta=
\{\boldsymbol{w}_e,
\boldsymbol{W}_q,
\boldsymbol{W}_k,
\boldsymbol{W}_v,
\boldsymbol{W}_d\}$
collects the learnable model parameters.

Since
\[
\boldsymbol{V}
=
\boldsymbol{x}^{-}\boldsymbol{w}_e\boldsymbol{W}_v,
\]
Equation~\ref{equ_appendix_transformer} can be rewritten as
\begin{align}
f_{\theta,\phi}(\boldsymbol{x}^{-})
&=
\boldsymbol{W}_d
(\boldsymbol{w}_e\boldsymbol{W}_v)^\top
\left[
\frac{1}{T}
\boldsymbol{1}_T^\top
\boldsymbol{\alpha}_{\phi}(\boldsymbol{x}^{-})
\right]
\boldsymbol{x}^{-}.
\end{align}
Define
\begin{equation}
\boldsymbol{w}
:=
\boldsymbol{W}_d
(\boldsymbol{w}_e\boldsymbol{W}_v)^\top
\in\mathbb{R}^{L\times 1}
\end{equation}
and
\begin{equation}
\boldsymbol{a}_{\phi}^{\top}(\boldsymbol{x}^{-})
:=
\frac{1}{T}
\boldsymbol{1}_T^\top
\boldsymbol{\alpha}_{\phi}(\boldsymbol{x}^{-})
\in\mathbb{R}^{1\times T}.
\end{equation}
The simplified Transformer therefore admits the compact representation
\begin{equation}
\label{equ_appendix_effective_operator}
f_{\theta,\phi}(\boldsymbol{x}^{-})
=
\boldsymbol{w}
\boldsymbol{a}_{\phi}^{\top}(\boldsymbol{x}^{-})
\boldsymbol{x}^{-}
=
\boldsymbol{M}_{\theta,\phi}(\boldsymbol{x}^{-})
\boldsymbol{x}^{-},
\end{equation}
where
\begin{equation}
\boldsymbol{M}_{\theta,\phi}(\boldsymbol{x}^{-})
:=
\boldsymbol{w}
\boldsymbol{a}_{\phi}^{\top}(\boldsymbol{x}^{-})
\in\mathbb{R}^{L\times T}.
\end{equation}
This recovers the effective forecasting operator introduced in
Equation~\ref{equ_simplified_transformer}.

Because each row of
$\boldsymbol{\alpha}_{\phi}(\boldsymbol{x}^{-})$
is an attention distribution over its accessible key positions,
$\boldsymbol{a}_{\phi}^{\top}(\boldsymbol{x}^{-})$
represents the attention coefficients averaged across the $T$ query
positions. Thus, the effect of PE method $\phi$ on the simplified
Transformer enters the forecasting operator through its influence on the
attention distribution.

For analytical clarity, we omit layer normalization, feed-forward
sublayers, and residual connections. Including a residual connection in
the attention block replaces
$\boldsymbol{\alpha}_{\phi}(\boldsymbol{x}^{-})$
by
$\boldsymbol{\alpha}_{\phi}(\boldsymbol{x}^{-})+\boldsymbol{I}_T$
in the derivation above and does not alter the resulting effective
linear-operator representation.

\section{Proof of Proposition~\ref{proposition_position_encoding}}
\label{app:prob_exp_attention_with_PE}

\textit{Proof.}
For a fixed query position $i$, let
$S_i\in\mathcal{C}(i)$ denote the selected key position.
We treat the query $\boldsymbol{q}_i$ and candidate keys
$\{\boldsymbol{k}_j:j\in\mathcal{C}(i)\}$ as fixed representations.
Without positional information, scaled dot-product self-attention induces
the content-based prior
\begin{equation}
\label{equ_content_prior}
p(S_i=j)
=
\frac{
\exp\!\left(
\boldsymbol{q}_i^\top\boldsymbol{k}_j/\sqrt{d_k}
\right)
}{
\sum_{\ell\in\mathcal{C}(i)}
\exp\!\left(
\boldsymbol{q}_i^\top\boldsymbol{k}_{\ell}/\sqrt{d_k}
\right)
}.
\end{equation}
Hence,
\begin{equation}
\label{equ_content_prior_log}
\log p(S_i=j)
\overset{c}{=}
\frac{\boldsymbol{q}_i^\top\boldsymbol{k}_j}{\sqrt{d_k}},
\end{equation}
where $\overset{c}{=}$ denotes equality up to an additive term independent
of $j$.

Let
$\boldsymbol{Z}_i=\{Z_{ij}\}_{j\in\mathcal{C}(i)}$
denote the full temporal evidence associated with query $i$, with observed
realization
$\boldsymbol{z}_i=\{z_{ij}\}_{j\in\mathcal{C}(i)}$.
By the candidate-specific assumption in
Proposition~\ref{proposition_position_encoding}, there exists a factor
$C_i(\boldsymbol{z}_i)>0$, independent of the candidate position $j$,
such that
\begin{equation}
\label{equ_candidate_evidence_factorization}
p(\boldsymbol{Z}_i=\boldsymbol{z}_i\mid S_i=j)
=
C_i(\boldsymbol{z}_i)\,
p(Z_{ij}=z_{ij}\mid S_i=j).
\end{equation}
Here and below, $p$ denotes a probability mass or density as appropriate.

Applying Bayes' rule and using
Equation~\ref{equ_candidate_evidence_factorization},
\begin{equation}
\label{equ_bayes_attention}
p(S_i=j\mid\boldsymbol{Z}_i=\boldsymbol{z}_i)
=
\frac{
p(S_i=j)\,
p(Z_{ij}=z_{ij}\mid S_i=j)
}{
\sum_{\ell\in\mathcal{C}(i)}
p(S_i=\ell)\,
p(Z_{i\ell}=z_{i\ell}\mid S_i=\ell)
},
\end{equation}
because the common factor $C_i(\boldsymbol{z}_i)$ cancels between the
numerator and denominator. Therefore,
\begin{equation}
\label{equ_attn_logposterior}
\log
p(S_i=j\mid\boldsymbol{Z}_i=\boldsymbol{z}_i)
\overset{c}{=}
\frac{\boldsymbol{q}_i^\top\boldsymbol{k}_j}{\sqrt{d_k}}
+
\log p(Z_{ij}=z_{ij}\mid S_i=j).
\end{equation}

On the other hand, self-attention with PE assigns
\begin{equation}
\label{equ_pe_attention_posterior}
\alpha_{ij}
=
\frac{
\exp\!\left(
\boldsymbol{q}_i^\top\boldsymbol{k}_j/\sqrt{d_k}
+b_{ij}
\right)
}{
\sum_{\ell\in\mathcal{C}(i)}
\exp\!\left(
\boldsymbol{q}_i^\top\boldsymbol{k}_{\ell}/\sqrt{d_k}
+b_{i\ell}
\right)
}.
\end{equation}
Thus, if PE specifies the temporal-evidence likelihood such that
\begin{equation}
\label{equ_pe_loglikelihood}
b_{ij}
\overset{c}{=}
\log p(Z_{ij}=z_{ij}\mid S_i=j),
\end{equation}
then Equations~\ref{equ_attn_logposterior} and
\ref{equ_pe_attention_posterior} have identical normalized logits.
Consequently,
\begin{equation}
\alpha_{ij}
=
p(S_i=j\mid\boldsymbol{Z}_i=\boldsymbol{z}_i).
\end{equation}
Therefore, position-aware self-attention can be interpreted as Bayesian
reweighting over candidate key positions: scaled dot-product similarity
defines the content-based prior, while PE contributes the
candidate-specific temporal-evidence likelihood.

\section{Probabilistic Interpretations of Positional Encoding}
\label{app:prob_exp_PE}

Proposition~\ref{proposition_position_encoding} provides a unified
probabilistic interpretation of representative positional encoding
(PE) mechanisms, including Gaussian/RBF positional bias
\citep{guo2019gaussian}, ALiBi \citep{press2021train}, KERPLE
\citep{chi2022kerple}, the power-law decay bias of Powerformer
\citep{hegazy2025powerformer}, and RoPE \citep{su2024roformer}.
Specifically, an additive positional term in the attention logits can be
interpreted as the log-likelihood contribution of temporal evidence, up to an
additive normalization term that is invariant to the key position $j$ and
therefore cancels under the softmax. Table~\ref{tab:temporal_positional_bias}
summarizes the resulting probabilistic interpretations and illustrates how
different PE mechanisms encode distinct assumptions about the dependence of
temporal evidence on the relative lag $\Delta_{ij}=i-j$.

\begin{table}[!htbp]
\centering
\caption{
Probabilistic interpretations of representative PE methods, where
$\Delta_{ij}=i-j$.
Here, $\sigma^2>0$ denotes the variance of the Gaussian/RBF kernel;
$m_h>0$ is the head-specific ALiBi slope;
$r_1,r_2>0$ are KERPLE parameters controlling the magnitude and scale of
its logarithmic decay, respectively; and
$\alpha>0$ is the power-law decay exponent in Powerformer.
Log-likelihood terms are shown up to additive normalization constants that
do not affect the softmax over $j$.
}
\label{tab:temporal_positional_bias}
{\fontfamily{ptm}\fontsize{9}{10}\selectfont
\setlength{\tabcolsep}{6pt}
\renewcommand{\arraystretch}{1}
\begin{tabular}{c | c c c}
\hline
PE method
& Domain
& $p(Z_{ij}=z_{ij}\mid S_i=j)$
& $\log p(Z_{ij}=z_{ij}\mid S_i=j)$ \\
\hline \\[-1.5ex]

Gaussian/RBF
& NLP
& $\displaystyle
   \propto
   \exp\!\left(
   -\frac{\Delta_{ij}^{2}}{2\sigma^{2}}
   \right)$
& $\displaystyle
   -\frac{\Delta_{ij}^{2}}{2\sigma^{2}}$ \\[2ex]

ALiBi
& NLP
& $\displaystyle
   \propto
   \exp\!\left(-m_h|\Delta_{ij}|\right)$
& $\displaystyle
   -m_h|\Delta_{ij}|$ \\[2ex]

KERPLE
& NLP
& $\displaystyle
   \propto
   \left(1+r_2|\Delta_{ij}|\right)^{-r_1}$
& $\displaystyle
   -r_1\log\!\left(1+r_2|\Delta_{ij}|\right)$ \\[2ex]

Powerformer
& TSF
& $\displaystyle
   \propto
   \left(1+|\Delta_{ij}|\right)^{-\alpha}$
& $\displaystyle
   -\alpha\log\!\left(1+|\Delta_{ij}|\right)$ \\[2ex]

RoPE
& NLP \& TSF
& ---
& $\displaystyle
   \{\boldsymbol{q}_i^{\top}
   \left(\boldsymbol{R}_{\Delta_{ij}}-\boldsymbol{I}\right)
   \boldsymbol{k}_j\}/\sqrt{d_k}$ \\[1ex]

\hline
\end{tabular}
}
\end{table}

The first four methods can therefore be understood as imposing different
assumptions on how temporal evidence decays with relative distance.
Gaussian/RBF bias yields a quadratic log-likelihood,
$
-\frac{\Delta_{ij}^2}{2\sigma^2},
$
which induces rapid decay and favors temporally local interactions. The
variance $\sigma^2$ controls the effective temporal range: a larger
$\sigma^2$ produces a broader likelihood and assigns relatively more weight
to distant positions. ALiBi instead yields the linear decay
\[
-m_h|\Delta_{ij}|,
\]
which corresponds to a Laplace-type temporal likelihood. Its head-specific
slope $m_h$ determines how strongly each attention head penalizes distant
positions.

KERPLE introduces the heavier-tailed logarithmic decay
\[
-r_1\log\!\left(1+r_2|\Delta_{ij}|\right),
\]
where $r_1$ controls the overall strength of the decay and $r_2$ determines
its distance scale. Similarly, the Powerformer bias
\[
-\alpha\log\!\left(1+|\Delta_{ij}|\right)
\]
corresponds to a power-law temporal likelihood, with $\alpha$ controlling
the decay rate. Compared with Gaussian or exponential decay, these
logarithmic forms assign relatively greater likelihood to distant positions
and therefore permit heavier-tailed temporal dependencies.

More generally, $\sigma^2$, $m_h$, $r_1$, $r_2$, and $\alpha$ determine the
temporal scale and tail behavior of their corresponding evidence
likelihoods. Thus, different PE mechanisms can be viewed not merely as
alternative deterministic position transformations, but as encoding
different probabilistic assumptions about the range and strength of
temporal dependencies.

RoPE admits a related but qualitatively different interpretation. Let
$\boldsymbol{R}_{\Delta_{ij}}$ denote the relative rotation matrix associated
with lag $\Delta_{ij}$. The RoPE attention score can be decomposed as
\[
(\boldsymbol{q}_i^{\top}
\boldsymbol{R}_{\Delta_{ij}}
\boldsymbol{k}_j)/\sqrt{d_k}
=
\{\boldsymbol{q}_i^{\top}\boldsymbol{k}_j
+
\boldsymbol{q}_i^{\top}
\left(
\boldsymbol{R}_{\Delta_{ij}}-\boldsymbol{I}
\right)
\boldsymbol{k}_j\}/\sqrt{d_k},
\]
where $\boldsymbol{I}$ denotes the identity matrix. Under Proposition~\ref{proposition_position_encoding}, with
$\boldsymbol{q}_i$ and $\boldsymbol{k}_j$ treated as fixed
representations, the second term can therefore be interpreted as the
candidate-specific log temporal-evidence contribution,
\[
\log p(Z_{ij}=z_{ij}\mid S_i=j)
\overset{c}{=}
\frac{
\boldsymbol{q}_i^{\top}
\left(
\boldsymbol{R}_{\Delta_{ij}}-\boldsymbol{I}
\right)
\boldsymbol{k}_j
}{\sqrt{d_k}},
\]
where $\overset{c}{=}$ denotes equality up to an additive normalization
term independent of $j$ and therefore irrelevant to the attention
softmax.
Unlike Gaussian/RBF, ALiBi, KERPLE, and Powerformer, whose positional terms
are determined primarily by relative distance, the temporal-evidence
contribution induced by RoPE depends jointly on the relative lag
$\Delta_{ij}$ and the content representations $\boldsymbol{q}_i$ and
$\boldsymbol{k}_j$. Consequently, the same temporal lag can contribute
different amounts of evidence for different query--key pairs. This
distinction highlights the generality of
Proposition~\ref{proposition_position_encoding}: both distance-dependent
positional biases and content-dependent relative rotations can be
interpreted within a common probabilistic framework as mechanisms that
modify the temporal evidence entering self-attention.

\section{Proof of Theorem~\ref{theorem_population_elbo}}
\label{app:theorem_bayesian_transformer_tsf}

We prove Theorem~\ref{theorem_population_elbo} in five steps:

\begin{enumerate}
    \item \textbf{Sample-level ELBO.}
    We derive the standard variational lower bound for a forecasting
    sample $(\boldsymbol{x}^{-},\boldsymbol{y})$ with latent temporal
    evidence $\boldsymbol{Z}$.

    \item \textbf{PE-induced future-free posterior.}
    We specialize the variational family to
    $q_{\phi}(\boldsymbol{Z}\mid\boldsymbol{x}^{-})$, reflecting that PE
    depends only on the historical input.

    \item \textbf{Population-level ELBO.}
    We average the sample-level bound over
    $\mathcal{D}=p(\boldsymbol{X}^{-},\boldsymbol{Y})$.

    \item \textbf{Covariance-based population form.}
    Under the Gaussian forecasting likelihood, we express the forecasting
    term through $q_\phi$-weighted second-order moments. For static PE,
    these reduce to the population covariance matrices
    $\boldsymbol{\Sigma}_{-}$ and $\boldsymbol{\Sigma}_{+-}$.

    \item \textbf{Prior and posterior of temporal evidence.}
    We derive the GP-based prior
    $p(\boldsymbol{Z}\mid\boldsymbol{X}^{-})$ and parameterize the
    PE-induced variational posterior
    $q_{\phi}(\boldsymbol{Z}\mid\boldsymbol{X}^{-})$ using the first- and
    second-order statistics of lag-wise PE profiles.
\end{enumerate}

\subsection{Step 1: Sample-level ELBO}
\label{app:sample_elbo}

Consider a forecasting sample
$(\boldsymbol{x}^{-},\boldsymbol{y})$ and latent temporal evidence
$\boldsymbol{Z}$. The conditional marginal likelihood is
\begin{equation}
\label{equ_sample_marginal_likelihood}
p_{\theta,\phi}
(\boldsymbol{y}\mid\boldsymbol{x}^{-})
=
\int
p_{\theta,\phi}
(\boldsymbol{y}\mid\boldsymbol{x}^{-},\boldsymbol{z})
p(\boldsymbol{z}\mid\boldsymbol{x}^{-})
\,d\boldsymbol{z}.
\end{equation}
For any variational distribution
$q(\boldsymbol{Z}\mid\boldsymbol{x}^{-},\boldsymbol{y})$
whose support is contained in that of
$p(\boldsymbol{Z}\mid\boldsymbol{x}^{-})$, Jensen's inequality gives
\begin{equation}
\label{equ_sample_elbo_general}
\begin{aligned}
\log p_{\theta,\phi}
(\boldsymbol{y}\mid\boldsymbol{x}^{-})
\geq{}&
\mathbb{E}_{q(\boldsymbol{Z}\mid\boldsymbol{x}^{-},\boldsymbol{y})}
\left[
\log
p_{\theta,\phi}
(\boldsymbol{y}\mid\boldsymbol{x}^{-},\boldsymbol{Z})
\right]
\\
&-
D_{\mathrm{KL}}
\left(
q(\boldsymbol{Z}\mid\boldsymbol{x}^{-},\boldsymbol{y})
\,\Vert\,
p(\boldsymbol{Z}\mid\boldsymbol{x}^{-})
\right).
\end{aligned}
\end{equation}
Thus, Equation~\ref{equ_sample_elbo_general} is the standard sample-level
variational lower bound and requires neither the Gaussian forecasting
assumption nor weak stationarity.

\subsection{Step 2: PE-induced Future-free Variational Posterior}
\label{app:future_free_posterior}

PE is determined from the historical input and cannot access the future
target at inference time. We therefore choose the future-free
PE-induced variational family
$q_{\phi}(\boldsymbol{Z}\mid\boldsymbol{x}^{-})$.
Substituting this family into
Equation~\ref{equ_sample_elbo_general} gives
\begin{equation}
\label{equ_sample_elbo_future_free}
\begin{aligned}
\log p_{\theta,\phi}
(\boldsymbol{y}\mid\boldsymbol{x}^{-})
\geq{}&
\mathbb{E}_{q_{\phi}(\boldsymbol{Z}\mid\boldsymbol{x}^{-})}
\left[
\log
p_{\theta,\phi}
(\boldsymbol{y}\mid\boldsymbol{x}^{-},\boldsymbol{Z})
\right]
\\
&-
D_{\mathrm{KL}}
\left(
q_{\phi}(\boldsymbol{Z}\mid\boldsymbol{x}^{-})
\,\Vert\,
p(\boldsymbol{Z}\mid\boldsymbol{x}^{-})
\right).
\end{aligned}
\end{equation}
Accordingly, define the future-free sample-level ELBO as
\begin{equation}
\label{equ_future_free_sample_elbo}
\begin{aligned}
\widetilde{\mathcal{L}}_{\mathrm{ELBO}}
(\theta,\phi;
\boldsymbol{x}^{-},\boldsymbol{y})
:={}&
\mathbb{E}_{q_{\phi}(\boldsymbol{Z}\mid\boldsymbol{x}^{-})}
\left[
\log
p_{\theta,\phi}
(\boldsymbol{y}\mid\boldsymbol{x}^{-},\boldsymbol{Z})
\right]
\\
&-
D_{\mathrm{KL}}
\left(
q_{\phi}(\boldsymbol{Z}\mid\boldsymbol{x}^{-})
\,\Vert\,
p(\boldsymbol{Z}\mid\boldsymbol{x}^{-})
\right).
\end{aligned}
\end{equation}
Although $\phi$ may be learned from forecasting errors involving future
targets during training, evaluating
$q_{\phi}(\boldsymbol{Z}\mid\boldsymbol{x}^{-})$
depends only on the historical input.

\subsection{Step 3: Population-level ELBO}
\label{app:population_elbo}

Let
$\mathcal{D}=p(\boldsymbol{X}^{-},\boldsymbol{Y})$
denote the population distribution of historical and future observations.
Taking expectation of
Equation~\ref{equ_future_free_sample_elbo}
with respect to $\mathcal{D}$ defines
\begin{equation}
\label{equ_population_elbo_definition}
\mathcal{L}_{\mathrm{pop}}(\theta,\phi)
:=
\mathbb{E}_{\mathcal{D}}
\left[
\widetilde{\mathcal{L}}_{\mathrm{ELBO}}
(\theta,\phi;
\boldsymbol{X}^{-},\boldsymbol{Y})
\right].
\end{equation}
Substituting Equation~\ref{equ_future_free_sample_elbo} gives
\begin{equation}
\label{equ_population_elbo_general}
\begin{aligned}
\mathcal{L}_{\mathrm{pop}}(\theta,\phi)
={}&
\mathbb{E}_{\mathcal{D}}
\mathbb{E}_{q_{\phi}(\boldsymbol{Z}\mid\boldsymbol{X}^{-})}
\left[
\log
p_{\theta,\phi}
(\boldsymbol{Y}\mid\boldsymbol{X}^{-},\boldsymbol{Z})
\right]
\\
&-
\mathbb{E}_{\boldsymbol{X}^{-}}
\left[
D_{\mathrm{KL}}
\left(
q_{\phi}(\boldsymbol{Z}\mid\boldsymbol{X}^{-})
\,\Vert\,
p(\boldsymbol{Z}\mid\boldsymbol{X}^{-})
\right)
\right].
\end{aligned}
\end{equation}
Since Equation~\ref{equ_sample_elbo_future_free} holds for every
forecasting sample,
\begin{equation}
\label{equ_population_elbo_bound}
\mathbb{E}_{\mathcal{D}}
\left[
\log
p_{\theta,\phi}
(\boldsymbol{Y}\mid\boldsymbol{X}^{-})
\right]
\geq
\mathcal{L}_{\mathrm{pop}}(\theta,\phi).
\end{equation}
Therefore,
$\mathcal{L}_{\mathrm{pop}}(\theta,\phi)$
is a population-level evidence lower bound.

\subsection{Step 4: Covariance-based Population Form}
\label{app:covariance_population_elbo}

We next characterize the forecasting term in
Equation~\ref{equ_population_elbo_general}.
Throughout this step, the model parameters $\theta$ are fixed, and their
dependence is suppressed for notational simplicity.
Conditioned on
$\boldsymbol{Z}=\boldsymbol{z}$, assume the Gaussian forecasting
likelihood
\begin{equation}
\label{equ_gaussian_forecasting_likelihood}
p_{\theta,\phi}
\left(
\boldsymbol{Y}
\mid
\boldsymbol{X}^{-},
\boldsymbol{Z}=\boldsymbol{z}
\right)
=
\mathcal{N}
\left(
\boldsymbol{Y};
\boldsymbol{M}_{\phi,\boldsymbol{z}}
\boldsymbol{X}^{-},
\sigma_\epsilon^2\boldsymbol{I}_L
\right),
\end{equation}
where
$\boldsymbol{M}_{\phi,\boldsymbol{z}}
\in\mathbb{R}^{L\times T}$
denotes the effective forecasting operator associated with temporal
evidence $\boldsymbol{z}$.
Relative to the original input-dependent operator
$\boldsymbol{M}_{\theta,\phi}(\boldsymbol{x}^{-})$,
Equation~\ref{equ_gaussian_forecasting_likelihood}
uses a frozen-attention representation in which the forecasting operator
is fixed conditional on $\boldsymbol{Z}$.
Importantly,
$q_{\phi}(\boldsymbol{Z}\mid\boldsymbol{X}^{-})$
remains input-dependent and is not replaced by a population marginal.

From Equation~\ref{equ_gaussian_forecasting_likelihood},
\begin{equation}
\label{equ_gaussian_log_likelihood}
\log
p_{\theta,\phi}
\left(
\boldsymbol{Y}
\mid
\boldsymbol{X}^{-},
\boldsymbol{Z}
\right)
=
-\frac{L}{2}\log(2\pi\sigma_\epsilon^2)
-
\frac{1}{2\sigma_\epsilon^2}
\left\|
\boldsymbol{Y}
-
\boldsymbol{M}_{\phi,\boldsymbol{Z}}
\boldsymbol{X}^{-}
\right\|_2^2.
\end{equation}
Expanding the squared error yields
\begin{equation}
\label{equ_forecasting_error_expansion}
\begin{aligned}
\left\|
\boldsymbol{Y}
-
\boldsymbol{M}_{\phi,\boldsymbol{Z}}
\boldsymbol{X}^{-}
\right\|_2^2
={}&
\boldsymbol{Y}^{\top}\boldsymbol{Y}
-
2\boldsymbol{Y}^{\top}
\boldsymbol{M}_{\phi,\boldsymbol{Z}}
\boldsymbol{X}^{-}
\\
&+
(\boldsymbol{X}^{-})^{\top}
\boldsymbol{M}_{\phi,\boldsymbol{Z}}^{\top}
\boldsymbol{M}_{\phi,\boldsymbol{Z}}
\boldsymbol{X}^{-}.
\end{aligned}
\end{equation}

To preserve the dependence between
$\boldsymbol{Z}$ and $\boldsymbol{X}^{-}$ induced by
$q_{\phi}(\boldsymbol{Z}\mid\boldsymbol{X}^{-})$, define the
$q_\phi$-weighted second-order moments
\begin{equation}
\label{equ_weighted_history_covariance}
\boldsymbol{\Omega}_{-,\phi}(\boldsymbol{z})
:=
\mathbb{E}_{\boldsymbol{X}^{-}}
\left[
q_{\phi}(\boldsymbol{z}\mid\boldsymbol{X}^{-})
\boldsymbol{X}^{-}(\boldsymbol{X}^{-})^\top
\right]
\end{equation}
and
\begin{equation}
\label{equ_weighted_cross_covariance}
\boldsymbol{\Omega}_{+-,\phi}(\boldsymbol{z})
:=
\mathbb{E}_{\mathcal{D}}
\left[
q_{\phi}(\boldsymbol{z}\mid\boldsymbol{X}^{-})
\boldsymbol{Y}(\boldsymbol{X}^{-})^\top
\right].
\end{equation}
These quantities retain the dependence between the PE-induced temporal
evidence and the historical input.

Because
$\int q_{\phi}(\boldsymbol{z}\mid\boldsymbol{X}^{-})
\,d\boldsymbol{z}=1$,
the first term in
Equation~\ref{equ_forecasting_error_expansion} satisfies
\begin{equation}
\label{equ_y_covariance_term}
\mathbb{E}_{\mathcal D}
\mathbb{E}_{q_\phi(\boldsymbol{Z}\mid\boldsymbol{X}^{-})}
\left[
\boldsymbol{Y}^{\top}\boldsymbol{Y}
\right]
=
\operatorname{tr}(\boldsymbol{\Sigma}_{+}).
\end{equation}
The cross term satisfies
\begin{equation}
\label{equ_cross_covariance_term}
\mathbb{E}_{\mathcal D}
\mathbb{E}_{q_\phi(\boldsymbol{Z}\mid\boldsymbol{X}^{-})}
\left[
\boldsymbol{Y}^{\top}
\boldsymbol{M}_{\phi,\boldsymbol{Z}}
\boldsymbol{X}^{-}
\right]
=
\int
\operatorname{tr}
\left(
\boldsymbol{M}_{\phi,\boldsymbol{z}}
\boldsymbol{\Omega}_{+-,\phi}^{\top}(\boldsymbol{z})
\right)
d\boldsymbol{z},
\end{equation}
and the quadratic term satisfies
\begin{equation}
\label{equ_history_covariance_term}
\mathbb{E}_{\mathcal D}
\mathbb{E}_{q_\phi(\boldsymbol{Z}\mid\boldsymbol{X}^{-})}
\left[
(\boldsymbol{X}^{-})^{\top}
\boldsymbol{M}_{\phi,\boldsymbol{Z}}^{\top}
\boldsymbol{M}_{\phi,\boldsymbol{Z}}
\boldsymbol{X}^{-}
\right]
=
\int
\operatorname{tr}
\left(
\boldsymbol{M}_{\phi,\boldsymbol{z}}
\boldsymbol{\Omega}_{-,\phi}(\boldsymbol{z})
\boldsymbol{M}_{\phi,\boldsymbol{z}}^\top
\right)
d\boldsymbol{z}.
\end{equation}

Substituting
Equations~\ref{equ_y_covariance_term}--\ref{equ_history_covariance_term}
into Equation~\ref{equ_gaussian_log_likelihood} gives
\begin{equation}
\label{equ_expected_gaussian_likelihood}
\begin{aligned}
&
\mathbb{E}_{\mathcal D}
\mathbb{E}_{q_\phi(\boldsymbol{Z}\mid\boldsymbol{X}^{-})}
\left[
\log
p_{\theta,\phi}
(
\boldsymbol{Y}
\mid
\boldsymbol{X}^{-},
\boldsymbol{Z}
)
\right]
\\
={}&
C_0
+
\frac{1}{\sigma_\epsilon^2}
\int
\operatorname{tr}
\left(
\boldsymbol{M}_{\phi,\boldsymbol{z}}
\boldsymbol{\Omega}_{+-,\phi}^{\top}(\boldsymbol{z})
\right)
d\boldsymbol{z}
\\
&-
\frac{1}{2\sigma_\epsilon^2}
\int
\operatorname{tr}
\left(
\boldsymbol{M}_{\phi,\boldsymbol{z}}
\boldsymbol{\Omega}_{-,\phi}(\boldsymbol{z})
\boldsymbol{M}_{\phi,\boldsymbol{z}}^\top
\right)
d\boldsymbol{z},
\end{aligned}
\end{equation}
where
\begin{equation}
\label{equ_population_constant}
C_0
=
-\frac{1}{2\sigma_\epsilon^2}
\operatorname{tr}(\boldsymbol{\Sigma}_{+})
-
\frac{L}{2}\log(2\pi\sigma_\epsilon^2).
\end{equation}

Combining Equation~\ref{equ_expected_gaussian_likelihood} with
Equation~\ref{equ_population_elbo_general} yields
\begin{equation}
\label{equ_population_elbo_weighted_covariance}
\begin{aligned}
\mathcal{L}_{\mathrm{pop}}(\phi)
={}&
C_0
+
\frac{1}{\sigma_\epsilon^2}
\int
\operatorname{tr}
\left(
\boldsymbol{M}_{\phi,\boldsymbol{z}}
\boldsymbol{\Omega}_{+-,\phi}^{\top}(\boldsymbol{z})
\right)
d\boldsymbol{z}
\\
&-
\frac{1}{2\sigma_\epsilon^2}
\int
\operatorname{tr}
\left(
\boldsymbol{M}_{\phi,\boldsymbol{z}}
\boldsymbol{\Omega}_{-,\phi}(\boldsymbol{z})
\boldsymbol{M}_{\phi,\boldsymbol{z}}^\top
\right)
d\boldsymbol{z}
\\
&-
\mathbb{E}_{\boldsymbol{X}^{-}}
\left[
D_{\mathrm{KL}}
\left(
q_{\phi}(\boldsymbol{Z}\mid\boldsymbol{X}^{-})
\,\Vert\,
p(\boldsymbol{Z}\mid\boldsymbol{X}^{-})
\right)
\right].
\end{aligned}
\end{equation}
Equation~\ref{equ_population_elbo_weighted_covariance} preserves the full
dependence of
$q_{\phi}(\boldsymbol{Z}\mid\boldsymbol{X}^{-})$
on the historical input and therefore requires no population-marginal
approximation.

For a static PE,
\begin{equation}
\label{equ_static_pe_posterior}
q_{\phi}(\boldsymbol{z}\mid\boldsymbol{X}^{-})
=
q_{\phi}(\boldsymbol{z}),
\end{equation}
so the weighted moments factor as
\begin{equation}
\label{equ_static_weighted_covariances}
\boldsymbol{\Omega}_{-,\phi}(\boldsymbol{z})
=
q_\phi(\boldsymbol{z})
\boldsymbol{\Sigma}_{-},
\qquad
\boldsymbol{\Omega}_{+-,\phi}(\boldsymbol{z})
=
q_\phi(\boldsymbol{z})
\boldsymbol{\Sigma}_{+-}.
\end{equation}
Equation~\ref{equ_population_elbo_weighted_covariance} therefore reduces
to
\begin{equation}
\label{equ_population_elbo_static_covariance}
\begin{aligned}
\mathcal{L}_{\mathrm{pop}}(\phi)
={}&
C_0
+
\frac{1}{\sigma_\epsilon^2}
\mathbb{E}_{q_\phi}
\left[
\operatorname{tr}
\left(
\boldsymbol{M}_{\phi,\boldsymbol{Z}}
\boldsymbol{\Sigma}_{+-}^{\top}
\right)
\right]
\\
&-
\frac{1}{2\sigma_\epsilon^2}
\mathbb{E}_{q_\phi}
\left[
\operatorname{tr}
\left(
\boldsymbol{M}_{\phi,\boldsymbol{Z}}
\boldsymbol{\Sigma}_{-}
\boldsymbol{M}_{\phi,\boldsymbol{Z}}^\top
\right)
\right]
\\
&-
\mathbb{E}_{\boldsymbol{X}^{-}}
\left[
D_{\mathrm{KL}}
\left(
q_{\phi}(\boldsymbol{Z})
\,\Vert\,
p(\boldsymbol{Z}\mid\boldsymbol{X}^{-})
\right)
\right].
\end{aligned}
\end{equation}
Thus, for static PE, the population ELBO admits a direct covariance-based
representation in terms of the historical covariance
$\boldsymbol{\Sigma}_{-}$ and the future--history cross-covariance
$\boldsymbol{\Sigma}_{+-}$.

\subsection{Step 5: Prior and Posterior of Temporal Evidence}
\label{app:temporal_evidence_prior_posterior}

We finally specify the prior and PE-induced variational posterior of the
latent temporal evidence. For the lag-wise population analysis, we
aggregate the pairwise temporal evidence in
Proposition~\ref{proposition_position_encoding} according to relative lag
and represent the resulting temporal structure through a latent process
$Z(\tau)$, where $\tau=|i-j|$. The prior is induced by the Gaussian
process governing the historical input, whereas the variational posterior
is parameterized directly by the lag-wise positional evidence supplied by
PE.

\paragraph{GP-derived prior over temporal evidence.}
Let
$\mathcal{T}=\{1,\ldots,K\}$, with $K\leq T-1$, denote the set of
relative lags considered in the theoretical analysis. For each
$\tau\in\mathcal{T}$, consider the empirical autocovariance statistic
\begin{equation}
\label{equ_empirical_temporal_evidence}
U_{\tau}
=
\frac{1}{T-\tau}
\sum_{t=1}^{T-\tau}
X_tX_{t+\tau}.
\end{equation}
Since $\{X_t\}$ is a zero-mean weakly stationary Gaussian process,
\begin{equation}
\label{equ_temporal_evidence_mean}
\mathbb{E}[U_{\tau}]
=
\Gamma(\tau).
\end{equation}
For $h,k\in\mathcal{T}$, Isserlis' theorem further gives
\begin{equation}
\label{equ_temporal_evidence_covariance}
\begin{aligned}
k_Z(h,k)
:={}&
\operatorname{Cov}(U_h,U_k)
\\
={}&
\frac{1}{(T-h)(T-k)}
\sum_{t=1}^{T-h}
\sum_{s=1}^{T-k}
\Big[
\Gamma(t-s)\Gamma(t+h-s-k)
\\
&\hspace{35mm}
+
\Gamma(t-s-k)\Gamma(t+h-s)
\Big].
\end{aligned}
\end{equation}
Thus, the first- and second-order statistics of the lag-wise temporal
evidence are determined by the autocovariance function
$\Gamma(\tau)$ of the underlying Gaussian process.

Motivated by these exact first- and second-order moments, we adopt a
moment-matched Gaussian-process prior for the latent temporal evidence on
the finite lag domain $\mathcal{T}$,
\begin{equation}
\label{equ_temporal_evidence_gp_prior}
Z(\tau)
\sim
\mathcal{GP}
\left(
\Gamma(\tau),
k_Z(\tau,\tau')
\right),
\qquad
\tau,\tau'\in\mathcal{T}.
\end{equation}
This Gaussian construction is a moment-matched prior: the mean and
covariance are inherited from the empirical autocovariance statistics
$U_\tau$, without requiring the statistics themselves to be jointly
Gaussian.
For the finite lag set $\mathcal{T}$, define
\begin{equation}
\label{equ_temporal_evidence_vector}
\boldsymbol{Z}
=
[Z(1),\ldots,Z(K)]^\top,
\qquad
\boldsymbol{\gamma}
=
[\Gamma(1),\ldots,\Gamma(K)]^\top,
\end{equation}
and let
$\boldsymbol{K}_Z\in\mathbb{R}^{K\times K}$
satisfy
$[\boldsymbol{K}_Z]_{hk}=k_Z(h,k)$. 
Motivated by these exact moments, we specify the conditional temporal-evidence prior used in the ELBO as
\begin{equation}
\label{equ_temporal_evidence_prior}
p(\boldsymbol{Z}\mid\boldsymbol{X}^{-})
=
\mathcal{N}
\left(
\boldsymbol{\gamma},
\boldsymbol{K}_0
\right),
\qquad
\boldsymbol{K}_0
=
\boldsymbol{K}_Z+\epsilon_0\boldsymbol{I}_K,
\end{equation}
where $\epsilon_0>0$ ensures positive definiteness.
Because $\boldsymbol{K}_Z$ is a covariance matrix, it is positive
semidefinite by construction.

We retain the conditional notation
$p(\boldsymbol{Z}\mid\boldsymbol{X}^{-})$
to emphasize that the temporal-evidence prior is associated with the
Gaussian process governing the historical input. Under a fixed weakly
stationary process, $\Gamma(\tau)$, and hence
$\boldsymbol{\gamma}$ and $\boldsymbol{K}_0$, are invariant across
historical realizations. Under regime-conditioned non-stationarity,
however, these quantities may vary across regimes through the
regime-specific autocovariance function.

\paragraph{Lag-wise PE profiles.}
We next characterize the temporal evidence represented by PE.
Let
$$
\mathcal{Q}_K
=
\left\{
i:
i-\tau\in\mathcal{C}(i),
\ \forall\,\tau\in\mathcal{T}
\right\}
$$
denote the query positions for which all $K$ relative lags are
accessible. For each $i\in\mathcal{Q}_K$, define the lag-wise PE profile
$$
\boldsymbol{b}_{\phi,i}(\boldsymbol{x}^{-})
=
\left[
b_{i,i-1}^{\phi}(\boldsymbol{x}^{-}),
\ldots,
b_{i,i-K}^{\phi}(\boldsymbol{x}^{-})
\right]^\top
\in\mathbb{R}^{K}.
$$
Since Proposition~\ref{proposition_position_encoding} interprets PE scores
as log-likelihood contributions rather than realizations of
$\boldsymbol{Z}$ themselves, we do not directly identify
$b_{ij}^{\phi}$ with $Z_{ij}$. Instead, we use the calibrated PE profile
as a variational parameterization of the latent temporal-evidence
posterior:
\begin{equation}
\label{equ_calibrated_pe_profile}
\boldsymbol{e}_{\phi,i}(\boldsymbol{x}^{-})
=
\frac{
\boldsymbol{b}_{\phi,i}(\boldsymbol{x}^{-})
-
c\boldsymbol{1}_K
}{a},
\qquad
a>0,
\end{equation}
where $a>0$ and $c\in\mathbb{R}$ are calibration constants aligning the
PE-score representation with the latent temporal-evidence coordinates.

\paragraph{PE-induced variational posterior.}
We approximate the temporal evidence represented by PE with the
multivariate Gaussian variational posterior
\begin{equation}
\label{equ_pe_posterior}
q_{\phi}
\left(
\boldsymbol{Z}\mid\boldsymbol{x}^{-}
\right)
=
\mathcal{N}
\left(
\boldsymbol{\mu}_{\phi}(\boldsymbol{x}^{-}),
\boldsymbol{K}_{\phi}(\boldsymbol{x}^{-})
\right).
\end{equation}
Let $N_K=|\mathcal{Q}_K|$. We parameterize the posterior mean by the
average lag-wise PE profile,
\begin{equation}
\label{equ_pe_posterior_mean}
\boldsymbol{\mu}_{\phi}(\boldsymbol{x}^{-})
=
\frac{1}{N_K}
\sum_{i\in\mathcal{Q}_K}
\boldsymbol{e}_{\phi,i}(\boldsymbol{x}^{-}),
\end{equation}
so its $\tau$th component represents the average PE-induced temporal
evidence associated with lag $\tau$.

The posterior covariance is parameterized by the variation of these
lag-wise PE profiles,
\begin{equation}
\label{equ_pe_posterior_covariance}
\boldsymbol{K}_{\phi}(\boldsymbol{x}^{-})
=
\frac{1}{N_K}
\sum_{i\in\mathcal{Q}_K}
\left(
\boldsymbol{e}_{\phi,i}(\boldsymbol{x}^{-})
-
\boldsymbol{\mu}_{\phi}(\boldsymbol{x}^{-})
\right)
\left(
\boldsymbol{e}_{\phi,i}(\boldsymbol{x}^{-})
-
\boldsymbol{\mu}_{\phi}(\boldsymbol{x}^{-})
\right)^\top
+
\epsilon_q\boldsymbol{I}_K,
\end{equation}
where $\epsilon_q>0$ ensures positive definiteness.
Therefore,
$q_{\phi}(\boldsymbol{Z}\mid\boldsymbol{x}^{-})$
summarizes the average lag-wise temporal evidence encoded by PE together with its variation across query positions.

For a static PE, the positional scores
$b_{ij}^{\phi}$ do not depend on the historical realization
$\boldsymbol{x}^{-}$. Consequently,
$\boldsymbol{\mu}_{\phi}(\boldsymbol{x}^{-})$
and
$\boldsymbol{K}_{\phi}(\boldsymbol{x}^{-})$
are shared across input windows, yielding
\begin{equation}
\label{equ_static_posterior_from_pe}
q_{\phi}
(\boldsymbol{Z}\mid\boldsymbol{x}^{-})
=
q_{\phi}(\boldsymbol{Z}).
\end{equation}
In contrast, an adaptive PE produces input-dependent positional scores
and therefore induces a window-specific variational posterior
$q_{\phi}(\boldsymbol{Z}\mid\boldsymbol{x}^{-})$.

\paragraph{Posterior--prior discrepancy.}
The GP-derived prior
$p(\boldsymbol{Z}\mid\boldsymbol{x}^{-})$
represents the temporal dependence supported by the underlying process,
whereas
$q_{\phi}(\boldsymbol{Z}\mid\boldsymbol{x}^{-})$
represents the temporal dependence encoded by PE.
Since both distributions are Gaussian,
\begin{equation}
\label{equ_temporal_evidence_gaussian_kl}
\begin{aligned}
&
D_{\mathrm{KL}}
\left(
q_{\phi}
(\boldsymbol{Z}\mid\boldsymbol{x}^{-})
\,\Vert\,
p(\boldsymbol{Z}\mid\boldsymbol{x}^{-})
\right)
\\
={}&
\frac{1}{2}
\Bigg[
\operatorname{tr}
\left(
\boldsymbol{K}_0^{-1}
\boldsymbol{K}_{\phi}(\boldsymbol{x}^{-})
\right)
+
\left(
\boldsymbol{\mu}_{\phi}(\boldsymbol{x}^{-})
-
\boldsymbol{\gamma}
\right)^\top
\boldsymbol{K}_0^{-1}
\left(
\boldsymbol{\mu}_{\phi}(\boldsymbol{x}^{-})
-
\boldsymbol{\gamma}
\right)
-K
+
\log
\frac{
\det(\boldsymbol{K}_0)
}{
\det(
\boldsymbol{K}_{\phi}(\boldsymbol{x}^{-})
)
}
\Bigg].
\end{aligned}
\end{equation}
Accordingly, the posterior--prior term in the population ELBO is
\begin{equation}
\label{equ_population_pe_prior_kl}
\mathbb{E}_{\boldsymbol{X}^{-}}
\left[
D_{\mathrm{KL}}
\left(
q_{\phi}
(\boldsymbol{Z}\mid\boldsymbol{X}^{-})
\,\Vert\,
p(\boldsymbol{Z}\mid\boldsymbol{X}^{-})
\right)
\right].
\end{equation}
This term quantifies the population-level discrepancy between the
temporal structure represented by PE and that supported by the Gaussian
process governing the historical input.

\paragraph{Conclusion of the proof.}
Combining Steps~1--5, the population-level variational formulation
provides a genuine lower bound on the expected forecasting
log-likelihood while preserving the input dependence of
$q_{\phi}(\boldsymbol{Z}\mid\boldsymbol{X}^{-})$.
The GP governing the historical input determines the temporal-evidence
prior
$p(\boldsymbol{Z}\mid\boldsymbol{X}^{-})$, whereas PE determines the
variational posterior
$q_{\phi}(\boldsymbol{Z}\mid\boldsymbol{X}^{-})$
through the first- and second-order statistics of its lag-wise
positional profiles.

The forecasting component is characterized by the
$q_\phi$-weighted historical and future--history second-order moments in
Equation~\ref{equ_population_elbo_weighted_covariance}.
For static PE, the posterior is shared across input windows, so these
weighted moments factor into the population covariance matrices
$\boldsymbol{\Sigma}_{-}$ and
$\boldsymbol{\Sigma}_{+-}$, yielding
Equation~\ref{equ_population_elbo_static_covariance}.
The posterior--prior term in
Equation~\ref{equ_population_pe_prior_kl}
further measures the discrepancy between the temporal structure encoded
by PE and that supported by the historical process.
Together, these results establish
Theorem~\ref{theorem_population_elbo}.

\section{Proof of Corollary~\ref{corollary_regime_population_elbo}}
\label{app:corollary_regime_population_elbo}

\textit{Proof.}
Conditional on regime $R=r$, the forecasting distribution
$$
\mathcal{D}_r
=
p(\boldsymbol{X}^{-},\boldsymbol{Y}\mid R=r)
$$
is weakly stationary Gaussian by assumption. Therefore,
Theorem~\ref{theorem_population_elbo} applies within each regime and
induces the regime-specific population ELBO
$\mathcal{L}_r(\phi)$, with temporal-evidence prior
$p_r(\boldsymbol{Z}|\boldsymbol{X}^{-})$ determined by the regime-specific autocovariance
function $\Gamma_r(\tau)$.

Taking expectation over the regime variable $R$ and using the law of
total expectation gives
\begin{equation}
\label{equ_regime_elbo_decomposition}
\mathcal{L}_{\mathrm{reg}}(\phi)
=
\sum_{r=1}^{J}
\pi_r\mathcal{L}_r(\phi).
\end{equation}
Thus, the population objective under regime-conditioned
non-stationarity is a weighted combination of the regime-specific
population ELBOs.

For any static PE
$\phi\in\Phi_{\mathrm{static}}$, we have, for every regime $r$,
$$
\mathcal{L}_r(\phi)
\leq
\sup_{\phi_r\in\Phi}
\mathcal{L}_r(\phi_r).
$$
Multiplying by $\pi_r>0$, summing over regimes, and then taking the
supremum over $\phi\in\Phi_{\mathrm{static}}$ yields
\begin{equation}
\label{equ_static_pe_upper_bound}
\sup_{\phi\in\Phi_{\mathrm{static}}}
\mathcal{L}_{\mathrm{reg}}(\phi)
\leq
\sum_{r=1}^{J}
\pi_r
\sup_{\phi_r\in\Phi}
\mathcal{L}_r(\phi_r).
\end{equation}

To characterize when the inequality is strict, define
$$
\mathcal{L}_r^\star
:=
\sup_{\phi_r\in\Phi}
\mathcal{L}_r(\phi_r)
$$
and, for a static PE $\phi$, its aggregate regime-wise optimality gap
$$
\Delta(\phi)
:=
\sum_{r=1}^{J}
\pi_r
\left[
\mathcal{L}_r^\star-\mathcal{L}_r(\phi)
\right]
\geq 0.
$$
Equation~\ref{equ_static_pe_upper_bound} is strict whenever
$$
\inf_{\phi\in\Phi_{\mathrm{static}}}
\Delta(\phi)
>
0.
$$
Equivalently, because $J$ is finite and $\pi_r>0$, equality can hold only
if there exists a sequence of static PE methods
$\{\phi_n\}\subset\Phi_{\mathrm{static}}$ satisfying
$$
\mathcal{L}_r(\phi_n)
\longrightarrow
\mathcal{L}_r^\star,
\qquad
\forall r\in\{1,\ldots,J\}.
$$
Hence, if no single sequence of static PE methods can simultaneously
approach the regime-specific optimal ELBOs, the inequality in
Equation~\ref{equ_static_pe_upper_bound} is strict.

For static PE, the induced temporal-evidence posterior
$q_\phi(\boldsymbol{Z})$ is shared across regimes, whereas the GP-derived
prior $p_r(\boldsymbol{Z}\mid\boldsymbol{X}^{-})$ and the covariance
structure entering $\mathcal{L}_r(\phi)$ may vary with the regime.
Consequently, heterogeneous regimes can induce incompatible
regime-specific ELBO objectives, forcing a static PE to compromise across
them. An input-adaptive PE can instead vary its temporal evidence across
forecasting windows, motivating the adaptive positional mechanism
developed in this work. This establishes
Corollary~\ref{corollary_regime_population_elbo}.

\section{The demonstration of Theorem \ref{theorem_population_elbo}}
\label{app:theorem_demonstration_synthetic}

\subsection{Binary-Regime Synthetic Dataset}
\label{sec:synthetic_data}

We generate independent trajectories from two zero-mean Gaussian process
(GP) regimes. For each trajectory, a regime
$r \sim \operatorname{Bernoulli}(1/2)$ is sampled once and held fixed
throughout the trajectory. The regime-specific covariance functions are
\begin{align}
k_{\mathrm{smooth}}(t,t')
&= \exp\!\left(-\frac{(t-t')^2}{2(28)^2}\right), \\
k_{\mathrm{periodic}}(t,t')
&= \exp\!\left(
-\frac{2\sin^2\!\bigl(\pi(t-t')/24\bigr)}{(0.55)^2}
\right).
\end{align}
Given the selected regime, we draw
$f_{0:67}\sim\mathcal{GP}(0,k_r)$ and observe
$x_t=f_t+\epsilon_t$, where
$\epsilon_t\overset{\mathrm{iid}}{\sim}\mathcal{N}(0,0.1^2)$.
The forecasting input is $x_{0:63}$, and the four-step target is
$x_{64:67}$.

We generate 256 training, 64 validation, and 64 test trajectories
\emph{per regime}, with independent trajectories across splits.
Inputs and targets are standardized using the mean and standard deviation
of the training inputs. Regime labels are retained for evaluation but
are not provided to the forecasting models.

\begin{figure}[!htbp]
  \centering
  \includegraphics[width=0.99\textwidth]{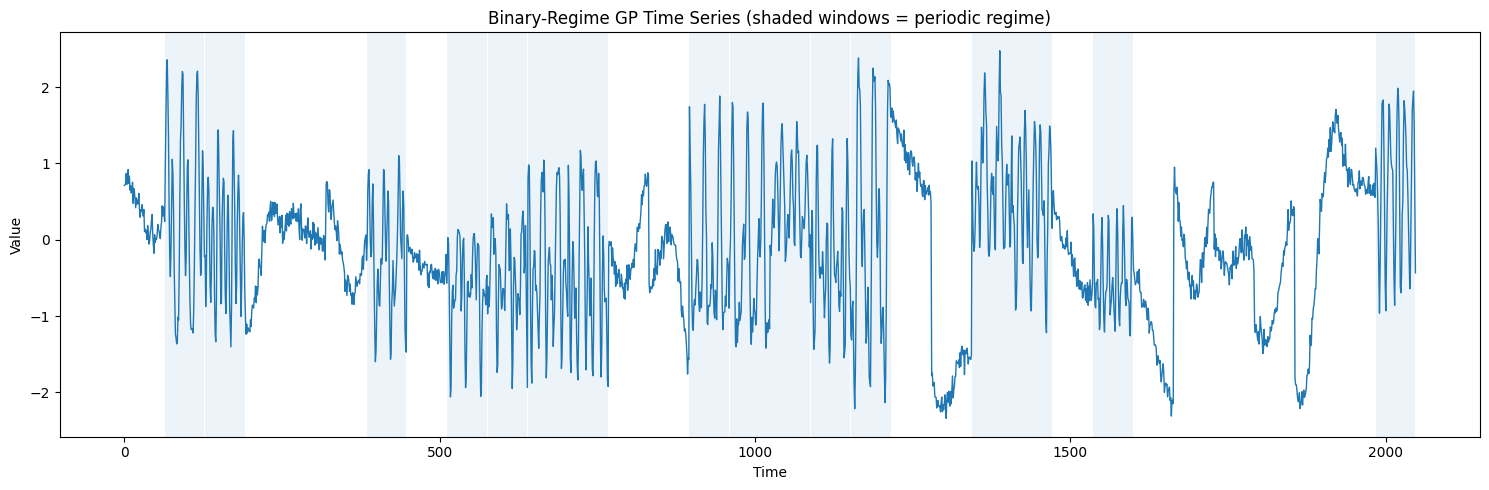}
  \caption{Synthetic binary-regime GP time series of 2,048 observations.
Each 64-step window is generated from either a smooth RBF GP
(unshaded) or a periodic GP (shaded). The periodic regime has
period 24.}
  \label{Img_synthetic_dataset}
\end{figure}

\subsection{Simplified Transformer for Forecasting}
\label{sec:binary_regime_transformer}

We use a single-layer, single-head Transformer with model dimension \(d_{\mathrm{model}}=32\). For a historical input \(\boldsymbol{x}^{-}\in\mathbb{R}^{64}\), a point-wise linear layer embeds each observation, and sinusoidal absolute positional encoding (APE) is added in every PE configuration considered below. The encoder computes
\[
\boldsymbol{A}_{\phi}(\boldsymbol{x}^{-})
=
\operatorname{Softmax}\!\left(
\frac{\boldsymbol{Q}\boldsymbol{K}^{\top}}{\sqrt{32}}
+\boldsymbol{B}_{\phi}(\boldsymbol{x}^{-})
\right),
\qquad
\boldsymbol{H}'=
\boldsymbol{H}
+\boldsymbol{W}_{O}\!\left(\boldsymbol{A}_{\phi}\boldsymbol{V}\right),
\]
where the softmax is applied across keys and \(\boldsymbol{B}_{\phi}\) denotes the contribution of RPE, ALiBi, RoPE, or BTA to the attention scores. For RoPE, this notation represents the difference between rotated and unrotated query--key scores; that contribution depends on the input representations. BTA generates an input-conditioned bias by combining RBF experts initialized with length scales \(7,14,28,42\) and periodic experts initialized with period \(24\) and length scales \(0.35,0.55,0.9\).

The 64 encoder outputs are averaged and passed through a linear decoder to produce \(\widehat{\boldsymbol{Y}}\in\mathbb{R}^{4}\). We train each model to minimize mean squared forecasting error using AdamW, a batch size of 64, a learning rate of \(10^{-3}\), weight decay of \(10^{-4}\), and at most 30 epochs. The best model is selected by validation MSE, with early stopping after seven epochs without improvement. For BTA, we additionally select the router width from \(\{16,32,64,128\}\) and the number of selected experts from \(\{1,3,5,7\}\), using validation data only. All methods use the same data splits and random seed.

We train the simplified Transformer with four different PE methods including RPE \citep{liutkus2021relative}, ALiBi \citep{press2021train}, RoPE \citep{su2024roformer}, and BTA on the binary-regime synthetic dataset. The performance of the simplified Transformer with the four different PE methods is summarized in Table~\ref{tab:synthetic_forecasting} and the forecasting result given an input time series is shown in Figure~\ref{binary_regime_h4_forecast_examples}.
\begin{table}[!htbp]
\centering
\caption{The forecasting performance on the binary-regime GP dataset.
MSE is reported on standardized observations; lower is better.
Smooth and periodic denote the two subsets of the test set.}
\label{tab:synthetic_forecasting}
\fontfamily{ptm}\fontsize{10}{11}\selectfont
\setlength{\tabcolsep}{6pt}
\renewcommand{\arraystretch}{1}
\begin{tabular}{lccccc}
\toprule
PE Method & Best epoch & Val. MSE & Test MSE & Smooth MSE & Periodic MSE \\
\midrule
RPE   & 30 & 0.2578 & 0.2337 & 0.1792 & 0.2882 \\
RoPE  & 30 & 0.2364 & 0.2124 & 0.1354 & 0.2893 \\
ALiBi & 30 & 0.2573 & 0.2336 & 0.1801 & 0.2871 \\
BTA   & 30 & \textbf{0.2049} & \textbf{0.1788}
         & \textbf{0.0993} & \textbf{0.2583} \\
\bottomrule
\end{tabular}
\end{table}

\begin{figure}[!htbp]
  \centering
  \includegraphics[width=0.99\textwidth]{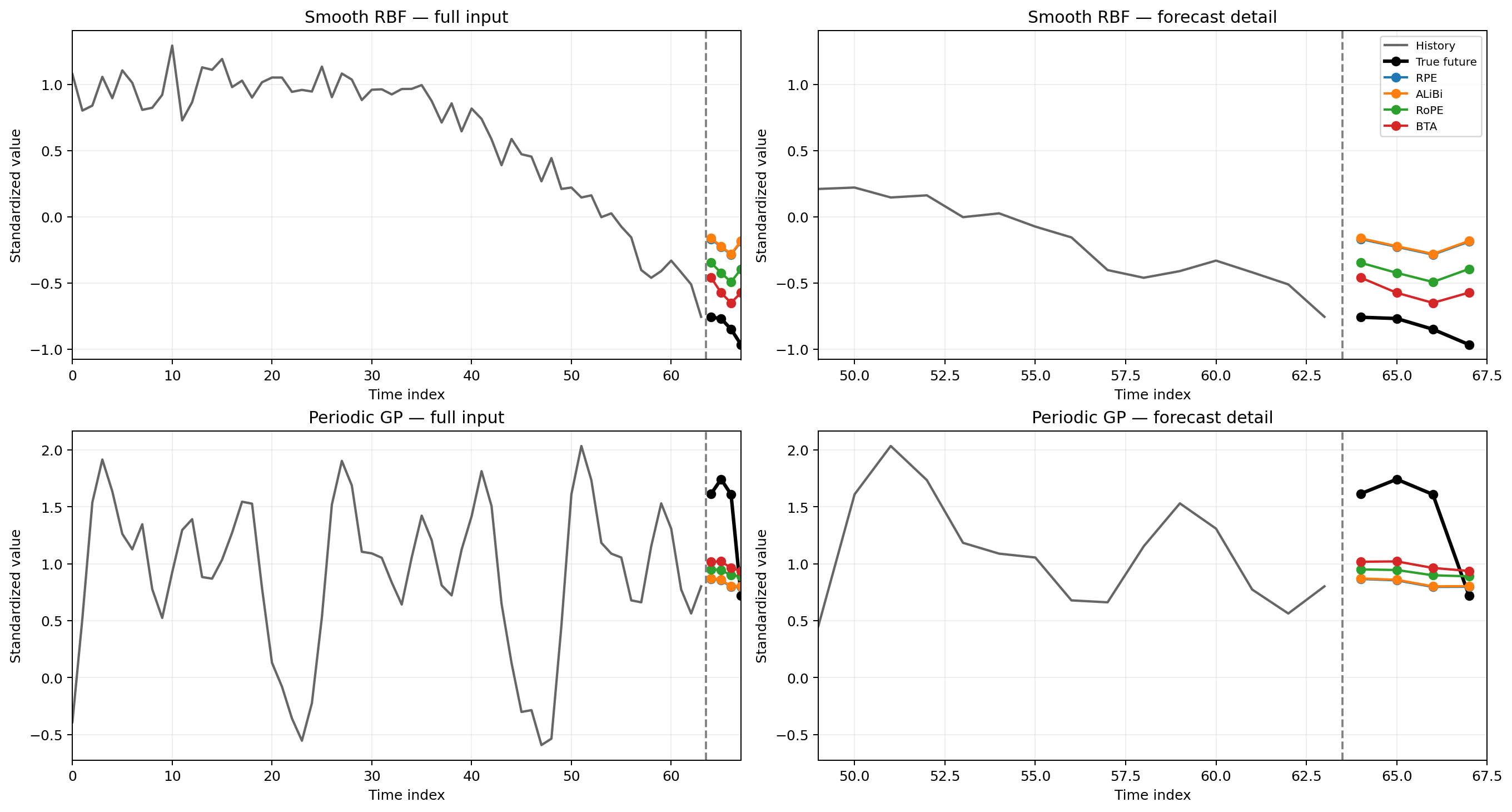}
  \caption{Forecasting results from the smooth
RBF GP (top row) and periodic GP (bottom row). The left panels show all
64 historical observations and the four future values; the right panels
magnify the forecast boundary. Gray lines denote the observed history,
black markers the true future, and colored markers the predictions from the simplified Transformer with the four PE methods:
RPE, ALiBi, RoPE, and BTA. The dashed line marks the end of the input
window.}
  \label{binary_regime_h4_forecast_examples}
\end{figure}

In addition, Figure~\ref{Img_PE_bias} visualizes the positional bias induced by the four different PE methods. 
\begin{figure}[!htbp]
  \centering
  \includegraphics[width=0.99\textwidth]{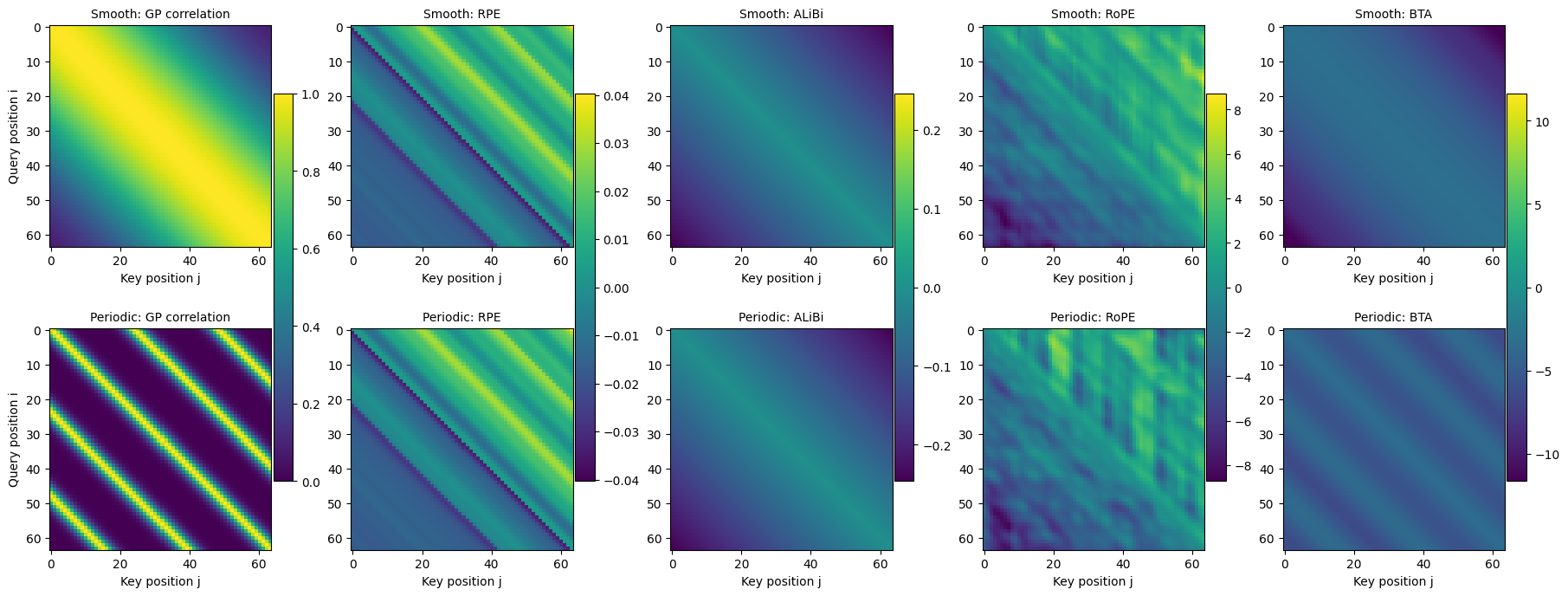}
  \caption{The figure compares positional biases under smooth and periodic Gaussian process regimes. RPE and ALiBi retain the same bias pattern across both regimes, while BTA produces different biases for the two inputs, illustrating its ability to adapt temporal evidence to the regime.}
  \label{Img_PE_bias}
\end{figure}

\subsection{Synthetic Validation of the Population-Level ELBO}
\label{sec:synthetic_elbo}

We design a binary-regime synthetic dataset to illustrate the
population-level ELBO in Theorem~2 and examine the value of adaptive
positional encoding (PE) when temporal dependencies change across
forecasting windows. Each regime is generated by a zero-mean, weakly
stationary Gaussian process (GP): the smooth regime uses an RBF
covariance, while the periodic regime uses a periodic covariance with
a 24-step cycle. Each model receives a 64-step history and predicts
the next four observations. We train the same single-head Transformer
with RPE, ALiBi, RoPE, or BTA; BTA uses its validation-selected router
configuration.

For each trained model, we extract the positional score matrix
$b_\phi(\boldsymbol{x}^{-})$ and construct one-sided score profiles
over lags $1,\ldots,K$. We set $K=32$ for the 64-step input windows.
Here $K$ specifies the maximum lag used to estimate temporal evidence,
not the input length: it leaves $T-K=32$ query positions with complete
one-sided lag profiles and includes the periodic regime's 24-step
cycle. The forecaster continues to use the full 64-step history.
A method-specific affine calibration is estimated from training
windows and then held fixed on test windows. As in Appendix~D, the
mean and query-wise covariance of the calibrated profiles define
\[
q_\phi(Z\mid\boldsymbol{x}^{-})
=
\mathcal{N}\!\left(
\boldsymbol{\mu}_\phi(\boldsymbol{x}^{-}),
\boldsymbol{K}_\phi(\boldsymbol{x}^{-})
\right).
\]
We add $\epsilon_q I$ to the posterior covariance, with
$\epsilon_q=10^{-4}$. For each regime $r$, the GP autocovariance
determines the mean and covariance of the lag-wise empirical
autocovariance statistics, yielding the moment-matched
temporal-evidence prior
$p_r(Z)=\mathcal{N}(\boldsymbol{\gamma}_r,
\boldsymbol{K}_{0,r})$.

We estimate the forecasting term by sampling $S=16$ values from
$q_\phi(Z\mid\boldsymbol{x}^{-}_n)$ for each test window. A sampled
value perturbs the trained model's observed positional scores at the
corresponding lags; the trained parameters and observed content
scores remain fixed. At
$Z=\boldsymbol{\mu}_\phi(\boldsymbol{x}^{-}_n)$, the perturbation
vanishes and the original forecast is recovered. Using the same
Gaussian observation variance $\sigma^2=1$ for every method, we
compute
\begin{equation}
\widehat{\mathcal{L}}(\phi)
=
\frac{1}{N}\sum_{n=1}^{N}
\left[
\frac{1}{S}\sum_{s=1}^{S}
\log p_{\theta,\phi}
\!\left(
\boldsymbol{y}_n\mid\boldsymbol{x}^{-}_n,Z_{ns}
\right)
-
D_{\mathrm{KL}}\!\left(
q_\phi(Z\mid\boldsymbol{x}^{-}_n)
\,\|\,p_{r_n}(Z)
\right)
\right],
\quad
Z_{ns}\sim q_\phi(Z\mid\boldsymbol{x}^{-}_n),
\label{eq:synthetic_elbo_estimate}
\end{equation}
where $r_n$ denotes the regime of window $n$. The first term
measures forecasting fit under sampled temporal evidence; the
second measures how closely the PE-induced evidence agrees with
the window's regime-specific GP prior. Their average is a
Monte Carlo estimate of the population-level variational objective
for this trained, input-dependent perturbation model.

\begin{table}[!htbp]
\centering
\caption{Forecasting performance and estimated ELBO on the
binary-regime synthetic test set with $\epsilon_q=10^{-4}$.
The Test MSE column repeats the point-forecast results in Table~\ref{tab:synthetic_forecasting}. The remaining columns report the Monte Carlo diagnostics under sampled temporal evidence defined in Equation~\ref{eq:synthetic_elbo_estimate}; expected log likelihood is not computed from the point-forecast MSE. Lower MSE and higher ELBO are better.}
\label{tab:synthetic_elbo}
\fontfamily{ptm}\fontsize{10}{11}\selectfont
\setlength{\tabcolsep}{6pt}
\renewcommand{\arraystretch}{1}
\begin{tabular}{lrrrr}
\toprule
PE & Test MSE & Expected log likelihood & Mean KL & ELBO \\
\midrule
RPE   & 0.2337 & $-4.1431$ & 348.0968 & $-352.2399$ \\
ALiBi & 0.2336 & $-4.1429$ & 271.2189 & $-275.3619$ \\
RoPE  & 0.2124 & $-4.1105$ &  51.4227 &  $-55.5333$ \\
BTA   & \textbf{0.1788} & $\mathbf{-4.0492}$ &
        \textbf{46.6581} & $\mathbf{-50.7073}$ \\
\bottomrule
\end{tabular}
\end{table}

As shown in Table~\ref{tab:synthetic_elbo}, BTA obtains both the
lowest forecasting MSE and the highest estimated ELBO. Its expected
log likelihood is higher and its posterior--prior KL is lower than
those of the baseline PE methods. Here RPE and ALiBi use input-independent biases, whereas RoPE has a content-dependent positional score under our definition of static PE. The result is consistent with the
hypothesis underlying Theorem~2 and Corollary~3: when different
windows have smooth or periodic temporal dependence, an adaptive
PE can supply more appropriate temporal evidence than a shared
static positional structure. Thus, the synthetic dataset provides
an empirical illustration of the population-level ELBO framework
and its motivation for adaptive PE in non-stationary forecasting.
The reported values pertain to the explicitly defined
input-dependent perturbation model in
Equation~\ref{eq:synthetic_elbo_estimate}; they are not a direct
numerical evaluation of the static covariance-reduced formula
in Theorem~2.

\section{Data Description}
\label{app:datasets}
We evaluate non-stationary LTSF models on eight widely used multivariate time-series benchmarks spanning diverse sampling frequencies, dimensionalities, and temporal dynamics. Table~\ref{tab:dataset_statistics} summarizes their characteristics and forecasting protocols. Following established settings~\citep{liu2025timebridge}, we use chronological train/validation/test splits and prediction horizons $H\in\{96,192,336,720\}$. All data-dependent preprocessing parameters are estimated exclusively from the training set and applied unchanged to the validation and test sets. We additionally report the Augmented Dickey--Fuller (ADF) statistic as an indicator of stationarity: under a fixed test specification, a less negative ADF statistic indicates weaker evidence against the unit-root null and thus comparatively stronger non-stationary behavior.

\begin{itemize}
    \item \textbf{ETT}~\citep{zhou2021informer} contains transformer temperature and power-load measurements from two electricity-transformer stations in China between 2016 and 2018. It comprises two hourly datasets (ETTh1 and ETTh2) and two 15-minute datasets (ETTm1 and ETTm2), each with seven variables.
    
    \item \textbf{Weather}~\citep{wu2023timesnet} contains 21 meteorological variables recorded in Germany at 10-minute intervals during 2020.
    
    \item \textbf{Electricity}~\citep{wu2023timesnet} contains hourly electricity-consumption measurements from 321 customers between 2012 and 2014.
    
    \item \textbf{Traffic}~\citep{wu2023timesnet} contains hourly freeway-occupancy measurements from 862 sensors in the San Francisco Bay Area during 2015--2016.
    
    \item \textbf{Solar-Energy}~\citep{liu2024itransformer} contains 10-minute power-generation measurements from 137 photovoltaic plants in Alabama during 2006.
\end{itemize}

\begin{table}[!htbp]
\centering
\caption{
Dataset characteristics and benchmark split-size summary, following Table~7 of TimeBridge~\citep{liu2025timebridge}.
$C$ denotes the number of variables. Training, Validation, Testing, and ADF entries reproduce that source. Split sizes are provided as benchmark metadata; the number of usable forecasting windows depends on the input length, prediction horizon, and handling of historical overlap. Interval denotes the sampling period.
}
\label{tab:dataset_statistics}
\fontfamily{ptm}\fontsize{9}{10}\selectfont
\setlength{\tabcolsep}{4.2pt}
\renewcommand{\arraystretch}{1.08}
\begin{tabular}{@{}lcrrrcr@{}}
\toprule
Dataset & $C$ & Training & Validation & Testing & Interval & ADF stat. \\
\midrule
ETTh1        & 7   & 8,545  & 2,881  & 2,881  & 1 hour & $-5.91$  \\
ETTh2        & 7   & 8,545  & 2,881  & 2,881  & 1 hour & $-4.13$  \\
ETTm1        & 7   & 34,465 & 11,521 & 11,521 & 15 min & $-14.98$ \\
ETTm2        & 7   & 34,465 & 11,521 & 11,521 & 15 min & $-5.66$  \\
Electricity  & 321 & 18,317 & 2,633  & 5,261  & 1 hour & $-8.44$  \\
Traffic      & 862 & 12,185 & 1,757  & 3,509  & 1 hour & $-15.02$ \\
Weather      & 21  & 36,792 & 5,271  & 10,540 & 10 min & $-26.68$ \\
Solar-Energy & 137 & 36,601 & 5,161  & 10,417 & 10 min & $-37.23$ \\
\bottomrule
\end{tabular}
\end{table}

\section{Transformer-based backbone architectures}
\label{app:backbone_architecture}

To evaluate whether BTA generalizes across substantially different Transformer architectures rather than benefiting a particular backbone, we select the four Transformer-based backbone architectures because of the following reasons:
\begin{itemize}
    \item Informer represents an early efficient Transformer for long-sequence forecasting; \item PatchTST represents the widely adopted patch-based architecture; 
    \item TimeBridge is specifically designed to model non-stationary temporal dependencies; \item Timer-XL represents recent generative time-series foundation models.
\end{itemize}

Moreover, Table~\ref{tab:backbone_comparison} summarizes the key difference between the four backbone architectures. 

\begin{table}[!htbp]
\centering
\caption{Comparison of the Transformer backbones used in our experiments.}
\label{tab:backbone_comparison}
\fontfamily{ptm}\fontsize{9}{10}\selectfont
\setlength{\tabcolsep}{4pt}
\renewcommand{\arraystretch}{1.08}
\resizebox{\textwidth}{!}{
\begin{tabular}{c|cccc}
\toprule
Backbone
& Informer~\citep{zhou2021informer}
& PatchTST~\citep{nie2023time}
& TimeBridge~\citep{liu2025timebridge}
& Timer-XL~\citep{liu2025timer} \\
\midrule

Representative role
& Early LTSF Transformer
& Patch-based Transformer
& Non-stationary Transformer
& Pretrained TSFM \\

Architecture
& Encoder--decoder
& Encoder-only
& Encoder-only
& Decoder-only \\

Attention design
& ProbSparse
& Self-attention
& Integrated + PatchDownsampling + Cointegrated
& Causal TimeAttention \\

Tokenization
& Point-wise
& Patch-wise
& Patch-wise
& Patch-wise \\

Positional mechanism
& Sinusoidal absolute PE
& Learnable absolute PE
& None
& Rotary PE + variable bias \\

Training paradigm
& Task-specific
& Task-specific
& Task-specific
& Pretraining / task-specific \\

\bottomrule
\end{tabular}
}
\end{table}

\section{LTSF reference methods}
\label{app:ltsf_reference_methods}

To compare BTA with representative methods for non-stationary LTSF, we select ten models as baselines. They are divided into two groups. First, we include five methods specifically designed to address non-stationarity, distribution shifts, or heterogeneous temporal dynamics. Second, we include five widely used LTSF models spanning Transformer-, MLP-, convolution-, and linear-based architectures to evaluate BTA against representative general-purpose forecasting methods.

\begin{enumerate}
    \item \textbf{Methods for non-stationary forecasting:}
    \begin{itemize}
        \item \textbf{DTAF}~\citep{lu2026towards} addresses non-stationarity in both temporal and frequency domains through temporal stabilization and frequency differencing.

        \item \textbf{PULSE}~\citep{liupulse} models evolving non-stationary dynamics through phase-anchored disentanglement, generative phase evolution, and statistic-aware augmentation.
        
        \item \textbf{TimeBridge}~\citep{liu2025timebridge} employs Integrated and Cointegrated Attention to separately model stable short-term dependencies and non-stationary long-term cointegration.
        
        \item \textbf{DUET}~\citep{qiu2025duet} uses dual clustering over temporal distributions and channels to capture heterogeneous temporal patterns and dynamic inter-variable dependencies.
        
        \item \textbf{Non-stationary Transformer}~\citep{liu2022non} combines Series Stationarization with De-stationary Attention to improve predictability while preserving informative non-stationary dependencies.
        
    \end{itemize}

    \item \textbf{General LTSF methods:}
    \begin{itemize}
        \item \textbf{iTransformer}~\citep{liu2024itransformer} treats individual variates as tokens and applies self-attention across variables to capture multivariate dependencies.
        
        \item \textbf{TimeMixer}~\citep{wang2024timemixer} employs multiscale decomposition and mixing to jointly model fine-scale seasonal patterns and coarse-scale trends.
        
        \item \textbf{PatchTST}~\citep{nie2023time} represents time series as patch-level tokens and uses channel-independent Transformers to capture long-range temporal dependencies efficiently.
        
        \item \textbf{TimesNet}~\citep{wu2023timesnet} transforms one-dimensional time series into two-dimensional representations based on multiple periods to model intra- and inter-period temporal variations.
        
        \item \textbf{DLinear}~\citep{zeng2023transformers} decomposes a time series into trend and seasonal components and forecasts them using simple linear mappings.
    \end{itemize}
\end{enumerate}

\section{More Details of BTA Implementation}
\label{app:bta_implementation}

This section provides additional implementation details of BTA, including
backbone integration, temporal-kernel construction, MoE routing,
hyperparameter selection, and training.

\subsection{Backbone Integration}

BTA is designed to augment rather than replace the native attention mechanism
of each backbone. For the $h$th attention head of the $n$th input, the MoE
router is constructed from the content-based attention scores
\[
s_{ij}^{(n,h),\mathrm{content}}
=
\frac{
(\boldsymbol{q}_i^{(n,h)})^\top
\boldsymbol{k}_j^{(n,h)}
}{
\sqrt{d_k}
}.
\]
The adaptive positional bias generated by BTA is then added to the
backbone-specific native attention logits:
\[
s_{ij}^{(n,h),\mathrm{BTA}}
=
s_{ij}^{(n,h),\mathrm{native}}
+
b_{ij}^{(n,h),\mathrm{BTA}},
\]
where $s_{ij}^{(n,h),\mathrm{native}}$ includes the native positional
mechanism of the backbone when applicable. The resulting logits are
normalized by the original attention softmax.

We integrate BTA into Informer, PatchTST, TimeBridge, and Timer-XL through
the same additive modification. These backbones differ substantially in
tokenization, attention design, and positional mechanisms, but no other
backbone component is modified. Thus, BTA preserves the original architecture
while introducing an additional input- and head-adaptive temporal bias.

\subsection{Temporal Kernel Bank}

We construct an over-complete temporal kernel bank using radial basis function
(RBF) and periodic (PER) kernels to represent complementary local,
long-range, and seasonal dependencies. Let
$\tau_{ij}=|i-j|$ denote the relative token lag. For point-wise models,
$\tau_{ij}$ is measured in time steps, whereas for patch-based models it is
measured in patches.

\paragraph{RBF kernels.}
The RBF log-kernel is
\[
\psi_r^{\mathrm{RBF}}(\tau_{ij};\ell_r)
=
-\frac{\tau_{ij}^{2}}{2\ell_r^{2}},
\qquad r=1,\ldots,5,
\]
where $\ell_r$ controls the temporal scale. We parameterize each RBF scale
relative to the available input context,
\[
\ell_r=\rho_r L,
\qquad
\rho_r\in\mathcal{R}_{\mathrm{RBF}},
\]
where $L$ denotes the number of input tokens or patches and
$\mathcal{R}_{\mathrm{RBF}}$ contains five fixed scale coefficients.
This normalization allows the same kernel construction to be used across
backbones with different input lengths and tokenization schemes.

\paragraph{Periodic kernels.}
To model recurring temporal structure, we first identify the two dominant
periods
\[
p_e\in\{p_1,p_2\},
\qquad e\in\{1,2\},
\]
using Fourier analysis of the training data only. For each dominant period,
we instantiate three periodic kernels with different peak widths:
\[
\psi_{r,e}^{\mathrm{PER}}
(\tau_{ij};\eta_r,p_e)
=
-\frac{
2\sin^2(\pi\tau_{ij}/p_e)
}{
\eta_r^2
},
\qquad
r=1,2,3,
\quad
e=1,2,
\]
where $\eta_r$ is a dimensionless width parameter. The corresponding
temporal peak width can equivalently be expressed as
\[
\ell'_{r,e}=\eta_r p_e.
\]
Thus, the periodic kernel bank contains
$2\times3=6$ candidate kernels, spanning two dominant periods and three
temporal widths for each period.

The complete kernel bank therefore contains
\[
M
=
5\ \text{RBF kernels}
+
6\ \text{periodic kernels}
=
11
\]
candidate temporal patterns. All kernel parameters are fixed before model
training; BTA learns only how to combine these candidate kernels for each
input and attention head.

\subsection{MoE Router Configuration}

For each input $n$ and attention head $h$, we aggregate the content-based
attention scores associated with the same relative lag to construct the
one-dimensional temporal-evidence (TE) routing profile
\[
e_{\tau}^{(n,h)}
=
\frac{1}{|\mathcal{I}_{\tau}|}
\sum_{(i,j)\in\mathcal{I}_{\tau}}
s_{ij}^{(n,h),\mathrm{content}},
\qquad
\mathcal{I}_{\tau}
=
\{(i,j):|i-j|=\tau\}.
\]
This profile summarizes how strongly the learned content representations
associate query--key pairs at different temporal lags.

To capture complementary lag-domain and frequency-domain information, the
router receives
\[
\left[
\boldsymbol{e}^{(n,h)};
\left|
\mathcal{F}
\left(
\boldsymbol{e}^{(n,h)}
\right)
\right|
\right],
\]
where $\mathcal{F}(\cdot)$ denotes the discrete Fourier transform and
$|\mathcal{F}(\cdot)|$ denotes its magnitude spectrum.

The router is a one-hidden-layer MLP with $M=11$ outputs, corresponding to
the five RBF and six periodic candidate kernels. Let
$\boldsymbol{g}^{(n,h)}\in\mathbb{R}^{M}$ denote its output logits.
We retain the Top-$k$ logits and apply softmax over the selected experts,
setting the remaining mixture weights to zero. This yields
\[
\boldsymbol{\omega}^{(n,h)}
=
[
\omega_1^{(n,h)},\ldots,\omega_M^{(n,h)}
],
\qquad
\sum_{c=1}^{M}\omega_c^{(n,h)}=1.
\]
The resulting input- and head-specific mixture weights are used in
Equation~\ref{equ_bta_bias} to construct the adaptive positional bias.

\subsection{Hyperparameter Configuration}

The principal BTA hyperparameters are the router hidden dimension $d_r$ and
the number of retained experts $k$. We search
\[
k\in\{1,3,5,7\}
\]
and use backbone-specific search grids for $d_r$, as summarized in
Table~\ref{tab:bta_configuration}.

Each router-dimension grid is represented by
$(d_{\mathrm{start}},d_{\mathrm{ub}},\Delta d)$ and contains candidate values
\[
d_r
=
d_{\mathrm{start}}+m\Delta d,
\qquad
m\in\mathbb{Z}_{\geq0},
\qquad
d_r\leq d_{\mathrm{ub}},
\]
where $d_{\mathrm{ub}}$ denotes an upper bound rather than necessarily an
included endpoint. Hyperparameters are selected according to validation-set
forecasting performance and fixed before test-set evaluation.

\subsection{Training and Optimization}

For each backbone, BTA follows the corresponding vanilla training protocol.
The temporal kernels remain fixed throughout training, whereas the MoE router
and trainable backbone parameters are optimized jointly. Informer, PatchTST,
and TimeBridge are trained task-specifically. Timer-XL is initialized from its
pretrained checkpoint and adapted to each forecasting dataset through
full-shot fine-tuning.

For each experimental run, the random seed is fixed for reproducibility.
The robustness analysis in
Table~\ref{tab:backbone_bta_mse_sensitivity} additionally repeats each
configuration across three predefined random seeds and reports the mean and
sample standard deviation.

\begin{table}[!htbp]
\centering
\caption{
BTA configuration across the four Transformer backbones.
Router hidden-dimension search ranges are reported as
$(d_{\mathrm{start}},d_{\mathrm{ub}},\Delta d)$, where
$d_{\mathrm{ub}}$ is the search upper bound and $\Delta d$ is the step size.
For PatchTST, patch length $P=16$, stride $S=8$, and end padding give $N=\lfloor(T-P)/S\rfloor+2=12$ input patches for $T=96$, following the patching convention of \citet{nie2023time}.
}
\label{tab:bta_configuration}
\fontfamily{ptm}\fontsize{9}{10}\selectfont
\setlength{\tabcolsep}{3.5pt}
\renewcommand{\arraystretch}{1.10}
\resizebox{\textwidth}{!}{
\begin{tabular}{c|cccc}
\toprule
\textbf{Configuration}
& \textbf{Informer}
& \textbf{PatchTST}
& \textbf{TimeBridge}
& \textbf{Timer-XL} \\
\midrule

Token level
& Point-wise
& Patch-wise
& Patch-wise
& Patch-wise \\

Sequence length
& 96
& 96
& 720
& 672 \\

Patch length
& 1
& 16
& 24 or 48
& 96 \\

Input tokens
& 96
& 12
& 30 or 15
& 7 \\

Lag unit
& Time step
& Patch
& Patch
& Patch \\

Kernel families
& RBF + Periodic
& RBF + Periodic
& RBF + Periodic
& RBF + Periodic \\

RBF kernels
& 5
& 5
& 5
& 5 \\

Dominant periods
& 2
& 2
& 2
& 2 \\

Periodic widths per period
& 3
& 3
& 3
& 3 \\

Periodic kernels
& 6
& 6
& 6
& 6 \\

Total kernels $M$
& 11
& 11
& 11
& 11 \\

Router architecture
& 1-layer MLP
& 1-layer MLP
& 1-layer MLP
& 1-layer MLP \\

Top-$k$
& \multicolumn{4}{c}{$\{1,3,5,7\}$} \\

Router hidden-dimension grid
$(d_{\mathrm{start}},d_{\mathrm{ub}},\Delta d)$
& $(32,1024,64)$
& $(32,512,32)$
& $(32,512,32)$
& $(32,2048,64)$ \\

Optimization
& Task-specific training
& Task-specific training
& Task-specific training
& Pretrained + full-shot fine-tuning \\

\bottomrule
\end{tabular}
}
\end{table}

\section{Main Experiment Appendix}

\paragraph{BTA generalizes across different Transformer backbones.}
Table~\ref{tab:backbone_bta_forecasting_results} reports the per-horizon
results underlying Table~\ref{tab:average_forecasting_results_table1}.
Across three backbones with different tokenization and attention designs, BTA
consistently improves the vanilla attention mechanism while leaving the
remaining backbone architecture unchanged. Averaged over the four forecasting
horizons, BTA reduces MSE by $5.79$--$14.05\%$ for Informer,
$4.34$--$17.51\%$ for PatchTST, and $1.74$--$11.80\%$ for Timer-XL.
Across all $96$ backbone--dataset--horizon--metric settings, BTA improves
$95$, with the only exception being the MAE of Informer on ETTm1 at
$H=192$ ($-0.49\%$). These consistent gains across heterogeneous
architectures demonstrate the broad applicability of BTA as a plug-and-play
attention enhancement.

\begin{table}[!htbp]
\centering
\caption{\textbf{Per-horizon forecasting results of BTA across three
Transformer backbones.}
Results are reported for $H\in\{96,192,336,720\}$.
Within each backbone and row, the lower MSE and MAE between Vanilla and BTA
are shown in \textbf{bold}.
\textit{Avg.} denotes the mean over the four horizons, and Gain (\%) is the
relative error reduction
$100\times(\mathrm{Vanilla}-\mathrm{BTA})/\mathrm{Vanilla}$, computed from
unrounded values for the \textit{Avg.} rows.
Backbone configurations are summarized in Table~\ref{tab:bta_configuration}.}
\label{tab:backbone_bta_forecasting_results}
\fontfamily{ptm}\fontsize{10}{11}\selectfont
\setlength{\tabcolsep}{1.25pt}
\renewcommand{\arraystretch}{1.02}
\resizebox{0.99\linewidth}{!}{%
\begin{tabular}{cc|cc|cc|cc|cc|cc|cc|cc|cc|cc}
\toprule
\multicolumn{2}{c|}{\textbf{Model}}
& \multicolumn{6}{c|}{\textbf{Informer} \citep{zhou2021informer}}
& \multicolumn{6}{c|}{\textbf{PatchTST} \citep{nie2023time}}
& \multicolumn{6}{c}{\textbf{Timer-XL} \citep{liu2025timer}} \\
\midrule
\multicolumn{2}{c|}{\textbf{Self-Attention}}
& \multicolumn{2}{c|}{Vanilla}
& \multicolumn{2}{c|}{BTA}
& \multicolumn{2}{c|}{Gain (\%)}
& \multicolumn{2}{c|}{Vanilla}
& \multicolumn{2}{c|}{BTA}
& \multicolumn{2}{c|}{Gain (\%)}
& \multicolumn{2}{c|}{Vanilla}
& \multicolumn{2}{c|}{BTA}
& \multicolumn{2}{c}{Gain (\%)} \\
\midrule
\multicolumn{2}{c|}{\textbf{Dataset}}
& \textbf{MSE} & \textbf{MAE}
& \textbf{MSE} & \textbf{MAE}
& \textbf{$\Delta$MSE} & \textbf{$\Delta$MAE}
& \textbf{MSE} & \textbf{MAE}
& \textbf{MSE} & \textbf{MAE}
& \textbf{$\Delta$MSE} & \textbf{$\Delta$MAE}
& \textbf{MSE} & \textbf{MAE}
& \textbf{MSE} & \textbf{MAE}
& \textbf{$\Delta$MSE} & \textbf{$\Delta$MAE} \\
\midrule
\multirow{5}{*}{\rotatebox{90}{ETTh1}} & 96 & 0.992 & 0.785 & \textbf{0.878} & \textbf{0.719} & 11.49 & 8.41 & 0.460 & 0.447 & \textbf{0.367} & \textbf{0.394} & 20.22 & 11.86 & 0.364 & 0.397 & \textbf{0.343} & \textbf{0.379} & 5.77 & 4.53 \\
& 192 & 1.008 & 0.801 & \textbf{0.895} & \textbf{0.726} & 11.21 & 9.36 & 0.512 & 0.447 & \textbf{0.423} & \textbf{0.425} & 17.38 & 4.92 & 0.405 & 0.424 & \textbf{0.376} & \textbf{0.402} & 7.16 & 5.19 \\
& 336 & 1.179 & 0.870 & \textbf{1.038} & \textbf{0.793} & 11.96 & 8.85 & 0.546 & 0.496 & \textbf{0.441} & \textbf{0.436} & 19.23 & 12.10 & 0.427 & 0.439 & \textbf{0.391} & \textbf{0.413} & 8.43 & 5.92 \\
& 720 & 1.204 & 0.882 & \textbf{1.112} & \textbf{0.837} & 7.64 & 5.10 & 0.544 & 0.517 & \textbf{0.470} & \textbf{0.461} & 13.60 & 10.83 & 0.439 & 0.459 & \textbf{0.406} & \textbf{0.433} & 7.52 & 5.66 \\
\cmidrule(lr){2-20}
& \textit{Avg.} & 1.096 & 0.835 & \textbf{0.981} & \textbf{0.769} & 10.50 & 7.88 & 0.516 & 0.477 & \textbf{0.425} & \textbf{0.429} & 17.51 & 10.02 & 0.409 & 0.430 & \textbf{0.379} & \textbf{0.407} & 7.28 & 5.35 \\
\midrule
\multirow{5}{*}{\rotatebox{90}{ETTm1}} & 96 & 0.672 & 0.614 & \textbf{0.586} & \textbf{0.557} & 12.80 & 9.28 & 0.352 & 0.374 & \textbf{0.315} & \textbf{0.355} & 10.51 & 5.08 & 0.290 & 0.341 & \textbf{0.284} & \textbf{0.337} & 2.07 & 1.17 \\
& 192 & 0.692 & \textbf{0.610} & \textbf{0.671} & 0.613 & 3.03 & -0.49 & 0.390 & 0.393 & \textbf{0.358} & \textbf{0.381} & 8.21 & 3.05 & 0.337 & 0.369 & \textbf{0.332} & \textbf{0.367} & 1.48 & 0.54 \\
& 336 & 0.942 & 0.749 & \textbf{0.831} & \textbf{0.685} & 11.78 & 8.54 & 0.421 & 0.414 & \textbf{0.373} & \textbf{0.401} & 11.40 & 3.14 & 0.374 & 0.392 & \textbf{0.367} & \textbf{0.390} & 1.87 & 0.51 \\
& 720 & 0.998 & 0.787 & \textbf{0.936} & \textbf{0.754} & 6.21 & 4.19 & 0.462 & 0.449 & \textbf{0.436} & \textbf{0.443} & 5.63 & 1.34 & 0.437 & 0.428 & \textbf{0.430} & \textbf{0.426} & 1.60 & 0.47 \\
\cmidrule(lr){2-20}
& \textit{Avg.} & 0.826 & 0.690 & \textbf{0.756} & \textbf{0.652} & 8.47 & 5.47 & 0.406 & 0.408 & \textbf{0.371} & \textbf{0.395} & 8.80 & 3.07 & 0.360 & 0.383 & \textbf{0.353} & \textbf{0.380} & 1.74 & 0.65 \\
\midrule
\multirow{5}{*}{\rotatebox{90}{Weather}} & 96 & 0.313 & 0.375 & \textbf{0.253} & \textbf{0.335} & 19.17 & 10.67 & 0.186 & 0.227 & \textbf{0.170} & \textbf{0.212} & 8.60 & 6.61 & 0.157 & 0.205 & \textbf{0.153} & \textbf{0.197} & 2.55 & 3.90 \\
& 192 & 0.377 & 0.408 & \textbf{0.299} & \textbf{0.369} & 20.69 & 9.56 & 0.234 & 0.265 & \textbf{0.219} & \textbf{0.256} & 6.41 & 3.40 & 0.206 & 0.250 & \textbf{0.200} & \textbf{0.243} & 2.91 & 2.80 \\
& 336 & 0.448 & 0.449 & \textbf{0.389} & \textbf{0.413} & 13.17 & 8.02 & 0.284 & 0.301 & \textbf{0.274} & \textbf{0.295} & 3.52 & 1.99 & 0.259 & 0.291 & \textbf{0.255} & \textbf{0.285} & 1.54 & 2.06 \\
& 720 & 0.712 & 0.604 & \textbf{0.649} & \textbf{0.577} & 8.85 & 4.47 & 0.356 & 0.349 & \textbf{0.351} & \textbf{0.347} & 1.40 & 0.57 & 0.337 & 0.344 & \textbf{0.332} & \textbf{0.338} & 1.48 & 1.74 \\
\cmidrule(lr){2-20}
& \textit{Avg.} & 0.463 & 0.459 & \textbf{0.398} & \textbf{0.424} & 14.05 & 7.73 & 0.265 & 0.286 & \textbf{0.254} & \textbf{0.278} & 4.34 & 2.80 & 0.240 & 0.273 & \textbf{0.235} & \textbf{0.266} & 1.98 & 2.48 \\
\midrule
\multirow{5}{*}{\rotatebox{90}{Solar}} & 96 & 0.200 & 0.253 & \textbf{0.191} & \textbf{0.249} & 4.50 & 1.58 & 0.234 & 0.286 & \textbf{0.208} & \textbf{0.249} & 11.11 & 12.94 & 0.174 & 0.215 & \textbf{0.166} & \textbf{0.214} & 4.60 & 0.47 \\
& 192 & 0.240 & 0.273 & \textbf{0.224} & \textbf{0.260} & 6.67 & 4.76 & 0.267 & 0.310 & \textbf{0.241} & \textbf{0.272} & 9.74 & 12.26 & 0.199 & 0.234 & \textbf{0.187} & \textbf{0.232} & 6.03 & 0.85 \\
& 336 & 0.244 & 0.273 & \textbf{0.234} & \textbf{0.262} & 4.10 & 4.03 & 0.290 & 0.315 & \textbf{0.258} & \textbf{0.284} & 11.03 & 9.84 & 0.225 & 0.255 & \textbf{0.200} & \textbf{0.248} & 11.11 & 2.75 \\
& 720 & 0.248 & 0.274 & \textbf{0.229} & \textbf{0.271} & 7.66 & 1.09 & 0.289 & 0.317 & \textbf{0.257} & \textbf{0.282} & 11.07 & 11.04 & 0.275 & 0.285 & \textbf{0.217} & \textbf{0.273} & 21.09 & 4.21 \\
\cmidrule(lr){2-20}
& \textit{Avg.} & 0.233 & 0.268 & \textbf{0.220} & \textbf{0.261} & 5.79 & 2.89 & 0.270 & 0.307 & \textbf{0.241} & \textbf{0.272} & 10.74 & 11.48 & 0.218 & 0.247 & \textbf{0.193} & \textbf{0.242} & 11.80 & 2.22 \\
\bottomrule
\end{tabular}%
}
\end{table}

To examine the robustness of BTA, we train the three backbones integrated with BTA across the four datasets and four different forecasting horizons. Table~\ref{tab:backbone_bta_mse_sensitivity} shows that BTA generally achieves lower mean MSE than vanilla self-attention.
The multi-seed results also show limited variation across random
initializations, supporting the robustness of BTA's forecasting
performance. 

\begin{table}[!htbp]
\centering
\setlength{\tabcolsep}{3pt}
\renewcommand{\arraystretch}{1.02}
\caption{\textbf{Forecasting MSE across three Transformer backbones and
three random seeds.}
Results are reported as mean $\pm$ sample standard deviation for
$H\in\{96,192,336,720\}$. Within each backbone, dataset, and horizon, the method
with the lower mean MSE has both its mean and standard deviation
shown in bold.}
\label{tab:backbone_bta_mse_sensitivity}
\fontfamily{ptm}\fontsize{10}{11}\selectfont
\resizebox{0.99\textwidth}{!}{%
\begin{tabular}{cc|cc|cc|cc}
\toprule
\multicolumn{2}{c|}{\textbf{Model}}
& \multicolumn{2}{c|}{\textbf{Informer} \citep{zhou2021informer}}
& \multicolumn{2}{c|}{\textbf{PatchTST} \citep{nie2023time}}
& \multicolumn{2}{c}{\textbf{Timer-XL} \citep{liu2025timer}} \\
\midrule
\textbf{Dataset} & \textbf{Horizon}
& \textbf{Vanilla MSE} & \textbf{BTA MSE}
& \textbf{Vanilla MSE} & \textbf{BTA MSE}
& \textbf{Vanilla MSE} & \textbf{BTA MSE} \\
\midrule

\multirow{4}{*}{\rotatebox{90}{ETTh1}}
& 96
& $0.994\pm0.003$ & $\mathbf{0.876}\pm\mathbf{0.004}$
& $0.462\pm0.005$ & $\mathbf{0.366}\pm\mathbf{0.004}$
& $0.366\pm0.005$ & $\mathbf{0.345}\pm\mathbf{0.003}$ \\

& 192
& $1.012\pm0.012$ & $\mathbf{0.895}\pm\mathbf{0.003}$
& $0.511\pm0.004$ & $\mathbf{0.422}\pm\mathbf{0.006}$
& $0.404\pm0.011$ & $\mathbf{0.379}\pm\mathbf{0.006}$ \\

& 336
& $1.174\pm0.009$ & $\mathbf{1.036}\pm\mathbf{0.020}$
& $0.548\pm0.016$ & $\mathbf{0.447}\pm\mathbf{0.012}$
& $0.422\pm0.018$ & $\mathbf{0.397}\pm\mathbf{0.010}$ \\

& 720
& $1.204\pm0.001$ & $\mathbf{1.111}\pm\mathbf{0.006}$
& $0.543\pm0.005$ & $\mathbf{0.470}\pm\mathbf{0.002}$
& $0.437\pm0.006$ & $\mathbf{0.406}\pm\mathbf{0.003}$ \\

\midrule

\multirow{4}{*}{\rotatebox{90}{ETTm1}}
& 96
& $0.672\pm0.001$ & $\mathbf{0.586}\pm\mathbf{0.001}$
& $0.352\pm0.001$ & $\mathbf{0.315}\pm\mathbf{0.001}$
& $0.290\pm0.001$ & $\mathbf{0.283}\pm\mathbf{0.001}$ \\

& 192
& $0.693\pm0.003$ & $\mathbf{0.669}\pm\mathbf{0.005}$
& $0.391\pm0.005$ & $\mathbf{0.359}\pm\mathbf{0.002}$
& $0.337\pm0.004$ & $\mathbf{0.332}\pm\mathbf{0.002}$ \\

& 336
& $0.940\pm0.008$ & $\mathbf{0.831}\pm\mathbf{0.001}$
& $0.421\pm0.002$ & $\mathbf{0.373}\pm\mathbf{0.000}$
& $0.372\pm0.004$ & $\mathbf{0.371}\pm\mathbf{0.003}$ \\

& 720
& $0.998\pm0.007$ & $\mathbf{0.937}\pm\mathbf{0.007}$
& $0.460\pm0.013$ & $\mathbf{0.436}\pm\mathbf{0.001}$
& $0.440\pm0.010$ & $\mathbf{0.435}\pm\mathbf{0.004}$ \\

\midrule

\multirow{4}{*}{\rotatebox{90}{Weather}}
& 96
& $0.315\pm0.002$ & $\mathbf{0.254}\pm\mathbf{0.002}$
& $0.184\pm0.002$ & $\mathbf{0.172}\pm\mathbf{0.002}$
& $0.157\pm0.002$ & $\mathbf{0.150}\pm\mathbf{0.003}$ \\

& 192
& $0.378\pm0.001$ & $\mathbf{0.294}\pm\mathbf{0.005}$
& $0.233\pm0.001$ & $\mathbf{0.222}\pm\mathbf{0.004}$
& $0.206\pm0.006$ & $\mathbf{0.197}\pm\mathbf{0.003}$ \\

& 336
& $0.449\pm0.001$ & $\mathbf{0.389}\pm\mathbf{0.001}$
& $0.282\pm0.003$ & $\mathbf{0.275}\pm\mathbf{0.003}$
& $0.261\pm0.002$ & $\mathbf{0.251}\pm\mathbf{0.004}$ \\

& 720
& $0.712\pm0.004$ & $\mathbf{0.651}\pm\mathbf{0.002}$
& $0.355\pm0.002$ & $\mathbf{0.352}\pm\mathbf{0.001}$
& $0.336\pm0.004$ & $\mathbf{0.327}\pm\mathbf{0.006}$ \\

\midrule

\multirow{4}{*}{\rotatebox{90}{Solar}}
& 96
& $0.200\pm0.000$ & $\mathbf{0.191}\pm\mathbf{0.000}$
& $0.234\pm0.000$ & $\mathbf{0.208}\pm\mathbf{0.000}$
& $0.174\pm0.000$ & $\mathbf{0.166}\pm\mathbf{0.000}$ \\

& 192
& $0.240\pm0.000$ & $\mathbf{0.225}\pm\mathbf{0.001}$
& $0.268\pm0.001$ & $\mathbf{0.242}\pm\mathbf{0.001}$
& $0.200\pm0.001$ & $\mathbf{0.186}\pm\mathbf{0.001}$ \\

& 336
& $0.244\pm0.001$ & $\mathbf{0.234}\pm\mathbf{0.000}$
& $0.290\pm0.001$ & $\mathbf{0.258}\pm\mathbf{0.000}$
& $0.225\pm0.000$ & $\mathbf{0.201}\pm\mathbf{0.001}$ \\

& 720
& $0.248\pm0.001$ & $\mathbf{0.228}\pm\mathbf{0.001}$
& $0.289\pm0.000$ & $\mathbf{0.256}\pm\mathbf{0.001}$
& $0.275\pm0.001$ & $\mathbf{0.218}\pm\mathbf{0.001}$ \\

\bottomrule
\end{tabular}%
}
\vspace{1pt}
\end{table}

\paragraph{BTA Improves TimeBridge for Non-stationary LTSF.}
Table~\ref{tab:long_term_forecasting} compares TimeBridge+BTA (\emph{Ours})
with the original TimeBridge and other representative LTSF methods under the
same input length $T=720$. Integrating BTA consistently improves TimeBridge
across all eight datasets, reducing the overall average MSE and MAE by
$2.39\%$ and $1.89\%$, respectively. Moreover, TimeBridge+BTA achieves the
lowest MSE in 29 of 32 horizon-level settings and the lowest MAE in 23,
substantially exceeding all competing methods. These results demonstrate that
BTA provides a consistent additional benefit to an already competitive
non-stationary forecasting backbone.

\begin{table}[!htbp]
\centering
\caption{\textbf{Per-horizon comparison for non-stationary LTSF.}
\emph{Ours} denotes TimeBridge+BTA, enabling direct comparison with the
original TimeBridge to isolate the contribution of BTA. All methods use input
length $T=720$ and prediction horizons $H\in\{96,192,336,720\}$.
\textit{Avg.} denotes the mean over the four horizons. The best and
second-best results for each dataset, horizon, and metric are shown in \textbf{bold} and
\underline{underlined}, respectively. \textit{1st Count} reports the number
of horizon-level settings in which each method achieves the best result,
excluding \textit{Avg.} rows and counting ties for all tied methods.}
\label{tab:long_term_forecasting}
\renewcommand{\small}{\fontsize{9}{10}\selectfont}
\fontfamily{ptm}\fontsize{10}{11}\selectfont
\setlength{\tabcolsep}{1.25pt}
\renewcommand{\arraystretch}{1.02}
\resizebox{\linewidth}{!}{%
\begin{tabular}{lc|cc|cc|cc|cc|cc|cc|cc|cc|cc|cc|cc}
\toprule

\multicolumn{2}{c}{\multirow{2}{*}{\textbf{Model}}}
& \multicolumn{2}{c}{\multirow{2}{*}{\qquad\textbf{Ours\qquad}}}
& \multicolumn{2}{c}{\textbf{PULSE}}
& \multicolumn{2}{c}{\textbf{DTAF}}
& \multicolumn{2}{c}{\textbf{DUET}}
& \multicolumn{2}{c}{\textbf{TimeBridge}}
& \multicolumn{2}{c}{\textbf{iTransformer}}
& \multicolumn{2}{c}{\textbf{TimeMixer}}
& \multicolumn{2}{c}{\textbf{PatchTST}}
& \multicolumn{2}{c}{\textbf{TimesNet}}
& \multicolumn{2}{c}{\textbf{DLinear}}
& \multicolumn{2}{c}{\textbf{Stationary}} \\

\multicolumn{2}{c}{}
& \multicolumn{2}{c}{}
& \multicolumn{2}{c}{\small \citep{liupulse}}
& \multicolumn{2}{c}{\small \citep{lu2026towards}}
& \multicolumn{2}{c}{\small \citep{qiu2025duet}}
& \multicolumn{2}{c}{\small \citep{liu2025timebridge}}
& \multicolumn{2}{c}{\small \citep{liu2024itransformer}}
& \multicolumn{2}{c}{\small \citep{wang2024timemixer}}
& \multicolumn{2}{c}{\small \citep{nie2023time}}
& \multicolumn{2}{c}{\small \citep{wu2023timesnet}}
& \multicolumn{2}{c}{\small \citep{zeng2023transformers}}
& \multicolumn{2}{c}{\small \citep{liu2022non}} \\

\midrule

\multicolumn{2}{c|}{\textbf{Metric}}
& \textbf{MSE} & \textbf{MAE}
& \textbf{MSE} & \textbf{MAE}
& \textbf{MSE} & \textbf{MAE}
& \textbf{MSE} & \textbf{MAE}
& \textbf{MSE} & \textbf{MAE}
& \textbf{MSE} & \textbf{MAE}
& \textbf{MSE} & \textbf{MAE}
& \textbf{MSE} & \textbf{MAE}
& \textbf{MSE} & \textbf{MAE}
& \textbf{MSE} & \textbf{MAE}
& \textbf{MSE} & \textbf{MAE} \\

\midrule
\multirow{5}{*}{\rotatebox{90}{ETTm1}} & 96 & \textbf{0.275} & \underline{0.330} & 0.303 & 0.347 & \underline{0.277} & \textbf{0.329} & 0.279 & 0.336 & 0.284 & 0.337 & 0.300 & 0.353 & 0.293 & 0.345 & 0.293 & 0.346 & 0.338 & 0.375 & 0.299 & 0.343 & 0.393 & 0.404 \\
& 192 & \textbf{0.315} & \textbf{0.359} & 0.349 & 0.379 & 0.329 & \underline{0.361} & \underline{0.318} & \textbf{0.359} & 0.323 & 0.365 & 0.345 & 0.382 & 0.335 & 0.372 & 0.333 & 0.370 & 0.371 & 0.387 & 0.335 & 0.365 & 0.481 & 0.447 \\
& 336 & \underline{0.351} & 0.385 & 0.375 & 0.398 & \textbf{0.345} & \textbf{0.374} & \textbf{0.345} & \underline{0.378} & 0.370 & 0.394 & 0.374 & 0.398 & 0.368 & 0.386 & 0.369 & 0.390 & 0.410 & 0.411 & 0.369 & 0.386 & 0.555 & 0.489 \\
& 720 & 0.408 & 0.411 & 0.424 & 0.426 & \textbf{0.399} & \textbf{0.407} & \underline{0.400} & \underline{0.408} & 0.415 & 0.418 & 0.429 & 0.430 & 0.426 & 0.417 & 0.416 & 0.420 & 0.478 & 0.450 & 0.425 & 0.421 & 0.677 & 0.548 \\
\cmidrule(lr){3-24}
& \textit{Avg.} & \underline{0.337} & 0.371 & 0.363 & 0.388 & 0.338 & \textbf{0.368} & \textbf{0.336} & \underline{0.370} & 0.348 & 0.379 & 0.362 & 0.391 & 0.356 & 0.380 & 0.353 & 0.381 & 0.399 & 0.406 & 0.357 & 0.379 & 0.527 & 0.472 \\
\midrule
\multirow{5}{*}{\rotatebox{90}{ETTm2}} & 96 & \textbf{0.155} & \textbf{0.242} & 0.163 & 0.248 & 0.160 & 0.245 & 0.164 & 0.253 & \underline{0.157} & \underline{0.243} & 0.175 & 0.266 & 0.165 & 0.256 & 0.166 & 0.256 & 0.187 & 0.267 & 0.167 & 0.260 & 0.266 & 0.324 \\
& 192 & \textbf{0.215} & \textbf{0.283} & 0.227 & 0.291 & \textbf{0.215} & \underline{0.285} & \underline{0.220} & 0.293 & 0.224 & 0.286 & 0.242 & 0.312 & 0.225 & 0.298 & 0.223 & 0.296 & 0.249 & 0.309 & 0.224 & 0.303 & 0.434 & 0.405 \\
& 336 & \textbf{0.266} & \textbf{0.320} & 0.283 & 0.330 & \underline{0.269} & \underline{0.321} & 0.277 & 0.331 & \underline{0.269} & \underline{0.321} & 0.282 & 0.340 & 0.277 & 0.332 & 0.274 & 0.329 & 0.321 & 0.351 & 0.281 & 0.342 & 0.808 & 0.563 \\
& 720 & \textbf{0.346} & \textbf{0.375} & 0.374 & 0.387 & 0.351 & \underline{0.378} & 0.358 & 0.386 & \underline{0.348} & \underline{0.378} & 0.378 & 0.398 & 0.360 & 0.387 & 0.362 & 0.385 & 0.497 & 0.403 & 0.397 & 0.421 & 0.758 & 0.555 \\
\cmidrule(lr){3-24}
& \textit{Avg.} & \textbf{0.246} & \textbf{0.305} & 0.262 & 0.314 & \underline{0.249} & \underline{0.307} & 0.255 & 0.316 & 0.250 & \underline{0.307} & 0.269 & 0.329 & 0.257 & 0.318 & 0.256 & 0.317 & 0.314 & 0.333 & 0.267 & 0.332 & 0.567 & 0.462 \\
\midrule
\multirow{5}{*}{\rotatebox{90}{ETTh1}} & 96 & \textbf{0.347} & \textbf{0.385} & 0.362 & \underline{0.390} & 0.369 & 0.396 & 0.358 & 0.391 & \underline{0.354} & \underline{0.390} & 0.386 & 0.405 & 0.372 & 0.401 & 0.370 & 0.400 & 0.384 & 0.402 & 0.375 & 0.399 & 0.547 & 0.501 \\
& 192 & \textbf{0.386} & \textbf{0.414} & 0.408 & 0.418 & 0.404 & 0.419 & 0.399 & \underline{0.416} & \underline{0.389} & \textbf{0.414} & 0.424 & 0.440 & 0.413 & 0.430 & 0.413 & 0.429 & 0.557 & 0.436 & 0.405 & \underline{0.416} & 0.603 & 0.529 \\
& 336 & \textbf{0.404} & \textbf{0.426} & 0.443 & 0.433 & 0.449 & 0.453 & 0.418 & 0.431 & \underline{0.408} & \underline{0.430} & 0.449 & 0.460 & 0.438 & 0.450 & 0.422 & 0.440 & 0.491 & 0.469 & 0.439 & 0.443 & 0.711 & 0.585 \\
& 720 & \underline{0.437} & \underline{0.465} & \textbf{0.433} & \textbf{0.448} & 0.441 & 0.471 & 0.441 & 0.471 & 0.443 & 0.468 & 0.495 & 0.487 & 0.486 & 0.484 & 0.447 & 0.468 & 0.521 & 0.500 & 0.472 & 0.490 & 0.714 & 0.601 \\
\cmidrule(lr){3-24}
& \textit{Avg.} & \textbf{0.394} & \underline{0.423} & 0.412 & \textbf{0.422} & 0.416 & 0.435 & 0.404 & 0.427 & \underline{0.399} & 0.426 & 0.439 & 0.448 & 0.427 & 0.441 & 0.413 & 0.434 & 0.488 & 0.452 & 0.423 & 0.437 & 0.644 & 0.554 \\
\midrule
\multirow{5}{*}{\rotatebox{90}{ETTh2}} & 96 & \textbf{0.267} & \textbf{0.329} & 0.277 & \underline{0.330} & \underline{0.271} & \textbf{0.329} & 0.273 & 0.339 & \underline{0.271} & 0.331 & 0.297 & 0.348 & 0.281 & 0.351 & 0.274 & 0.337 & 0.340 & 0.374 & 0.289 & 0.353 & 0.392 & 0.420 \\
& 192 & \textbf{0.332} & \textbf{0.371} & 0.356 & 0.381 & \underline{0.333} & \textbf{0.371} & \underline{0.333} & 0.376 & 0.335 & \underline{0.372} & 0.371 & 0.403 & 0.349 & 0.387 & 0.348 & 0.384 & 0.402 & 0.414 & 0.383 & 0.418 & 0.519 & 0.480 \\
& 336 & \textbf{0.360} & \textbf{0.398} & 0.399 & 0.417 & 0.368 & \underline{0.402} & 0.372 & 0.411 & 0.371 & \underline{0.402} & 0.404 & 0.428 & \underline{0.366} & 0.413 & 0.377 & 0.416 & 0.452 & 0.452 & 0.448 & 0.465 & 0.572 & 0.511 \\
& 720 & \textbf{0.392} & \textbf{0.429} & 0.413 & 0.436 & 0.480 & 0.482 & \underline{0.398} & 0.435 & 0.399 & \underline{0.432} & 0.424 & 0.444 & 0.401 & 0.436 & 0.406 & 0.441 & 0.462 & 0.468 & 0.605 & 0.551 & 0.600 & 0.536 \\
\cmidrule(lr){3-24}
& \textit{Avg.} & \textbf{0.338} & \textbf{0.382} & 0.361 & 0.391 & 0.363 & 0.396 & \underline{0.344} & 0.390 & \underline{0.344} & \underline{0.384} & 0.374 & 0.406 & 0.349 & 0.397 & 0.351 & 0.395 & 0.414 & 0.427 & 0.431 & 0.447 & 0.521 & 0.487 \\
\midrule
\multirow{5}{*}{\rotatebox{90}{Weather}} & 96 & \textbf{0.142} & \textbf{0.182} & 0.150 & 0.196 & 0.145 & 0.185 & 0.149 & 0.195 & \underline{0.144} & \underline{0.184} & 0.159 & 0.208 & 0.147 & 0.198 & 0.149 & 0.198 & 0.172 & 0.220 & 0.152 & 0.237 & 0.183 & 0.230 \\
& 192 & \textbf{0.185} & \textbf{0.225} & 0.200 & 0.245 & 0.190 & \underline{0.228} & 0.194 & 0.239 & \underline{0.187} & \textbf{0.225} & 0.200 & 0.248 & 0.192 & 0.243 & 0.194 & 0.241 & 0.219 & 0.261 & 0.220 & 0.282 & 0.253 & 0.291 \\
& 336 & \textbf{0.237} & \textbf{0.267} & 0.260 & 0.291 & 0.241 & 0.269 & 0.254 & 0.285 & \underline{0.239} & \underline{0.268} & 0.253 & 0.289 & 0.247 & 0.284 & 0.245 & 0.282 & 0.280 & 0.306 & 0.265 & 0.319 & 0.326 & 0.341 \\
& 720 & \textbf{0.309} & \textbf{0.319} & 0.347 & 0.353 & 0.313 & \underline{0.321} & 0.324 & 0.336 & \underline{0.311} & \underline{0.321} & 0.321 & 0.338 & 0.318 & 0.330 & 0.314 & 0.334 & 0.365 & 0.359 & 0.323 & 0.362 & 0.417 & 0.397 \\
\cmidrule(lr){3-24}
& \textit{Avg.} & \textbf{0.218} & \textbf{0.248} & 0.239 & 0.271 & 0.222 & 0.251 & 0.230 & 0.264 & \underline{0.220} & \underline{0.250} & 0.233 & 0.271 & 0.226 & 0.264 & 0.226 & 0.264 & 0.259 & 0.287 & 0.240 & 0.300 & 0.295 & 0.315 \\
\midrule
\multirow{5}{*}{\rotatebox{90}{ELC}} & 96 & \textbf{0.124} & \underline{0.217} & 0.134 & 0.228 & \underline{0.129} & 0.221 & 0.131 & 0.224 & \textbf{0.124} & \textbf{0.216} & 0.138 & 0.237 & 0.153 & 0.256 & \underline{0.129} & 0.222 & 0.168 & 0.272 & 0.140 & 0.237 & 0.172 & 0.276 \\
& 192 & \textbf{0.144} & \underline{0.237} & 0.150 & 0.243 & 0.146 & \textbf{0.235} & 0.156 & 0.247 & \underline{0.145} & 0.241 & 0.157 & 0.256 & 0.168 & 0.269 & 0.147 & 0.240 & 0.184 & 0.289 & 0.152 & 0.249 & 0.185 & 0.287 \\
& 336 & \textbf{0.158} & \textbf{0.252} & 0.166 & 0.261 & 0.165 & \underline{0.254} & 0.164 & 0.257 & 0.166 & 0.263 & 0.167 & 0.264 & 0.189 & 0.291 & \underline{0.163} & 0.259 & 0.198 & 0.300 & 0.169 & 0.267 & 0.196 & 0.300 \\
& 720 & \textbf{0.180} & \textbf{0.272} & 0.188 & 0.289 & 0.200 & 0.283 & 0.195 & 0.285 & \underline{0.182} & \underline{0.277} & 0.194 & 0.286 & 0.228 & 0.320 & 0.197 & 0.290 & 0.220 & 0.320 & 0.203 & 0.301 & 0.221 & 0.319 \\
\cmidrule(lr){3-24}
& \textit{Avg.} & \textbf{0.152} & \textbf{0.245} & 0.160 & 0.255 & 0.160 & \underline{0.248} & 0.162 & 0.253 & \underline{0.154} & 0.249 & 0.164 & 0.261 & 0.185 & 0.284 & 0.159 & 0.253 & 0.193 & 0.295 & 0.166 & 0.264 & 0.194 & 0.296 \\
\midrule
\multirow{5}{*}{\rotatebox{90}{Traffic}} & 96 & \textbf{0.318} & \textbf{0.225} & 0.414 & 0.268 & 0.370 & \underline{0.232} & 0.351 & 0.237 & \underline{0.340} & 0.240 & 0.363 & 0.265 & 0.369 & 0.257 & 0.360 & 0.249 & 0.593 & 0.321 & 0.410 & 0.282 & 0.612 & 0.338 \\
& 192 & \textbf{0.336} & \textbf{0.237} & 0.444 & 0.278 & 0.392 & \underline{0.243} & 0.368 & 0.246 & \underline{0.343} & 0.250 & 0.385 & 0.273 & 0.400 & 0.272 & 0.379 & 0.256 & 0.617 & 0.336 & 0.423 & 0.287 & 0.613 & 0.340 \\
& 336 & \textbf{0.348} & \textbf{0.245} & 0.458 & 0.287 & 0.408 & \underline{0.252} & 0.393 & 0.260 & \underline{0.363} & 0.257 & 0.396 & 0.277 & 0.407 & 0.272 & 0.392 & 0.264 & 0.629 & 0.336 & 0.436 & 0.296 & 0.618 & 0.328 \\
& 720 & \textbf{0.382} & \textbf{0.260} & 0.525 & 0.312 & 0.441 & 0.269 & 0.430 & 0.272 & \underline{0.395} & \underline{0.263} & 0.445 & 0.312 & 0.461 & 0.316 & 0.432 & 0.286 & 0.640 & 0.350 & 0.466 & 0.315 & 0.653 & 0.355 \\
\cmidrule(lr){3-24}
& \textit{Avg.} & \textbf{0.346} & \textbf{0.242} & 0.460 & 0.286 & 0.403 & \underline{0.249} & 0.386 & 0.254 & \underline{0.360} & 0.253 & 0.397 & 0.282 & 0.409 & 0.279 & 0.391 & 0.264 & 0.620 & 0.336 & 0.434 & 0.295 & 0.624 & 0.340 \\
\midrule
\multirow{5}{*}{\rotatebox{90}{Solar}} & 96 & \textbf{0.150} & \underline{0.198} & 0.176 & 0.229 & 0.182 & 0.218 & 0.175 & \textbf{0.197} & \underline{0.161} & 0.224 & 0.188 & 0.242 & 0.179 & 0.232 & 0.178 & 0.229 & 0.219 & 0.314 & 0.216 & 0.287 & 0.381 & 0.398 \\
& 192 & \textbf{0.171} & \underline{0.211} & 0.192 & 0.245 & 0.182 & 0.235 & 0.193 & \textbf{0.210} & \underline{0.177} & 0.237 & 0.193 & 0.258 & 0.201 & 0.259 & 0.189 & 0.246 & 0.231 & 0.322 & 0.244 & 0.305 & 0.395 & 0.386 \\
& 336 & \textbf{0.184} & \textbf{0.226} & 0.200 & 0.247 & \underline{0.189} & \underline{0.229} & 0.197 & 0.230 & 0.196 & 0.232 & 0.195 & 0.259 & 0.190 & 0.256 & 0.198 & 0.249 & 0.246 & 0.337 & 0.263 & 0.319 & 0.410 & 0.394 \\
& 720 & \textbf{0.197} & \underline{0.236} & 0.208 & 0.249 & 0.228 & 0.256 & 0.210 & \textbf{0.215} & 0.210 & 0.239 & 0.223 & 0.281 & \underline{0.203} & 0.261 & 0.209 & 0.256 & 0.280 & 0.363 & 0.264 & 0.324 & 0.377 & 0.376 \\
\cmidrule(lr){3-24}
& \textit{Avg.} & \textbf{0.176} & \underline{0.218} & 0.194 & 0.243 & 0.195 & 0.235 & 0.194 & \textbf{0.213} & \underline{0.186} & 0.233 & 0.200 & 0.260 & 0.193 & 0.252 & 0.194 & 0.245 & 0.244 & 0.334 & 0.247 & 0.309 & 0.391 & 0.389 \\
\midrule
\multicolumn{2}{c|}{1st Count} & 29 & 23 & 1 & 1 & 3 & 6 & 1 & 4 & 1 & 3 & 0 & 0 & 0 & 0 & 0 & 0 & 0 & 0 & 0 & 0 & 0 & 0 \\
\bottomrule
\end{tabular}%
}
\end{table}

\section{More case studies}
\label{app:solar_qualitative_analysis}

Figures~\ref{fig:solar_input_conditioned_kernels},
\ref{fig:weather_input_conditioned_kernels}, and
\ref{Img_Timer_XL_case_study} further illustrate how BTA adapts positional
biases to heterogeneous temporal dynamics. On Solar, both windows exhibit
strong periodic structure, but BTA selects different temporal scales: irregular
peak amplitudes lead to a combination of broader periodic and long-range local
kernels, whereas more regularly recurring peaks favor narrower periodic
kernels. On Weather, BTA adapts primarily across kernel families, emphasizing
a long-range local kernel for a window with a recent level shift and a
$3$-day periodic kernel for a window with recurring temporal structure.
A similar behavior is observed with Timer-XL: an abrupt recent change favors
short-range local kernels, while a more stable window assigns greater weight
to broader local and periodic dependencies. These input-specific kernel
mixtures induce distinct temporal-evidence profiles and hence adaptive
positional biases, demonstrating how BTA enables Transformer forecasters to
adjust their temporal dependencies to non-stationary input dynamics.

\begin{figure}[!htbp]
  \centering
  \includegraphics[width=0.99\textwidth]{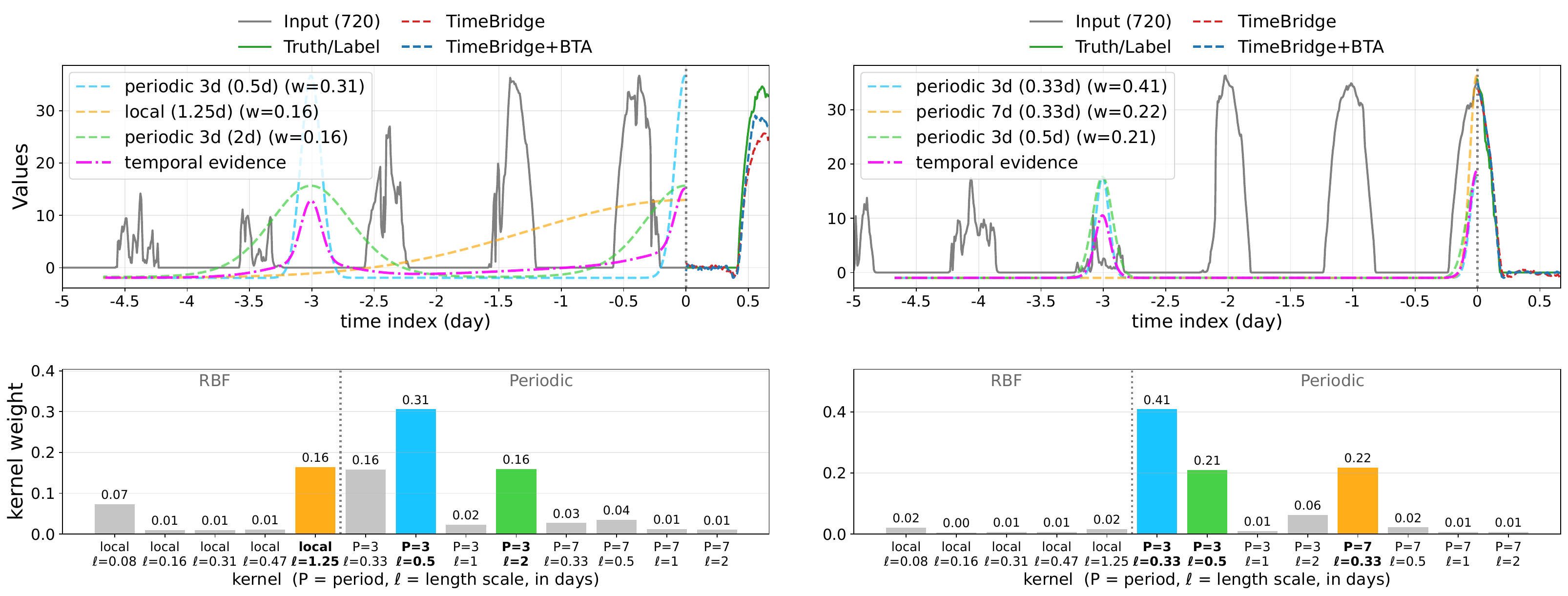}
  \caption{\textbf{Input-adaptive positional bias on Solar.}
Two Solar windows with a 720-step input and 96-step forecasting horizon;
panels follow Figure~\ref{fig:ettm1_input_conditioned_kernels}.
Both windows are dominated by periodic kernels, but BTA adapts their temporal
scales to the input dynamics. The left window, with more irregular peak
amplitudes, combines a $3$-day periodic kernel with $\ell=0.5$ days
($w=0.31$), a long-range local kernel with $\ell=1.25$ days ($w=0.16$),
and a broader $3$-day periodic kernel with $\ell=2$ days ($w=0.16$).
In contrast, the right window concentrates its weight on narrower periodic
kernels, led by $3$-day kernels with $\ell=0.33$ and $0.5$ days
($w=0.41$ and $0.21$) and a $7$-day kernel with $\ell=0.33$ days
($w=0.22$). The resulting mixtures induce distinct temporal-evidence
profiles and positional biases. BTA reduces MSE by $58\%$
and $34\%$, respectively ($22.48\rightarrow9.41$ and
$1.46\rightarrow0.96$).}
  \label{fig:solar_input_conditioned_kernels}
\end{figure}

\begin{figure}[!htbp]
  \centering
  \includegraphics[width=0.99\textwidth]{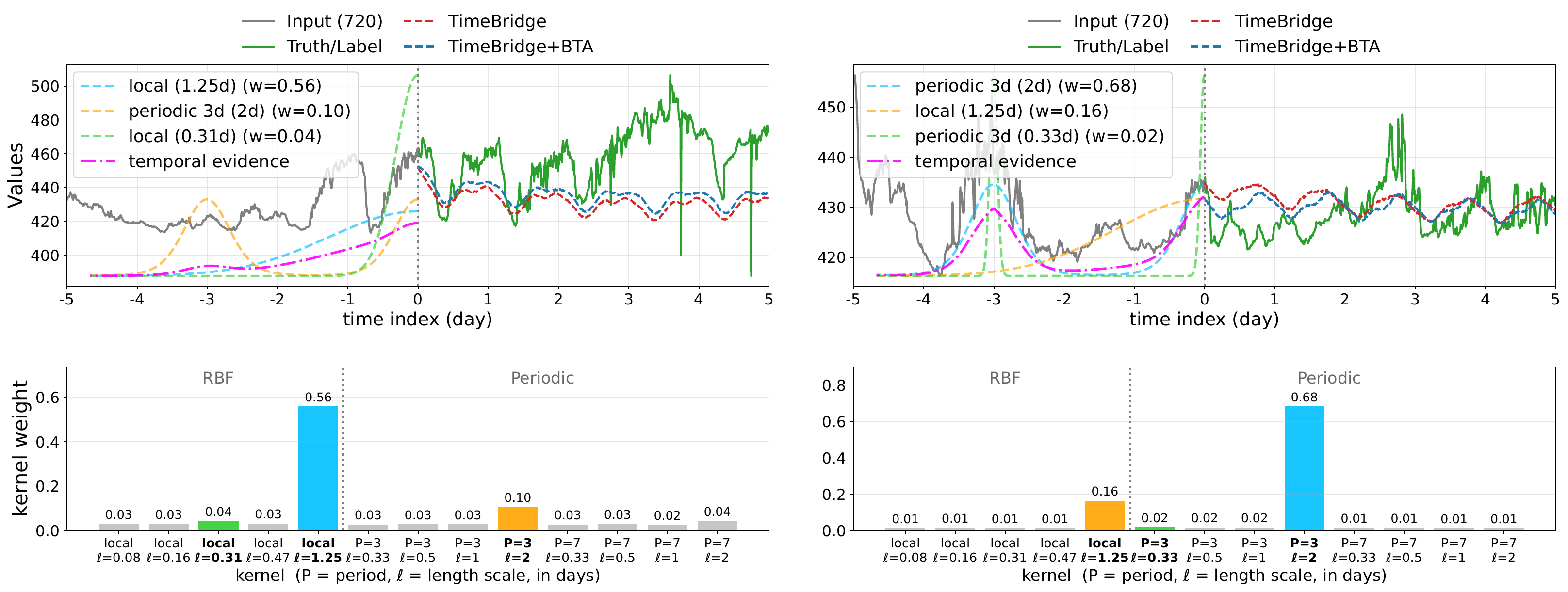}
  \caption{\textbf{Input-adaptive positional bias on Weather.}
Two Weather windows with a 720-step input and 720-step forecasting horizon;
panels follow Figure~\ref{fig:ettm1_input_conditioned_kernels}.
The left window exhibits a recent level shift without a clear recurring
pattern, for which BTA assigns $0.69$ of the total weight to local RBF
kernels, dominated by the long-range $\ell=1.25$-day kernel ($w=0.56$).
In contrast, the right window exhibits stronger recurring structure and is
dominated by a $3$-day periodic kernel with $\ell=2$ days ($w=0.68$),
while the long-range local kernel receives $w=0.16$.
The contrasting kernel mixtures produce different temporal-evidence profiles
and positional biases, illustrating how BTA adapts between local and periodic
dependencies across input windows.}
  \label{fig:weather_input_conditioned_kernels}
\end{figure}

\begin{figure}[!htbp]
  \centering
  \includegraphics[width=0.99\textwidth]{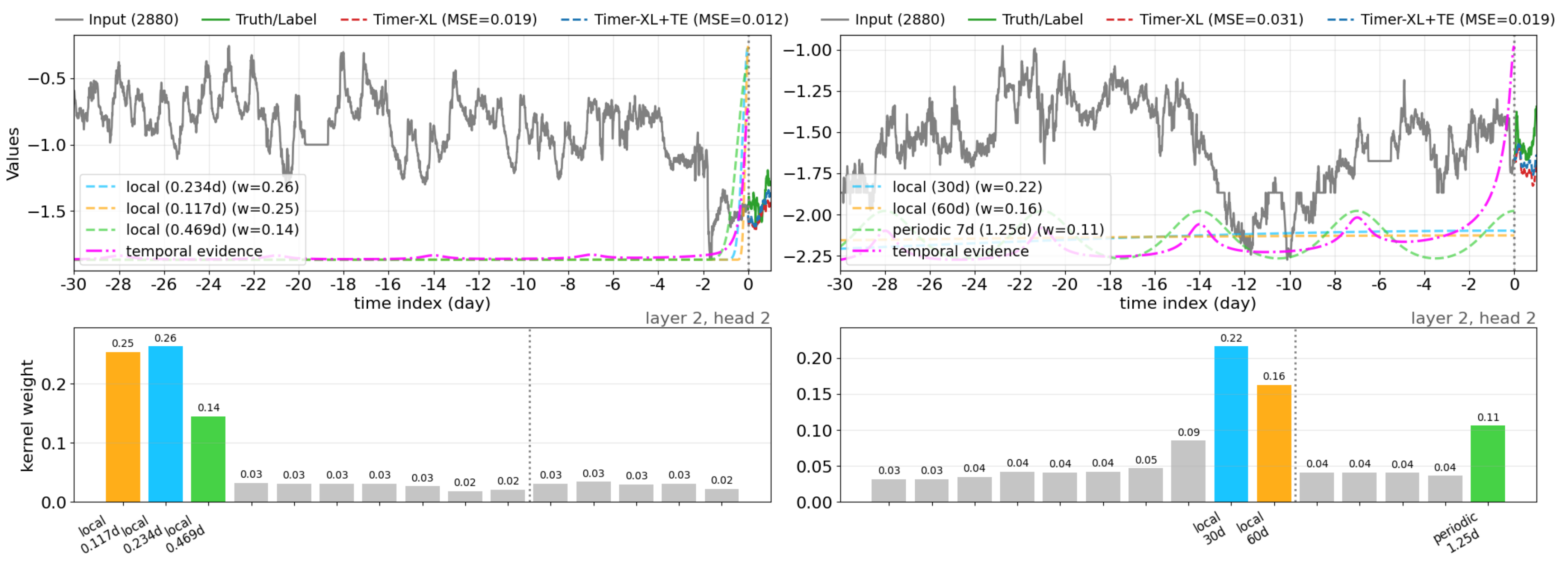}
  \caption{\textbf{Input-adaptive positional bias in Timer-XL on ETTm1.}
Two 30-day input windows are used to forecast the following day.
The top panels compare Timer-XL and Timer-XL+BTA predictions and visualize
the dominant temporal kernels selected by BTA; the bottom panels show the
corresponding router weights for layer~2, head~2.
For the left window, which exhibits an abrupt recent change, BTA emphasizes
short-range local kernels, with its three largest weights assigned to local
scales of $0.234$, $0.117$, and $0.469$ days.
For the more stable right window, BTA shifts toward substantially broader
local dependencies together with a periodic component.
These input-dependent mixtures induce different positional biases and reduce
the forecasting MSE from $0.019$ to $0.012$ and from $0.031$ to $0.019$,
respectively, demonstrating that BTA can adapt the temporal dependencies of a
pretrained Transformer to non-stationary input dynamics.}
  \label{Img_Timer_XL_case_study}
\end{figure}

\section{More representation analysis on Solar}
\label{app:solar_kernel_weights}

Figure~\ref{fig:solar_kernel_weights_by_prediction_length} further confirms that BTA adapts its
temporal inductive bias to the forecasting horizon on Solar. The learned
kernel weights vary systematically across prediction lengths: shorter horizons
place greater emphasis on relatively local dependencies, whereas longer
horizons increasingly favor broader local structure, with the
$\ell=1.25$-day RBF kernel becoming dominant at $H=720$. The allocation
among periodic kernels also changes with the horizon, with longer forecasts
placing greater weight on broader $3$-day periodic patterns. These results
further demonstrate that BTA does not rely on a fixed positional structure,
but instead adjusts its temporal dependencies to the forecasting horizon,
providing an interpretable view of the adaptive positional bias for
non-stationary LTSF.

\begin{figure}[!htbp]
  \centering
  \includegraphics[width=0.90\textwidth]{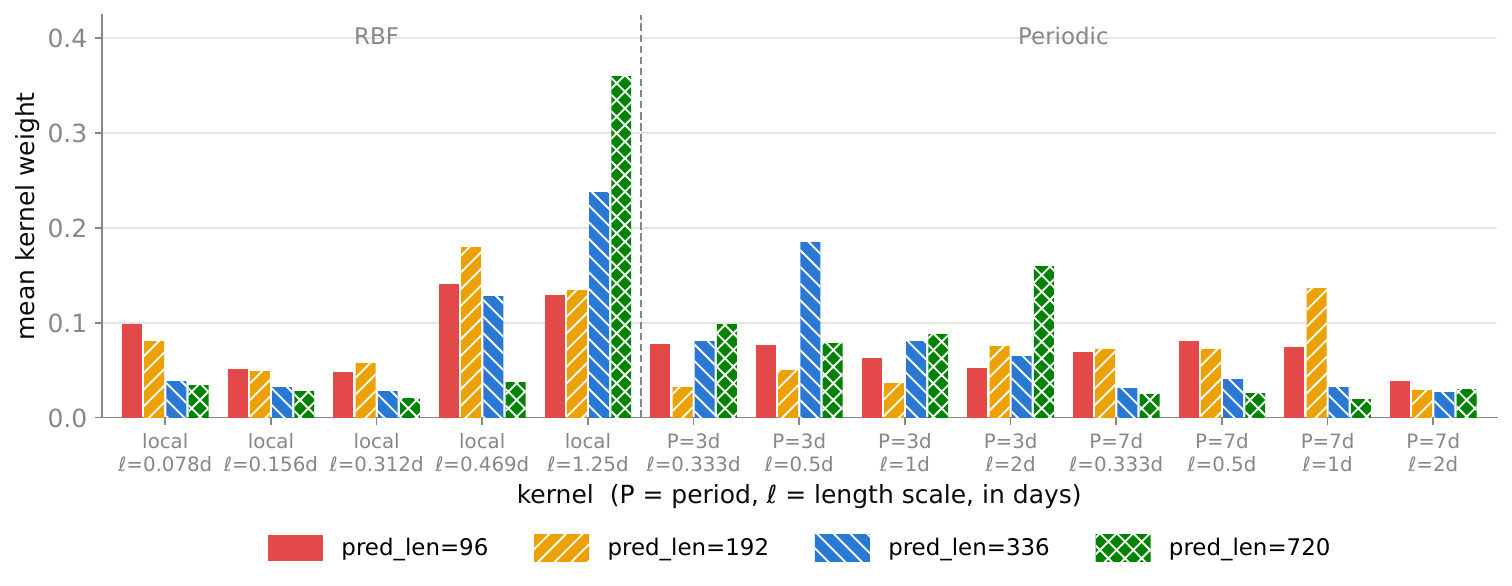}
  \caption{\textbf{BTA reveals horizon-dependent temporal dependencies on Solar.}
Mean kernel weights across Solar test samples and attention heads for
$H\in\{96,192,336,720\}$. The dashed line separates local RBF and periodic
kernels. As the forecasting horizon increases, BTA shifts weight toward
broader local dependencies and redistributes weight among periodic kernels,
with the broadest local RBF kernel becoming dominant at $H=720$.}
  \label{fig:solar_kernel_weights_by_prediction_length}
\end{figure}

 \section{Additional Hyperparameter Analysis} 
 \label{app:additional_hyperparameter_analysis} 
 Figure~\ref{fig:hyperparameter_additional} complements the ETTm1
hyperparameter analysis in Figure~\ref{fig:hyperparameter_analysis} with
results on ETTh1 and Solar.

\paragraph{Top-$k$.}
ETTh1 is nearly insensitive to the tested Top-$k$ configurations, with a
maximum relative MSE difference below $0.04\%$. Solar exhibits greater
variation, with $k=3$ attaining the lowest displayed MSE and the remaining
settings increasing MSE by up to $8.20\%$. However, the ETTh1 $k=1$ run uses
a different batch size from the $k\in\{3,5,7\}$ runs, while the Solar $k=3$
run uses a different number of patch-downsampling layers. These results should
therefore be interpreted as descriptive sensitivity analyses rather than
controlled ablations of $k$ alone.

\paragraph{Router hidden dimension.}
Sensitivity to router capacity is also dataset-dependent. ETTh1 attains its
lowest MSE at $d_r=192$, with at most a $1.23\%$ increase across the tested
dimensions, whereas Solar attains its minimum at $d_r=96$ and increases by
up to $9.30\%$ at $d_r=224$. Together with the ETTm1 results, these findings
suggest that moderate router capacity is generally sufficient, while the
optimal hidden dimension depends on the dataset.

\begin{figure}[!htbp]
\centering
\includegraphics[width=0.8\textwidth]{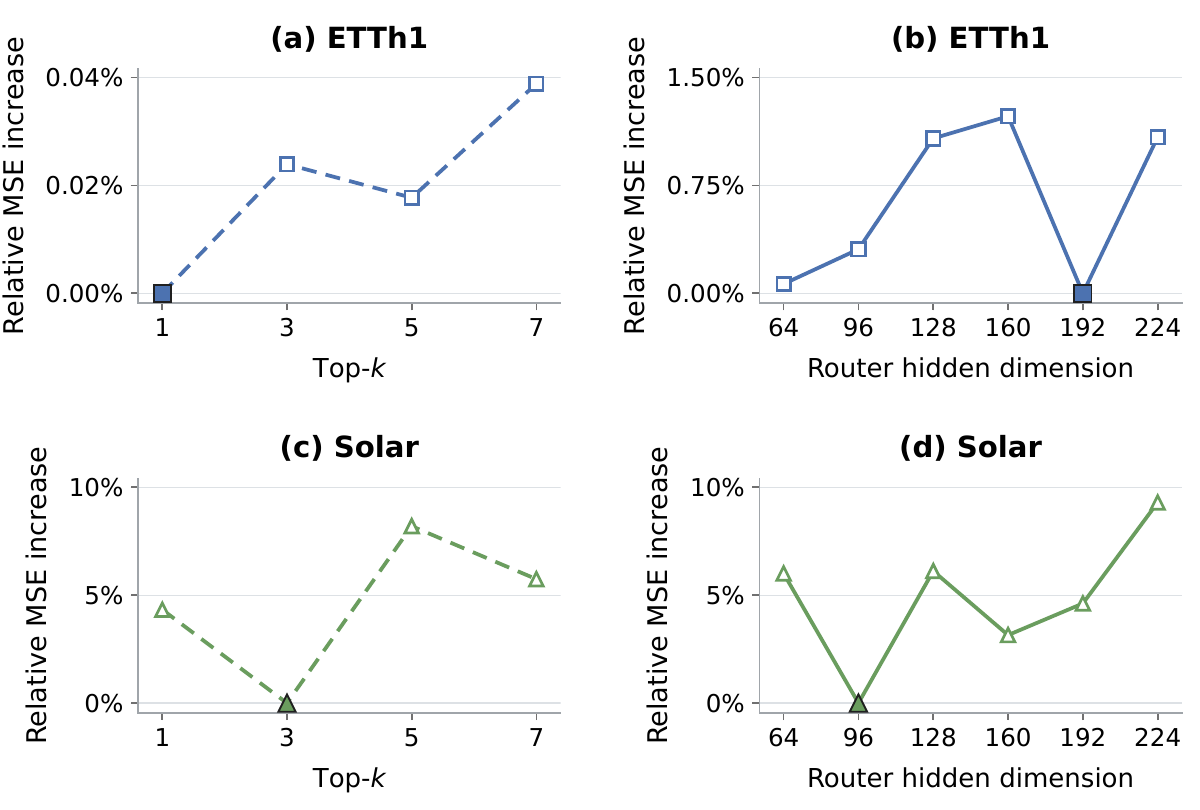}
\caption{\textbf{Additional hyperparameter sensitivity analysis on ETTh1 and
Solar.}
The top and bottom rows report results on ETTh1 and Solar, respectively,
with Top-$k$ varied in the left column and router hidden dimension $d_r$
varied in the right column. Each panel reports the relative MSE increase with
respect to the lowest MSE within that panel and uses an independent vertical
scale; filled markers indicate the minimum. Dashed curves denote the Top-$k$
sweeps, for which auxiliary configuration differences prevent a strictly
controlled comparison of $k$ alone.}
\label{fig:hyperparameter_additional}
\end{figure}

 \section{Efficiency Analysis}
\label{app:additional_efficiency_analysis}

\begin{table}[!htbp]
\centering
\caption{\textbf{Computational efficiency on ETTh1, ETTm1, and Weather.}
\emph{Ours} denotes TimeBridge+BTA. FLOPs are measured per input window, and
Size denotes the number of trainable parameters. Results are reported for
forecasting horizons $H\in\{96,192,336,720\}$.}
\label{tab:efficiency_appendix}
\fontfamily{ptm}\fontsize{10}{11}\selectfont
\renewcommand{\arraystretch}{1}
\setlength{\tabcolsep}{2.2pt}
\resizebox{0.7\columnwidth}{!}{
\begin{tabular}{cc cc cc cc cc}
\toprule
\multicolumn{2}{c}{\multirow{2}{*}{Model}}
& \multicolumn{2}{c}{\multirow{2}{*}{Ours}}
& \multicolumn{2}{c}{TimeBridge}
& \multicolumn{2}{c}{DUET}
& \multicolumn{2}{c}{DTAF} \\
\multicolumn{2}{c}{} & \multicolumn{2}{c}{}
& \multicolumn{2}{c}{\cite{liu2025timebridge}}
& \multicolumn{2}{c}{\cite{qiu2025duet}}
& \multicolumn{2}{c}{\cite{lu2026towards}} \\
\midrule
\multicolumn{2}{c}{Metric}
& FLOPs & Size & FLOPs & Size & FLOPs & Size & FLOPs & Size \\
\midrule
\multirow{4}{*}{ETTh1}
& 96  & 274.9M & 0.786M & 274.6M & 0.772M & 35.7M & 3.605M & 19.0M & 0.582M \\
& 192 & 280.0M & 1.155M & 279.7M & 1.141M & 46.7M & 5.132M & 26.8M & 1.121M \\
& 336 & 287.8M & 1.708M & 287.5M & 1.693M & 65.4M & 5.730M & 38.4M & 1.964M \\
& 720 & 308.4M & 3.182M & 308.1M & 3.168M & 72.6M & 6.981M & 69.4M & 4.176M \\
\midrule
\multirow{4}{*}{ETTm1}
& 96  & 47.4M & 0.252M & 46.9M & 0.230M & 46.1M & 5.083M & 19.0M & 0.587M \\
& 192 & 48.7M & 0.344M & 48.2M & 0.322M & 19.2M & 2.423M & 26.8M & 1.165M \\
& 336 & 50.6M & 0.482M & 50.1M & 0.460M & 47.8M & 5.206M & 38.6M & 1.980M \\
& 720 & 55.8M & 0.851M & 55.3M & 0.829M & 10.5M & 1.432M & 69.9M & 4.253M \\

\midrule

\multirow{4}{*}{Weather}
& 96  & 288.9M & 0.575M & 288.0M & 0.556M & 73.1M  & 2.794M & 57.7M  & 0.581M \\
& 192 & 296.6M & 0.759M & 295.7M & 0.740M & 206.5M & 6.710M & 80.9M  & 1.145M \\
& 336 & 308.2M & 1.036M & 307.3M & 1.017M & 209.6M & 6.784M & 115.8M & 1.974M \\
& 720 & 339.2M & 1.773M & 338.3M & 1.754M & 217.8M & 6.981M & 208.7M & 4.187M \\

\bottomrule
\end{tabular}
}
\par\smallskip\noindent M denotes millions.
\end{table}

\paragraph{Controlled comparison with the backbone.}
Because BTA is integrated into TimeBridge while retaining the remaining
backbone components and training protocol, the comparison between
TimeBridge+BTA and TimeBridge directly quantifies the additional computational
cost of BTA. Table~\ref{tab:efficiency_appendix} shows that this overhead is
small relative to the forecasting gains. On ETTh1, BTA adds approximately
$0.3$\,M FLOPs per input window ($0.10\%$ on average) and approximately $14$\,K
trainable parameters ($1.1\%$), while reducing the average MSE from $0.399$
to $0.394$ ($1.3\%$). On ETTm1, it adds approximately $0.5$\,M FLOPs
($1.00\%$) and $22$\,K parameters ($6.0\%$), while reducing the average
MSE from $0.348$ to $0.337$ ($3.2\%$). On Weather, the additional
computational cost is likewise small, while BTA further improves the
TimeBridge backbone. These results show that BTA achieves forecasting gains
with modest additional computation and capacity. These overhead estimates use the displayed counts in Table~\ref{tab:efficiency_appendix}; percentages are averaged over the four horizons.

\paragraph{Horizon-independent overhead.}
Because BTA operates on the input-side attention representation and introduces
no additional forecasting-head component, its absolute FLOP and parameter
overheads remain approximately constant across prediction horizons. 
For example, on ETTh1 at
$H=720$, BTA increases FLOPs by only $0.10\%$ and trainable parameters by
$0.44\%$.

\paragraph{Comparison with larger reference models.}
The comparison with other methods shows that larger parameter counts do not necessarily yield lower forecasting error. On ETTh1, DUET
uses $3.6$--$7.0$\,M trainable parameters, approximately
$2.2$--$4.6\times$ those of TimeBridge+BTA across the four horizons, yet
has a higher average MSE ($0.404$ versus $0.394$). This comparison shows that BTA remains competitive with larger reference models. It does not by itself isolate the effect of model capacity; a capacity-matched ablation would be needed to establish that causal attribution.

\section{More Ablation Study}
\label{app:additional_ablation_study}
Table~\ref{table_ablation_study_informer_weather} shows that each component
of BTA contributes to forecasting performance on Weather. Adding local and
periodic kernels improves vanilla self-attention, while replacing static
kernel weights with input-adaptive MoE routing yields further gains.
Incorporating the Fourier spectrum of the TE profile achieves the best overall
performance, reducing average MSE and MAE by $14.05\%$ and $7.73\%$,
respectively. These results demonstrate the complementary benefits of a richer
temporal kernel bank, input-adaptive routing, and frequency-domain information.

\begin{table}[!h]
\centering
\setlength{\tabcolsep}{3pt}
\renewcommand{\arraystretch}{0.95}
\caption{\textbf{Ablation study of BTA on Weather with Informer as the backbone.}
Starting from vanilla self-attention, we progressively introduce local kernels,
periodic kernels, input-adaptive MoE routing based on the temporal-evidence
(TE) profile, and its Fourier spectrum. Gain (\%) denotes the relative
reduction in MSE and MAE with respect to vanilla self-attention.}
\label{table_ablation_study_informer_weather}
\fontfamily{ptm}\fontsize{7}{8}\selectfont
\resizebox{0.8\linewidth}{!}{%
\begin{tabular}{c|c|cc|cc|cc|cc|cc}
\toprule

\multicolumn{2}{c|}{Attention}
& \multicolumn{2}{c|}{Self-attention}
& \multicolumn{8}{c}{Bayesian Time-aware Attention} \\

\midrule

\multicolumn{2}{c|}{Optimization}
& \multicolumn{2}{c|}{Static}
& \multicolumn{4}{c|}{Static}
& \multicolumn{2}{c|}{MoE (TE only)}
& \multicolumn{2}{c}{MoE (TE+Fourier)} \\

\midrule

\multicolumn{2}{c|}{Kernels}
& \multicolumn{2}{c|}{None}
& \multicolumn{2}{c|}{Local}
& \multicolumn{2}{c|}{Local+Periodic}
& \multicolumn{2}{c|}{Local+Periodic}
& \multicolumn{2}{c}{Local+Periodic} \\

\midrule

\multicolumn{2}{c|}{Metric}
& MSE & MAE
& MSE & MAE
& MSE & MAE
& MSE & MAE
& MSE & MAE \\

\midrule

\multirow{6}{*}{\rotatebox{90}{Weather}}
& 96
& 0.313 & 0.375
& 0.274 & 0.353
& 0.270 & 0.348
& 0.266 & 0.339
& \textbf{0.253} & \textbf{0.335} \\

& 192
& 0.377 & 0.408
& 0.333 & 0.394
& 0.320 & 0.384
& 0.313 & 0.377
& \textbf{0.299} & \textbf{0.369} \\

& 336
& 0.448 & 0.449
& 0.467 & 0.458
& 0.460 & 0.457
& 0.412 & 0.428
& \textbf{0.389} & \textbf{0.413} \\

& 720
& 0.712 & 0.604
& 0.696 & 0.597
& 0.689 & 0.590
& 0.662 & 0.583
& \textbf{0.649} & \textbf{0.577} \\

\cmidrule(lr){2-12}

& Avg
& 0.463 & 0.459
& 0.443 & 0.451
& 0.435 & 0.445
& 0.413 & 0.432
& \textbf{0.398} & \textbf{0.424} \\

& Gain(\%)
& -- & --
& 4.32 & 1.85
& 6.00 & 3.10
& 10.65 & 5.94
& $\boldsymbol{14.05}$ & $\boldsymbol{7.73}$ \\

\bottomrule
\end{tabular}
}
\par\smallskip\noindent\begin{minipage}{\linewidth}
\fontfamily{ptm}\fontsize{10}{12}\selectfont
Both MoE variants use local and periodic kernels. TE denotes the temporal-evidence profile; Fourier denotes its Fourier spectrum. Gain is relative to vanilla self-attention.
\end{minipage}
\end{table}

 \end{document}